\newcommand{\figref}[1]{Fig.~\ref{#1}}
\newcommand{\X}{\mathcal{X}}
\newcommand{\E}{\mathbb{E}}
\newcommand{\margin}{b}
\newcommand{\cross}{\partial_{AB}}
\newcommand{\block}[1]{}
\newcommand{\epsilontt}{\lambda^t}
\newcommand{\qiangnew}[1]{#1}
\newcommand{\qiangold}[1]{#1}%\textcolor{red}{#1}}
\newcommand{\coloneq}{\mathrel{\mathop:}=}
\newcommand{\margpoly}{\mathbb{M}}
\newcommand{\localpoly}{\mathbb{L}} %\newcommand{\localpoly}{{\mathbb{L}(G)}}
\newcommand{\N}{\mathbb{N}}
\newcommand{\neib}[1]{\partial_{#1}}
\newcommand{\M}{\mathbb{M}}
\newcommand{\R}{\mathbb{R}}
\newcommand{\Q}{\mathcal{Q}}
\newcommand{\bx}{\boldsymbol{x}}
\newcommand{\deltaIndic}{\boldsymbol{1}}
\newcommand{\vtheta}{\boldsymbol{\theta}}%{\boldsymbol{\theta}}
\newcommand{\vtau}{{\boldsymbol{\tau}}}
\newcommand{\vx}{\boldsymbol{x}}
\newcommand{\vpsi}{\boldsymbol{\psi}}%{\boldsymbol{\theta}}
\newcommand{\vrho}{{\boldsymbol{\rho}}}
\newcommand{\Hitau}{H_{{i}}(\vtau)}
\newcommand{\Iijtau}{I_{{ij}}(\vtau)}
\newcommand{\HAB}{H_{A|B}(\vtau)}
\newcommand{\HABtau}{H_{A|B}(\vtau)}
\newcommand{\HAtau}{H_{A}(\vtau)}
\newcommand{\HB}{H_{B}(\vtau)}
\newcommand{\HBtau}{H_{B}(\vtau)}
\newcommand{\Hxtau}{H(\vtau)}%{H(\vx; \vtau)}
\newcommand{\Hx}{H(\vtau)}%{H(\vx; \vtau)}
\newcommand{\hatHBtau}{\hat{H}_B(\vtau)} %{\hat{H}(\vx_B; \vtau)}
\newcommand{\hatHABtau}{\hat{H}_{A|B}(\vtau)} %{\hat{H}(\vx_B; \vtau)}
\newcommand{\hatHBeptau}{\hat{H}_B(\eptau)}%{\hat{H}(\vx_B; \eptau)}
\newcommand{\HABq}{H_{A|B}( \boldsymbol{\qtau})}
\newcommand{\HABtauT}{H_{A|B}(\vtau ~;~ T)}
\newcommand{ \HAtauA}{H_{A}(\vtau_A)}
\newcommand{\HAkBktau}{H_{A_k|B_k}(\vtau)}
\newcommand{\qtau}{\tau}
\newcommand{\margpolyNull}{\margpoly^o}
\DeclareMathOperator*{\argmax}{arg\,max}
\DeclareMathOperator*{\argmin}{arg\,min}
\newtheorem{thm}{Theorem}[]
\newtheorem{cor}[thm]{Corollary}
\newtheorem{lem}[thm]{Lemma} 
\newtheorem{exa}[thm]{Example} 
\newtheorem{pro}[thm]{Proposition}    % Setting theorem environment 
\newtheorem{mydef}{Definition}[section]
\begin{document}

\title{Variational Algorithms for Marginal MAP}

\author{\name Qiang Liu \email qliu1@uci.edu \\
       \addr Donald Bren School of Information and Computer Sciences\\
	University of California, Irvine\\
	Irvine, CA, 92697-3425, USA\\ 
       \AND
       \name Alexander Ihler  \email ihler@ics.uci.edu \\
       \addr Donald Bren School of Information and Computer Sciences\\
	University of California, Irvine\\
	Irvine, CA, 92697-3425, USA}

\editor{XXXXXXX}

\maketitle

\begin{abstract}
The marginal maximum {\it a posteriori} probability (MAP) estimation problem, which calculates the mode of the marginal posterior distribution of a subset of variables with the remaining variables marginalized, is an important inference problem in many models, such as those with hidden variables or uncertain parameters. 
Unfortunately, marginal MAP can be NP-hard even on trees, and has attracted less attention in the literature compared to the joint MAP (maximization) and marginalization problems. 
We derive a general dual representation for marginal MAP that naturally integrates the marginalization and maximization operations into a joint variational optimization  problem, making it possible to easily extend most or all variational-based algorithms to marginal MAP. 
In particular, we derive a set of  ``mixed-product" message passing algorithms for marginal MAP, whose form is a hybrid of max-product, sum-product and a novel ``argmax-product" message updates. 
We also derive a class of convergent algorithms based on proximal point methods, including one that transforms the marginal MAP problem into a sequence of standard marginalization problems. 
Theoretically, we provide guarantees under which our algorithms give globally or locally optimal solutions, and provide novel upper bounds on the optimal objectives. 
Empirically, we demonstrate that our algorithms significantly outperform the existing approaches, including a state-of-the-art algorithm based on local search methods. 
\end{abstract}

\begin{keywords}
Graphical Models, Message Passing, Belief Propagation, Variational Methods, Maximum \emph{a Posteriori}, Marginal-MAP, Hidden Variable Models.
\end{keywords}

\section{Introduction}
Graphical models such as Bayesian networks and Markov random fields provide a powerful framework for reasoning about
conditional dependency structures over many variables, and have found wide application in many areas including error correcting codes, computer vision, and computational biology \citep{Wainwright08,Koller_book}. 
Given a graphical model, which may be estimated from empirical data or constructed by domain expertise, 
the term \emph{inference} refers generically to answering
probabilistic queries about the model, 
such as computing marginal probabilities or maximum {\it a posteriori} estimates. 
Although these inference tasks are NP-hard in the worst case, 
recent algorithmic advances, including the development of variational methods and the family of algorithms collectively called belief propagation, provide approximate
or exact solutions for these problems in many practical circumstances.

In this work we will focus on three common types of inference tasks.
The first involves \emph{maximization} or \emph{max-inference}
tasks, sometimes called maximum {\it a posteriori} (MAP) or most probable explanation
(MPE) tasks, which look for a mode of the joint probability. 
The second are \emph{sum-inference} tasks, which include calculating the marginal probabilities
or the normalization constant of the distribution
(corresponding to the probability of evidence in a Bayesian network).  Finally, the main focus of
this work is on \emph{marginal MAP}, a type of \emph{mixed-inference} problem that seeks a partial configuration
of variables that maximizes those variables' marginal probability, with the remaining variables summed out.%
\footnote{In some literature \citep[e.g.,][]{Park04}, marginal MAP is simply referred to as MAP, and the joint MAP problem is called MPE.}
\qiangnew{Marginal MAP plays an essential role in many practical scenarios where there exist hidden variables or uncertain parameters. 
For example, a marginal MAP problem can arise as a MAP problem on models with hidden variables whose predictions are not of interest, 
or as a robust optimization variant of MAP with some unknown or noisily observed parameters marginalized w.r.t.\ a prior distribution. }
It can be also treated as a special case of the more complicated frameworks of stochastic programming \citep{birge1997introduction} or decision networks 
 \citep{howard2005influence, liu12b}.
%, both of which seek optimal choices in a decision making process under an uncertain environment. 
%the \emph{mixed-inference} problems provide Bayesian optimal solutions in many real-world inference problems tainted by uncertainty -- either when part of the model parameters are missing or measured inaccurately, or when part of the joint configurations are introduced as hidden variables whose values are not of direct interst. The marginal MAP problems are also closely related to the stochastic optimization problems \citep{birge1997introduction}, where the expectation of stochastic objective functions are minimized. 

These three types of inference tasks are listed in order of increasing difficulty: max-inference is NP-complete, while sum-inference is \#P-complete,
and mixed-inference is $\mathrm{NP}^{\mathrm{PP}}$-complete \citep{Park04, de2011new}. 
Practically speaking, max-inference tasks have a host of efficient algorithms such as
loopy max-product BP, tree-reweighted BP, and dual decomposition~\cite[see e.g., ][]{Koller_book, Sontag_optbook}.
Sum-inference is more difficult than max-inference:  
for example there are models, such as those with binary attractive pairwise potentials, on which sum-inference is \#P-complete but max-inference is tractable \citep{greig1989exact, jerrum1993polynomial}. 
%However, the sum-inference problem can be shown to be reformed to a minimization problem of a free energy objective by taking the conjugate of the log-partition function \citep{yedidia2003understanding, Wainwright08}, and various variational-based algorithms can approximate the sum-inference problems by approximating the corresponding maximization problems. 
% But sum-inference problems are similarly well-studied as max-inference probably, thanks to the seminar work of \citet{yedidia2003understanding}. 
% due to their similarity in the formulation, and % many algorithms parallel to those for max-inference exist for sum-inference, for example, 
%many algorithm for max-inference (e.g. max-product belief propagation) can be extents to max-inference by simply replacing the max operators with sum operators. 
%These algorithms, such as sum-product belief propagation, usually can be mapped to corresponding algorithms for max-inference, by just replacing the sum operators with max operators. 

Mixed-inference is even much harder than either max- or sum- inference
problems alone: marginal MAP can be NP-hard even on tree structured graphs, as illustrated in the example in \figref{fig:hiddenchain} \citep{Koller_book}. 
%  A classic example illustrating this is \figref{fig:hiddenchain},
%where marginal MAP in a simple tree structure is still NP-hard \citep{Koller_book}. 
The difficulty arises in part because the
max and sum operators do not commute, causing the feasible elimination orders to have much higher induced width than for sum- or max-inference. 
%restricting the elimination orders to those that processing all the sum nodes before max nodes, causing the effective tree-width significantly higher than the unconstrained cases; 
Viewed another way, the marginalization step may destroy the dependency structure of the original graphical model, making the subsequent maximization step far more challenging. 
Probably for these reasons, there is much less work on marginal MAP than that on joint MAP or marginalization, despite its importance to many practical problems. 
\qiangnew{In practice, it is common to over-use the simpler joint MAP or marginalization even when marginal MAP would be more appropriate. This may cause serious problems, as we illustrate in Example~\ref{exa:weather_dilemma} and our empirical results in Section~\ref{sec:experiments}. }

\textbf{Contributions.} 
We reformulate the mixed-inference problem to a joint maximization problem as a free energy objective that extends the well-known log-partition function duality form, making it possible to easily extend essentially arbitrary variational algorithms to marginal MAP. 
In particular, we propose a novel ``mixed-product" BP algorithm that is a hybrid of max-product, sum-product, and a special ``argmax-product" message updates, as well as a convergent proximal point algorithm that works by iteratively solving pure (or annealed) marginalization tasks. We also present junction graph BP variants of our algorithms, that work on models with higher order cliques.  We also discuss mean field methods and highlight their connection to the expectation-maximization (EM) algorithm. 
%In addition, we briefly discuss the extension of our algorithm to the more general multi-stage stochastic optimization problems. 
We give theoretical guarantees on the global and local optimality of our algorithms for cases when the sum variables form tree structured subgraphs. Our numerical experiments show that our methods can provide significantly better solutions than existing algorithms, including a similar hybrid message passing algorithm by \citet{Jiang10} and a state-of-the-art algorithm based on local search methods. 
\qiangold{A preliminary version of this work has appeared in \citet{liu11marginal_MAP}.} 
 
\textbf{Related Work.} 
Expectation-maximization (EM) or variational EM provide one straightforward approach for marginal MAP, by viewing the sum nodes as hidden variables and the max nodes as parameters to be estimated; 
%by viewing $\vx_B$ as theparameters and $\vx_A$ as the hidden variables;
however, EM is prone to getting stuck at sub-optimal configurations. %and we show that EM can be treated as a special case of our framework. 
The classical state-of-the-art approaches include local search methods \citep[e.g.,][]{Park04},  Markov chain Monte Carlo methods \citep[e.g.,][]{Doucet02, yuan2004annealed}, and variational elimination based methods \citep[e.g.,][]{dechter2003mini, maua2012anytime}. 
%\citet{yuan2004annealed}: annealed MAP
% \citet{dechter2003mini}: mini-bucket 
\citet{Jiang10} recently proposed a hybrid message passing algorithm that has a similar form to our mixed-product BP algorithm, but without theoretical guarantees; we show in Section~\ref{sec:compare_jiang} that \citet{Jiang10} can be viewed as an approximation of the marginal MAP problem that exchanges the order of sum and max operators. 
Another message-passing-style algorithm was proposed very recently in \citet{altarelli2011stochastic} for general multi-stage stochastic optimization problems based on survey propagation, which again does not have optimality guarantees and has a relatively more complicated form. 
Finally, \citet{ibrahimi2011robust} introduces a robust max-product belief propagation for solving a \qiangold{related} %TODO: is related better than relevant?  
worst-case robust optimization problem, where the hidden variables are minimized instead of marginalized. 
To the best of our knowledge, our work is the first general variational framework for marginal MAP, and provides the first strong optimality guarantees. 

We begin in Section~\ref{sec:background} by introducing background on graphical models and variational inference.  
We then introduce a novel variational dual representation for marginal MAP in Section~\ref{sec:mixduality}, and propose analogues of the Bethe and tree-reweighted approximations in Section~\ref{sec:variational}. 
A class of ``mixed-product" message passing algorithms is proposed and analyzed in Section~\ref{sec:message} and convergent alternatives are proposed in Section~\ref{sec:proximal} based on proximal point methods. 
We then discuss the EM algorithm and its connection to our framework in Section~\ref{sec:EM}, 
and extend our algorithms to junction graphs in Section~\ref{sec:junctiongraph}. 
% and briefly describe the extension of our algorithms to multi-stage stochastic optimization problems in Section~\ref{sec:multistage}. 
%
Finally, we present numerical results in Section~\ref{sec:experiments} and conclude the paper in Section~\ref{sec:conclusion}. 

\section{Background}
\label{sec:background}
\subsection{Graphical Models}
\newcommand{\I}{\mathcal{I}}
%Graphical models capture the dependency structure over a collection of variables.  
Let $\bx = \{x_1,  x_2, \cdots, x_n\}$ be a random vector in a discrete space $\X= \X_1\times \cdots \times \X_n$. Let $V = \{ 1, \cdots, n \}$. For an index set $\alpha \subseteq V$, denote by $\vx_\alpha$ the sub-vector $\{x_{i}  \colon   i \in \alpha \}$, and similarly, $\X_{\alpha}$ the cross product of $\{\X_i \colon i \in \alpha \}$. 
A graphical model defines a factorized probability on $\bx$, 
\begin{align}
p(\bx) = \frac{1}{Z(\vpsi)} \prod_{\alpha \in \I} \psi_{\alpha}( \vx_\alpha) &&\text{~~~~or~~~~}  &&  p(\bx ; \vtheta) = \exp[\sum_{\alpha \in \I}  \theta_{\alpha}(\vx_\alpha)  - \Phi(\vtheta)],  
%\displaystyle \Phi(\vtheta) = \log Z = \log \sum_{x \in \X}\exp(\sum_{\alpha \in \I}  \theta_{\alpha}(\vx_\alpha))
\end{align}
%\begin{align}
%\text{$\displaystyle \Phi(\vtheta) = \log Z = \log \sum_{x \in \X}\exp(\sum_{\alpha \in \I}  \theta_{\alpha}(\vx_\alpha))$.}  &&&&&&&&&&
%where $\displaystyle \Phi(\vtheta)$ is the log-partition function, $\displaystyle \Phi(\vtheta) = \log Z = \log \sum_{x \in \X}\exp(\sum_{\alpha \in \I}  \theta_{\alpha}(\vx_\alpha))$.
%\end{align}
%
%\begin{align}
%p(x; \vtheta) &= \exp\big( \theta(\bx) - \Phi(\vtheta) \big), & \theta(x)&=\sum_{\alpha \in \I} \theta_\alpha(\vx_\alpha)
%p(x; \vtheta) = \exp\big( \sum_{\alpha\in \I}\theta_{\alpha}(\vx_\alpha) - \Phi(\vtheta) \big), &&x \in \X^N %\forall x \in \X^N
%\label{equ:exponential}
%\end{align}
%\vspace{-.5em}
%\begin{align*}
%\mbox{and}&&
%\vtheta &= \{\theta_\alpha\}, &
%\theta(x)&=\mbox{$\sum_\alpha$} \theta_\alpha(\vx_\alpha), &
%\Phi(\theta) &= \log \sum_x \exp \theta(x) &
%\end{align*}
where $\I$ is a set of subsets of variable indexes, $\psi_{\alpha} \colon \X_{\alpha} \to \R^+ $ is called a factor function,
and $\theta_{\alpha}(\vx_\alpha) = \log \psi_{\alpha}(\vx_\alpha)$.
Since the $x_i$ are discrete, the functions $\psi$ and $\theta$ are tables; by alternatively viewing $\theta$ as
a vector, it is interpreted as the natural parameter in an overcomplete, exponential family representation.
%and $\theta_{\alpha}(\vx_\alpha) = \log \psi_{\alpha}(\vx_\alpha)$ is called the natural parameter of the exponential family representation. 
Let $\vpsi$ and $\vtheta$ be the joint vector of all $\psi_{\alpha}$ and $\theta_{\alpha}$ respectively, e.g., $\vtheta = \{ \theta_{\alpha}(\vx_\alpha)  \colon \alpha \in I, \vx_\alpha \in \X_{\alpha} \} $. 
The normalization constant $Z(\vpsi)$, called \emph{partition function}, normalizes the probability to sum to one, and $\Phi(\vtheta) \coloneq \log Z(\vpsi)$ is called the log-partition function, 
\begin{align*}
\Phi(\vtheta) =  \log \sum_{\vx \in \X}\exp[\theta(\vx)],
\end{align*}
where we define $\theta(\vx) = \sum_{\alpha \in \I} \theta_{\alpha}(\vx_\alpha)$ to be the joint  potential function that maps from $\X$ to $\R$.  
%
%
 %is called a cluster or clique. $\psi_{\alpha}(\vx_\alpha)$. 
 %$\alpha$ indexes subsets of variables, and
%$\Phi(\vtheta) = \log \sum_x \exp(\vtheta(x))$ 
%$\Phi(\vtheta)$
%is the normalizing constant, called the \emph{log-partition function}.
%\begin{align*}
%\text{where} ~~ & \Phi(\vtheta) = \log \sum_x \exp(\sum_{\alpha\in \I}\theta_{\alpha}(x)),& \vtheta =\{\theta_\alpha \colon \alpha\in I \},%(\vx_\alpha) \colon \alpha\in \I\}, 
%\end{align*}
%where $\displaystyle \Phi(\vtheta) = \log \sum_x \exp(\sum_{\alpha\in \I}\theta_{\alpha}(x))$, $\displaystyle \vtheta =\{\theta_\alpha \colon \alpha\in I \}$,%(\vx_\alpha) \colon \alpha\in \I\}, 
%
%$\alpha$ indexes subsets of variables, and 
%$\Phi(\vtheta) = \log \sum_x \exp(\vtheta(x))$ 
%$\Phi(\vtheta)$ 
%is the normalizing constant, called the \emph{log-partition function}. 
% We will allow $\theta_\alpha(\vx_\alpha)$ take $-\infty$ values to allow zero probabilities. 
The factorization structure of $p(\vx)$ can be represented by an undirected graph 
%We associate $p(\vx)$ with a graph 
$G=(V,E)$, where each node $i\in V$ maps to a variable $x_i$, and each edge $(ij)\in E$ corresponds to two variables $x_i$ and $x_j$ that coappear in some factor function $\psi_{\alpha}$, that is, $\{ i, j \} \subseteq \alpha$. 
%where each variable $x_i$, $i=1\ldots n$, is associated
%with a node $i\in V$ and $(ij)\in E$ if $\{i,j\}\subseteq \alpha$ for some $\alpha$. 
The set $\mathcal{I}$ is then a set of cliques (fully connected subgraphs) of $G$. 
%By the famous Hammersley-Clifford Theorem \citep{lauritzen1996graphical}, the distribution $p(\vx)$ satisfies the Markov property on the graph $G$, that is, two nodes (variables) are conditionally indepedent whenever they are separated by evidence nodes on the graph $G$. 
For the purpose of illustration, we mainly restrict our scope on the set of pairwise models, on which $\I$ is the set of nodes and edges, i.e., $\I = E \cup V$.  However, we show how to extend our algorithms to models with higher order cliques in Section~\ref{sec:junctiongraph}. 

%In this work we focus on pairwise models, in which the index set $\I$ is the 
%union of nodes and edges,  $\I = V \cup E$. 

\subsection{Sum-Inference Problems and Variational Approximation}
Sum-inference is the task of marginalizing (summing out) variables in the model, e.g., calculating the marginal probabilities of single variables, or the normalization constant $Z$, 
\begin{align}
p(x_{i}) =  \sum_{\vx_{V \setminus \{i\}}} \exp[\theta(\vx) - \Phi(\vtheta)],  &&   \Phi(\vtheta) = \log \sum_{\vx} \exp[\theta(\vx)].
\end{align}
% that are not of interest.  
%In this work, we focus on calculating the partition function, since the calculation of the marginals can be reduced to calculating the partition functions. 
% it can be treated as the problem of calculating the log-partition function $\Phi(\vtheta)$. Unfortunately, %the calculation of $\Phi(\vtheta)$ is \#P, and  straightforward calculation requires summing over an exponential number of terms. 
Unfortunately, the problem is generally \#P-complete, and the straightforward calculation requires summing over an exponential number of terms. Variational methods are a class of approximation algorithms that transform the marginalization problem into a continuous optimization problem, which is then typically solved approximately. %A key result underlying many variational methods is the convex %duality representation of log-parition function, 

\textbf{Marginal Polytope.}  
The marginal polytope is a key concept in variational inference. We define the \emph{marginal polytope} $\margpoly$ to be the set of local marginal probabilities $\vtau = \{\tau_{\alpha}(\vx_\alpha) \colon \alpha \in \I \}$ that are extensible to a valid joint distribution, i.e., 
\begin{equation}
\margpoly = \{\vtau \ :\  \text{$\exists$ joint distribution $q(\vx)$, s.t. $\tau_{\alpha}(\vx_\alpha) = \sum_{\vx_{V \setminus \alpha}} q(\vx)$ for $\forall \alpha \in \I$}\}.
\label{equ:marginalpolytope}
\end{equation}
Denote by $\Q[\vtau]$ the set of joint distributions whose marginals are consistent with $\vtau \in \margpoly$; by the principle of maximum entropy \citep{maxent}, there exists a unique distribution in $\Q[\vtau]$ that has  maximum entropy and follows the exponential family form for some $\vtheta$.%
\footnote{In the case that $p(\vx)$ has zero elements, the maximum entropy distribution is still unique and satisfies the exponential family form, but the corresponding $\vtheta$ has negative infinite values \citep{maxent}.} 
With an abuse of notation, we denote these unique global distributions by $\tau(\vx)$, and we do not distinguish $\tau(\vx)$ and $\vtau$ when it is clear from the context. 
%For any $\vtau \in\margpoly$, there may be many such $q$, but
%there is a unique distribution of form
%\eqref{equ:exponential}, denoted $\qtau$, with maximum entropy $H(x; \vtau)
%= -\sum_x \qtau(\vx) \log \qtau(\vx)$.  We write
%$H(x; q_\vtau)$ as simply $H(x; \vtau)$ for convenience.  

\textbf{Log-partition Function Duality.} 
A key result to many variational methods is that the log-partition function $\Phi(\vtheta)$ is a convex function of $\vtheta$ and can be rewritten into a convex dual form, %%
\begin{align}
\Phi(\vtheta) = \max_{\vtau \in \margpoly}  \big\{ \langle \vtheta, \vtau \rangle + \Hxtau  \big\},
\label{equ:sumduality}
\end{align}
%where $\langle \vtheta, \vtau \rangle = \E_{\qtau}[\vtheta(x)]$ 
where $\langle \vtheta, \vtau \rangle = \sum_{\alpha}  \sum_{\vx_\alpha} \theta_{\alpha}(\vx_\alpha) \tau_\alpha(\vx_\alpha)$ 
is the vectorized inner product, %\E_{\qtau}[\vtheta(x)]$ 
and $\Hxtau$ is the entropy of the corresponding global distribution $\tau(\vx)$, i.e., $\Hxtau = - \sum_{\vx} \tau(\vx) \log \tau(\vx)$. The unique maximum $\vtau^*$ of \eqref{equ:sumduality} exactly equals the marginals of the original distribution $p(\vx; \vtheta)$, that is, $\tau^*(\vx) = p(\vx; \vtheta)$. 
%Therefore, $\margpoly$ can be further restricted to the collection of distribution that has the form of \eqref{equ:exponential} for some $\vtheta$. 
We call $F_{sum}(\vtau, \vtheta) = \langle \vtheta, \vtau \rangle + \Hxtau$ the sum-inference free 
energy (although technically the {\it negative} free energy). %The dual form \eqref{equ:sumduality} is a promind

The dual form \eqref{equ:sumduality} transforms the marginalization problem into a continuous optimization, but does not make it any easier: 
%Simply transforming a sum-inference problem into \eqref{equ:sumduality} does not make it easier;
the marginal polytope $\margpoly$ is defined by an exponential number of linear constraints, and the entropy term in the objective function is as difficult to calculate as the log-partition function. 
%remain intractable to calculate. 
However, \eqref{equ:sumduality}
provides a framework for deriving efficient approximate inference algorithms by approximating
both the marginal polytope and the entropy \citep{Wainwright08}. 
%Specifically, one can either relax $\margpoly$ to the local consistent set, and
%approximate the objective by some tractable forms like Bethe free energy or
%tree reweighted free energy, or restrict $\margpoly$ to subset in which the
%maximizing task and the calculation of entropy term is tractable to get some
%mean filed-like algorithms. We are going to introduce serveral approximation
%scheme that are analogue to LBP, TRBP and mean filed. For simplicity, we will
%restrict ourself on pairwise cliques in this work, although it is
%straightforward to generalize to higher-order cliques.

\textbf{BP-like Methods.} Many approximation methods replace $\margpoly$ with the \emph{locally consistent polytope} $\localpoly$; in pairwise models, it is the set of singleton and pairwise ``pseduo-marginals" $\{\tau_i (x_i)\colon i \in
V\}$ and $\{\tau_{ij}(x_i, x_j) \colon (ij) \in E\}$ that are consistent on their intersections, i.e., 
\begin{equation}\label{equ:localpolydefine}
\localpoly =  \{ \tau_i, \tau_{ij} ~ \colon ~ \sum_{x_i} \tau_{ij}(x_i,x_j) = \tau_j(x_j), ~ %\\
\sum_{x_i}\tau_{i}(x_i) = 1, ~ \tau_{ij}(x_i, x_j) \geq 0 \}.
\end{equation}
Since not all such pseudo-marginals have valid global distributions, 
it is easy to see that $\localpoly$ is an outer bound of $\margpoly$, that is, $ \margpoly  \subseteq \localpoly$. 
\qiangold{Note that this means there may not exist a global distribution $\tau(\vx)$ for $\vtau$ in $\localpoly$. }

The free energy remains intractable (and is not even well-defined) in $\localpoly$.  We
typically approximate the free energy by a combination of singleton
and pairwise entropies, which only requires knowing $\tau_i$ and $\tau_{ij}$. For
example, the Bethe free energy approximation \citep{yedidia2003understanding} is 
\begin{align}
\Hxtau \approx \sum_{i\in V} \Hitau - \sum_{(ij) \in E} \Iijtau, &&
%\max_{\vtau  \in  \localpoly}\big \{  \langle \vtheta, \vtau \rangle + \sum_{i\in V} \Hitau - \sum_{(ij) \in E} \Iijtau \big \}
\Phi(\vtheta) \approx \max_{\vtau \in \localpoly} \big \{ \langle \vtheta, \vtau \rangle + \sum_{i\in V} \Hitau - \sum_{(ij) \in E} \Iijtau  \big \} ,
\label{equ:bethe}
\end{align}
where $\Hitau$ is the entropy of $\tau_i(x_i)$ and $\Iijtau$ the mutual information of $x_i$ and $x_j$, i.e., %that is, 
\begin{align*}
\Hitau  =  - \sum_{x_i} \tau_i(x_i) \log \tau_i(x_i), &&  \Iijtau = \sum_{x_i,x_j} \tau_{ij}(x_i, x_j) \log \frac{\tau_{ij}(x_i, x_j)}{\tau_i(x_i) \tau_j(x_j)}.
\end{align*}
We sometimes abbreviate $\Hitau$ and $\Iijtau$ into $H_i$ and $I_{ij}$ for convenience.   
% $\Hitau  =  - \sum_{x_i} \tau_i(x_i) \log \tau_i(x_i)$ is the entropy of variable
%$x_i$, and $\Iijtau = \sum_{x_i,x_j} \tau_{ij}(x_i, x_j) \log \frac{\tau_{ij}(x_i, x_j)}{\tau_i(x_i)
%\tau_j(x_j)}$ is the pairwise mutual information.  
The well-known loopy belief propagation (BP) algorithm of \citet{pearl1988probabilistic} can be interpreted as a fixed point algorithm to optimize the Bethe free energy in \eqref{equ:bethe} on the locally consistent polytope $\localpoly$ \citep{yedidia2003understanding}. 
Unfortunately, the Bethe free energy is a non-concave function of $\vtau$, causing \eqref{equ:bethe} to be a non-convex optimization. The tree reweighted (TRW) free energy is a convex surrogate of the Bethe free energy \citep{Wainwright_TRBP}, % is a variant of the Bethe free energy \citep{Wainwright_TRBP}, 
\begin{align}
%\Hxtau \approx \sum_{i\in V} \Hitau - \sum_{(ij) \in E} \rho_{ij}   \Iijtau, &&
%\max_{\vtau  \in  \localpoly} \big \{ \langle \vtheta, \vtau \rangle + \sum_{i\in V} \Hitau - \sum_{(ij) \in E} \rho_{ij} \Iijtau \big \}
\Phi(\vtheta) \approx     \max_{\vtau \in \localpoly} \big \{  \langle \vtheta, \vtau \rangle + \sum_{i\in V} \Hitau - \sum_{(ij) \in E} \rho_{ij} \Iijtau  \big \},
\label{equ:TRW}
%\vspace{-.25\baselineskip}
\end{align}
where $\{\rho_{ij}  \colon (ij) \in E \}$ is a set of positive edge appearance probabilities obtained from
a weighted collection of spanning trees of $G$ (see \citet{Wainwright_TRBP} and Section~\ref{sec:trw_bp_marginalMAP} for the detailed definition).
The TRW approximation in \eqref{equ:TRW} is a convex optimization problem, and is guaranteed to give an upper bound of the true log-partition function. 
% is guaranteed to be an upper bound of the true free energy, and is also a concave function of $\vtau$ in $\localpoly$. 
A message passing algorithm similar to loopy BP, called tree reweighted BP, can be derived as a fixed point algorithm for solving the convex optimization in \eqref{equ:TRW}. 
%A tree reweighted belief propagation algorithm similar to loopy BP can be derived as a fixed point method for optimizing TRW free energy %\eqref{equ:TRW} 
%on $\localpoly$.

\textbf{Mean-field-based Methods.} Mean-field-based methods are another set of approximate inference algorithms, which work by restricting $\margpoly$ to a set of tractable distributions, on which both the marginal polytope and the joint entropy are tractable. 
%such as fully factored distributions, in which both the set of constraints on $\vtau$ and the calculation of the entropy functions are tractable. 
Precisely, let $\margpoly_{mf}$ be a subset of $\margpoly$ that corresponds to a set of tractable distributions, e.g., the set of fully factored distributions, $\margpoly_{mf} = \{\vtau \in \margpoly \colon \tau(\vx) = \prod_{i\in V} \tau_i(x_i) \}$. 
\qiangold{Note that the joint entropy $\Hxtau$ for any $\vtau \in \margpoly_{mf}$ decomposes to the sum of singleton entropies $\Hitau$ of the marginal distributions $\tau_i(x_i)$. 
This method then approximates the log-partition function \eqref{equ:sumduality} by}
\begin{align}
%\max_{\vtau \in \margpoly_{mf}} \big \{  \langle \vtheta, \vtau \rangle +  \Hxtau \big \} , \label{equ:meanfield}
\max_{\vtau \in \margpoly_{mf}} \big \{  \langle \vtheta, \vtau \rangle +  \sum_{i\in V}\Hitau \big \} , \label{equ:meanfield}
\end{align}
which is guaranteed to give a lower bound of the log-partition function. Unfortunately, mean field methods usually lead to non-convex optimization problems, because $\margpoly_{mf}$ is often a non-convex set. In practice, block coordinate descent methods can be adopted to find the local optima of \eqref{equ:meanfield}. 
%which can be solved by coordinate descent methods. The mean-field-based approaches usually provides lower bounds to the partition function; but usually require solving non-convex optimization problems, because the inner bounds $\margpoly_{mf}$ are not convex sets. 
%subset of $\margpoly$, e.g.,   $\margpoly_{mf}$ can be set of fully factored distributions, $\margpoly_{mf}$ 
%See \citet{Wainwright08} for more details. 
% related approach restricts $\margpoly$ to a subset of distributions, in which both the 
%set of constraints and the entropy calculation are tractable, such as fully factored
%distributions.  This leads to the class of (structured) mean field approximations.

\subsection{Max-Inference Problems}
Combinatorial maximization (max-inference), or maximum \emph{a posteriori} (MAP), problems are the tasks of finding a mode of the joint probability.  That is, 
%The max-inference problems are the tasks of finding the maximum \emph{a posteriori} configurations that have the largest probability, that is, 
\begin{align}
\Phi_{\infty}(\vtheta) =  \max_{\vx}  \theta(\vx) ,   ~~~~~~~~~~~  \vx^* = \argmax_{\vx}  \theta(\vx), 
\label{equ:max_inference}
\end{align}
where $\vx^*$ is a MAP configuration and $\Phi_{\infty}(\vtheta)$ the optimal energy value.  
This problem can be reformed into a linear program, 
\begin{equation}
\Phi_{\infty}(\vtheta) =  \max_{\vtau \in \M} \langle \vtheta, \vtau \rangle,
\label{equ:maxduality}
\end{equation}
which attains its maximum when $\tau^*(\vx) = \deltaIndic(\vx = \vx^*)$, %where $\vx^*$ is a MAP configuration in \eqref{equ:max_inference} and 
where $\deltaIndic(\cdot)$ is the Kronecker delta function, defined as $\deltaIndic(t) = 1$ if 
condition $t$ is true, and zero otherwise.
%$t\neq 0$ and $\deltaIndic(t) = 0$ if otherwise. %
If there are multiple MAP solutions, say $\{\vx^{*k} \colon k=1,\ldots,K \}$, then any convex combination $\sum_k c_k \deltaIndic(\vx = \vx^{*k})$ with $\sum_k c_k =1, c_i\geq0$ leads to a maximum of \eqref{equ:maxduality}. 

\qiangold{The problem in \eqref{equ:maxduality} remains NP-hard, because the marginal polytope $\margpoly$ includes exponentially many inequality constraints. }
Most variational methods for MAP \citep[e.g.,][]{wainwright2005map, werner2007linear} 
can be interpreted as relaxing $\margpoly$ to the locally consistent polytop $\localpoly$, yielding a linear
relaxation of the original integer programming problem.  Note that
\eqref{equ:maxduality} differs from \eqref{equ:sumduality} only by its lack of an
entropy term; in the next section, we generalize this similarity to marginal
MAP. % problem. 

%Among these tasks, MAP is probably best studied. Loopy belief propagation was proven to give a local optimum of MAP, later,belief propagation with convex count numbers (e.g., tree reweighted max-product ) are shown to give global optimal when the solution is uniquely decodable. To solve the convergence issue, dual decomposition was proposed by different authors, and all these methods are shown to be equivalent to a linear relaxation of the original integer programming. 

\subsection{Marginal MAP Problems}
%
%By definition, marginal MAP
Marginal MAP is simply a hybrid of the max- and sum- inference tasks. Let $A$ be a subset of nodes $V$, and $B = V\backslash A$ be the complement of $A$. The
marginal MAP problem seeks a partial configuration $\vx_B^*$ that has the maximum marginal probability $p(\vx_B) = \sum_{\vx_{A}} p(\vx)$, where $A$ is the set of {sum} nodes to be marginalized out, and $B$ the {max} nodes to be optimized. 
We call this a type of ``mixed-inference'' problem, since it involves more than one type of variable elimination operator.
%We also assume the sizes of both $A$ and $B$ are large, so that the marginal MAP problem does not trivially reduce to a sum- or max- inference problem. 
\qiangold{To facilitate developing our duality results, we formulate marginal MAP in terms of the exponential family representation, }
\begin{align}
\Phi_{AB}(\vtheta) = \max_{\vx_B} Q(\vx_B; \vtheta), &&  \text{where~~~} Q(\vx_B; \vtheta) =  \log \sum_{\vx_A} \exp[\theta(\vx)], %\exp(\sum_{\alpha \in  \I}\theta_{\alpha}(\vx_\alpha)). %\notag \\
\label{equ:marginalMAP}
\end{align}
\qiangold{where the maximum point $\vx_B^*$ of $Q(\vx_B; \vtheta)$ is the marginal MAP solution. }
\begin{figure}[tb] \centering
\begin{picture}(0,0)
\thicklines \setlength{\unitlength}{1.3cm}
\put(-5, 1.1){\tt max:$\vx_B$}
\put(-5, .15){\tt sum:$\vx_A$}
\put(2.5, 1.4){Marginal MAP:}
\put(2.7, .85){$\displaystyle \vx_B^* = \argmax_{\vx_B} p(\vx_B)$}
\put(2.97, .32){$\displaystyle \ \ = \argmax_{\vx_B} \sum_{\vx_A} p(\vx)$.}
\end{picture}
\hspace{-5cm}\includegraphics[width=.4\columnwidth]{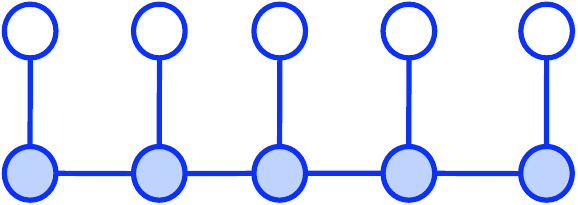} 
%\begin{picture}(0,0)
%\thicklines \setlength{\unitlength}{1.3cm}
%\put(-3, 1.1){\tt max:$\vx_B$}
%\put(-3, .15){\tt sum:$\vx_A$}
%\put(6, 1.1){Marginal MAP:}
%\put(5, .6){$\vx_B^* = \argmax_{\vx_B} p(\vx_B)$}
%\put(5, .15){$ = \argmax_{\vx_B} \sum_{\vx_A} p(\vx)$}
%\end{picture}
\caption{An example from \citet{Koller_book} in which a marginal MAP query on a tree requires exponential time complexity. 
The marginalization over $\vx_A$ destroys the conditional dependency structure in the marginal distribution $p(\vx_B)$, causing an intractable maximization problem over $\vx_B$. The exact variable elimination method, \qiangold{which sequentially marginalizes the sum nodes and then maximizes the max nodes,} has time complexity of $O(\exp(n))$, where $n$ is the length of the chain. 
%The conditional dependency structures are distroyed in the marginal distribution $p(\vx_B)$ because of the marginalization of $x_{A}$, causing an intractable maximization problem over $\vx_B$. The complexity of the exact variable elimination method \citep{Koller_book} is $O(\exp(n))$, where $n$ is the length of the chain. 
%Summing over the shaded nodes in $A$ makes all the unshaded nodes in $B$ interdependent; 
}
\label{fig:hiddenchain}
\end{figure}
Although similar to max- and sum-inference, marginal
MAP is significantly harder than either of them. % max- or sum-inference. 
A classic example is shown in \figref{fig:hiddenchain}, where marginal MAP is NP-hard even on a tree structured graph \citep{Koller_book}.  The main difficulty arises 
because the {max} and {sum} operators do not commute, which 
restricts feasible elimination orders to those with \emph{all} the sum nodes eliminated
before \emph{any} max nodes. %$\vx_B$. 
In the worst case, marginalizing the sum nodes $\vx_A$ may destroy any conditional independence among the max nodes $\vx_B$, 
%(a fully connected graph of $\vx_B$), 
making it difficult to represent or optimize $Q(\vx_B;\theta)$, 
%This means the problem can remain hard 
even when the sum part alone is tractable 
(such as when the nodes in $A$ form a tree). 

Despite its computational difficulty, marginal MAP plays an essential role in many practical scenarios. The marginal MAP configuration $\vx_B^*$ in \eqref{equ:marginalMAP} is Bayes optimal in the sense that it minimizes the expected error on $B$, $\E[\deltaIndic(\vx_B^* = \vx_B)]$, where $\E[\cdot]$ denotes the expectation under distribution $p(\vx ; \vtheta)$.  Here, the variables $\vx_A$ are not included in the error criterion, for example because they are ``nuisance" hidden variables of no direct interest, or unobserved or inaccurately measured model parameters.  In contrast, the joint MAP configuration $\vx^*$ minimizes the joint error $\E[\deltaIndic(\vx^* = \vx)]$, but gives no guarantees on the partial error $\E[\deltaIndic(\vx_B^* = \vx_B)]$. In practice, perhaps because of the wide availability of efficient algorithms for joint MAP, researchers tend to over-use joint MAP even in cases where marginal MAP would be more appropriate. 
The following toy example shows that this seemingly reasonable approach can sometimes cause serious problems.  

\begin{exa}[Weather Dilemma]
\label{exa:weather_dilemma}
Denote by $x_b \in \{ {\tt {\tt rainy}}, {\tt {\tt sunny}} \}$ the weather condition of Irvine, and $x_a \in \{ {\tt walk}, {\tt drive}\}$ whether Alice drives or walks to the school depending on the weather condition. Assume the probabilities of $x_b$ and $x_a$ are 
\\
\begin{tabular}{p{5cm} p{7cm}}
 \vspace{-.5\baselineskip} 
 \begin{tabular}{p{.8cm} p{3cm}}
 \vspace{0pt}  $p(x_b):$ &  \vspace{0pt}  \begin{tabular}{|c|c|} \hline {\tt rainy} & $0.4$ \\ \hline {\tt sunny} & $0.6$ \\ \hline  \end{tabular} 
\end{tabular}
 & 
\vspace{-.5\baselineskip}
 \begin{tabular}{p{1.2cm} p{3cm}}
 \vspace{0pt}  $p(x_a | x_b):$ &   \vspace{0pt}  \begin{tabular}{|c|c|c|}\hline  & {\tt walk} & {\tt drive} \\  \hline {\tt rainy} & $1/8$ & $7/8$ \\ \hline {\tt sunny} & $1/2$ & $1/2$ \\ \hline  \end{tabular}
% \vspace{.1\baselineskip}
\end{tabular}
 \vspace{.8\baselineskip}
\end{tabular} \\
%\end{table}
The task is to calculate the most likely weather condition of Irvine, which is obviously {\tt sunny} according to $p(x_b)$. The marginal MAP, $x_b^* = \argmax_{x_b} p(x_b) = {\tt sunny}$, gives the correct answer. However, the full MAP estimator, $[x_a^*, x_b^*] = \argmax p(x_a, x_b) = [ {\tt drive}, {\tt rainy}]$, gives answer $x_b^* = {\tt rainy}$ (by dropping the $x_a^*$ component), which is obviously wrong. Paradoxically, if $p(x_a | x_b)$ is changed (say, corresponding to a different person), the solution returned by full MAP could be different. 
\end{exa}
In the above example, since no evidence on $x_a$ is observed, the conditional probability $p(x_a | x_b)$ does not provide useful information for $x_b$, 
but instead provides misleading information when it is incorporated in the full MAP estimator. The marginal MAP, on the other hand, eliminates the influence of the irrelevant $p(x_a  | x_b)$ by marginalizing (or averaging) $x_a$. In general, the marginal MAP and full MAP can differ significantly
when the uncertainty in the hidden variables changes as a function of $\vx_B$. 
%when many equally likely hidden states exist. 
%The idea here matches Vapnik's princple for solving problems \citep{Vapnik99}, \emph{``When solving a given problem, try to avoid solving a more general problem as an intermediate step."}

%See \citet{Park04} for detailed complexity results. 

%To introduce some notation, let $G_A = (A, E_A)$ be the subgraph induced by
%$A$, i.e., $E_A = \{(ij)  \in E \colon i \in A, j\in A\}$, and  simililarly $G_B =
%(B, E_B)$ for $B$. Let $\cross = \{ (ij)\in E \colon i \in A, j \in B \}$ be the
%edges that come across $A$ and $B$. 
%%
%The natural extension of the notion that trees are efficient for {sum} or
%{max} problems is that the the marginal MAP problem \eqref{equ:marginalMAP} is efficient
%when $G$ is a tree along an elimination order that first eliminates all nodes in $A$,
%then those in $B$. We call this type of graph an $A$-$B$ tree. 

%The marginal MAP problem is much harder than both sum-inference and max-inference problem, because it gives restriction on possible elimination orders, i.e., the nodes in $A$ has to be eliminated before eliminating any nodes in $A$. It is only tractable when $G$ is a tree along a elimination order that eliminates $A$ before $B$. We call this type of graph an $A$-$B$ tree.
%%%%%%%%%%%%%%%%%%%%%%%%%%%%%%%%%%%%%%%%%%%%%%%%%%%%%%%%%%%%

\section{A Dual Representation for Marginal MAP}
\label{sec:mixduality}
%\subsection{A dual representation}
In this section, we present our main result, a dual representation of the
marginal MAP problem \eqref{equ:marginalMAP}. Our dual
representation generalizes that of sum-inference in \eqref{equ:sumduality} and
max-inference in \eqref{equ:maxduality}, and provides a unified framework for
solving marginal MAP problems. 
% main results
\begin{thm}
\label{thm:duality}
The marginal MAP energy $\Phi_{AB}(\vtheta)$ in \eqref{equ:marginalMAP} has a dual representation, 
\begin{equation}
\Phi_{AB}(\vtheta) = \max_{\vtau \in \margpoly} \{  \langle \vtheta, \vtau \rangle + \HABtau \},
\label{equ:mixduality}
\end{equation}
where $\HABtau$ is a conditional entropy, % of $\vtau$'s corresponding global distribution $\qtau(\vx)$, 
$\HABtau = -\sum_{\vx} \qtau(\vx) \log \qtau(\vx_A | \vx_B)$. 
If $Q(\vx_B; \vtheta)$ has a unique maximum $\vx_B^*$, the maximum point $\vtau^*$ of  \eqref{equ:mixduality} is also unique,  satisfying $\tau^*(\vx) = \qtau^*(\vx_B) \qtau^*(\vx_A | \vx_B)$, where
$\qtau^*(\vx_B) = \deltaIndic(\vx_B = \vx_B^*)$ and $\qtau^*(\vx_A | \vx_B) = p(\vx_A | \vx_B; \vtheta)$ 
\footnote{Since $\qtau(\vx_B)=0$ if $\vx_B \neq \vx_B^*$, we do not necessarily need to define $\qtau^*(\vx_A | \vx_B)$ for $\vx_B \neq \vx_B^*$.}. 
%If $Q(\vx_B ; \vtheta)$ has multiple maxima $\{ \vx^{*k}_B \}$, the set of maximum points of $\eqref{equ:mixduality}$ is the convex hull of the corresponding $\{ \vtau^{*k} \}$, 
%\todo{check} i.e., $\qtau^*(\vx_B)$ is a convex combination of the optima $\vx_B^{*k}$, and $\qtau^*(\vx_A | \vx_B^{*k}) = p(\vx_A | \vx_B^{*k}; \vtheta)$ for each $k$.
%
%\begin{align*}
%\{ \vtau^* \in \margpoly \colon \text{$\exists$ $c_k \geq 0$, $\sum_k c_k = 1$},  
%\text{s.t.  $\tau^*(\vx_B) =  \sum_{k} c_k \deltaIndic(\vx_B = x^{*k}_B)$, and $\qtau^*(\vx_A | \vx_B^{*k}) = p(\vx_A | \vx_B^{*k}; \vtheta)$} \}.
%\end{align*}
%If $Q(\vx_B; \vtheta)$ has a unique maximum $\vx_B^*$, the maximum $\vtau^*$ of \eqref{equ:mixduality} 
%is also unique, with $\qtau^*(\vx_B) = \deltaIndic(\vx_B = \vx_B^*)$ and 
%$\qtau^*(\vx_A | \vx_B^*) = p(\vx_A | \vx_B^*; \vtheta)$; 
%if $Q(\vx_B; \vtheta)$ has multiple (global) maxima $\vx^{*k}_B$, 
%$\qtau^*(\vx_B)$ can be any convex combination of
%these optima,
%i.e., $\qtau^*(\vx_B) = \sum_k c_k \deltaIndic(\vx_B = \vx^{*k}_B)$ with $\sum_k c_k = 1$ and $c_k \geq 0$. 
\end{thm}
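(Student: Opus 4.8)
The plan is to prove the variational identity \eqref{equ:mixduality} by starting from the known sum-inference duality \eqref{equ:sumduality} and carefully splitting the optimization over $\vtau \in \margpoly$ into an inner optimization over the ``$\vx_A$-conditional'' part of the distribution and an outer optimization over the ``$\vx_B$-marginal'' part. Concretely, I would parametrize $\qtau(\vx) = \qtau(\vx_B)\,\qtau(\vx_A \mid \vx_B)$ and write the objective as $\langle \vtheta, \vtau\rangle + \HABtau$. The first step is to argue that for any fixed marginal $\qtau(\vx_B)$ on the max-nodes, the inner maximization over the conditional $\qtau(\vx_A\mid\vx_B)$ (ranging over all valid conditionals consistent with membership in $\margpoly$) decouples across the values of $\vx_B$: for each fixed $\vx_B$, we are maximizing $\sum_{\vx_A}\qtau(\vx_A\mid\vx_B)\big[\theta(\vx) \big] - \sum_{\vx_A}\qtau(\vx_A\mid\vx_B)\log\qtau(\vx_A\mid\vx_B)$ weighted by $\qtau(\vx_B)$, which is exactly a log-partition-style optimization whose optimum is $Q(\vx_B;\vtheta) = \log\sum_{\vx_A}\exp(\theta(\vx))$ attained at $\qtau^*(\vx_A\mid\vx_B) = p(\vx_A\mid\vx_B;\vtheta)$. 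This reduces \eqref{equ:mixduality} to $\max_{\qtau(\vx_B)} \sum_{\vx_B}\qtau(\vx_B) Q(\vx_B;\vtheta)$, a linear program over the simplex on $\X_B$, whose optimum is $\max_{\vx_B}Q(\vx_B;\vtheta) = \Phi_{AB}(\vtheta)$, attained by the delta distribution $\deltaIndic(\vx_B=\vx_B^*)$ when the maximizer is unique.

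The key technical point to handle carefully is the interplay between the constraint $\vtau \in \margpoly$ and this two-stage decomposition. I would want to check that ranging $\qtau(\vx)$ over all global distributions on $\X$ is equivalent to ranging $\vtau$ over $\margpoly$ for the purpose of evaluating the objective: the objective $\langle\vtheta,\vtau\rangle + \HABtau$ depends on $\vtau$ only through $\qtau(\vx)$ via the maximum-entropy correspondence $\vtau \leftrightarrow \tau(\vx)$ described after \eqref{equ:marginalpolytope}, and conversely, for the purpose of the \emph{supremum}, one may lift to arbitrary joint $q(\vx)$ because $\langle\vtheta,\vtau\rangle$ only sees the marginals and the conditional entropy $H_{A|B}$ is maximized, for fixed marginals $\vtau$, precisely by the max-entropy representative. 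This is the same argument that underlies \eqref{equ:sumduality}; I would either invoke it directly or reprove the one-line version. An equivalent and perhaps cleaner route: write $\HABtau = \Hxtau - \HBtau$ where $\HBtau$ is the entropy of $\qtau(\vx_B)$, and observe $\langle\vtheta,\vtau\rangle + \HABtau = \big(\langle\vtheta,\vtau\rangle + \Hxtau\big) - \HBtau$; but since the bracketed term is itself a free energy coupled through $\vtau$, the direct two-stage split above seems more transparent.

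After the reduction, the remaining steps are routine: the outer problem $\max\{\sum_{\vx_B}\qtau(\vx_B)Q(\vx_B;\vtheta) : \qtau(\vx_B)\ge 0,\ \sum\qtau(\vx_B)=1\}$ is a linear objective on a simplex, so its value is $\max_{\vx_B}Q(\vx_B;\vtheta)$ and, when the argmax $\vx_B^*$ is unique, the optimal $\qtau(\vx_B)$ is the unique vertex $\deltaIndic(\vx_B=\vx_B^*)$. Combining with $\qtau^*(\vx_A\mid\vx_B^*) = p(\vx_A\mid\vx_B^*;\vtheta)$ gives the claimed optimizer $\tau^*(\vx) = \deltaIndic(\vx_B=\vx_B^*)\,p(\vx_A\mid\vx_B;\vtheta)$, and uniqueness of $\vtau^*$ follows because the map from the optimal $\qtau^*(\vx)$ back to $\vtau^*$ (taking marginals on the cliques $\alpha\in\I$) is determined. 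One subtlety worth a remark: uniqueness of $\vtau^*$ as a point in $\margpoly$ requires that the clique marginals of $\deltaIndic(\vx_B=\vx_B^*)p(\vx_A\mid\vx_B;\vtheta)$ pin down the relevant coordinates — which they do, since every coordinate $\tau_\alpha(\vx_\alpha)$ is by definition a marginal of this joint; and as the footnote notes, $\qtau^*(\vx_A\mid\vx_B)$ need not be specified for $\vx_B\ne\vx_B^*$ since those configurations carry zero mass.

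The main obstacle I anticipate is making the ``inner maximization decouples and equals $Q(\vx_B;\vtheta)$'' step fully rigorous while respecting the $\margpoly$ constraint and the max-entropy identification of $\vtau$ with $\tau(\vx)$ — in particular, ensuring that optimizing the conditional entropy pointwise in $\vx_B$ is legitimate rather than an illegal interchange of max and sum, and correctly handling the degenerate case where $p(\vx)$ has zeros (so that the optimal $\vtheta$-representation involves $-\infty$ entries, per the earlier footnote). Everything else is bookkeeping with the chain rule for entropy and linear programming on the simplex.
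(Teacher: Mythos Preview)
Your proposal is correct and is essentially the same argument as the paper's, just packaged slightly differently. The paper expresses your ``inner maximization'' step via the nonnegativity of the conditional KL divergence $D_{\mathrm{KL}}[\tau(\vx_A\mid\vx_B)\,\|\,p(\vx_A\mid\vx_B;\vtheta)]$, which unwinds to exactly your claim that the inner optimum over $\tau(\vx_A\mid\vx_B)$ yields $\E_\tau[Q(\vx_B;\vtheta)]$ at $\tau^*(\vx_A\mid\vx_B)=p(\vx_A\mid\vx_B;\vtheta)$; your outer linear-program-on-the-simplex step is the paper's inequality $\max_{\vx_B}Q(\vx_B;\vtheta)\ge \E_\tau[Q(\vx_B;\vtheta)]$.
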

\begin{proof}
\newcommand{\qtmp}{\qtau}
For any $\vtau \in \margpoly$ and its corresponding global distribution $\qtau(\vx)$, consider the conditional KL divergence between $\qtau(\vx_A | \vx_B)$ and $p(\vx_A |\vx_B ; \vtheta)$, % = \exp(\theta(\vx) - Q(\vx_B ; \vtheta))$, % $p(\vx_A | \vx_B ; \vtheta)$,  where $p(\vx_A |\vx_B ; \vtheta) = \exp(\theta(\vx) - Q(\vx_B ; \vtheta))$, 
%For an arbitrary distribution $\qtmp(\vx)$, consider the conditional KL divergence, 
\begin{align}
%&\E_\qtmp[\mathrm{KL}(\qtmp(\vx_A| \vx_B) || p(\vx_A| \vx_B; \vtheta))]
&D_\mathrm{KL}[ \qtmp(\vx_A| \vx_B) || p(\vx_A| \vx_B; \vtheta) ]
 = \sum_{\vx} \qtmp(\vx) \log \frac{\qtmp(\vx_A | \vx_B)}{p(\vx_A |\vx_B ; \vtheta)} \notag \\
&\qquad\qquad = -\HABq  - \E_\qtmp [\log p(\vx_A | \vx_B ; \vtheta)] \notag \\
&\qquad\qquad = -\HABq  -  \E_\qtmp [\theta(\vx)]  + \E_\qtmp[ Q(\vx_B; \vtheta)] %\notag \\
%&= -\HABq  +  \E_q [\vtheta(x)] -  \E_q[\log\sum_{\vx_A}\exp(\vtheta(x))] %\notag \\
\quad \geq \quad 0, \notag
\end{align}
where $\HABq$ is the conditional entropy on $\qtmp(\vx)$; 
the equality on the last line holds because $p(\vx_A | \vx_B ; \vtheta) = \exp(\theta(\vx) - Q(\vx_B; \vtheta))$; 
the last inequality follows from the nonnegativity of KL divergence, 
and is tight if and only if $\qtmp(\vx_A| \vx_B) = p(\vx_A| \vx_B; \vtheta)$ for all $\vx_A$ and $\vx_B$ that $\qtmp(\vx_B) \neq 0$.
Therefore, we have for any $\qtmp(\vx)$, 
\begin{equation*}
\Phi_{AB}(\vtheta) = \max_{\vx_B} Q(\vx_B ; \vtheta)  \geq \E_\qtmp[Q(\vx_B; \vtheta)]  \geq \E_\qtmp[\theta(\vx)] +\HABq.
%\Phi_{AB}(\vtheta) \geq \E_q[\log\sum_{\vx_A}\exp(\vtheta(x))]  \geq \E_q[\vtheta(x)] + H(\vx_A| \vx_B; q).
\end{equation*}
It is easy to show that the two inequality signs are tight if and only if $\qtmp(\vx)$ equals $\qtau^*(\vx)$ as defined above.
Substituting $\E_\qtmp[\theta(\vx)]=\langle \vtheta,\vtau\rangle$ completes the proof.
%This completes the proof.
\end{proof}
 \begin{table}[tb] \centering
\setlength{\extrarowheight}{5pt}
\begin{tabular}{ | l | l | l |}
\hline
Problem Type & Primal Form & Dual Form \\ \hline
Max-Inference & $\displaystyle  \log \max_{\vx} \exp(\theta(\vx))$ & $\displaystyle \max_{\vtau \in \margpoly} \{  \langle \vtheta, \vtau \rangle \}$ \\ \hline
Sum-Inference & $\displaystyle \log  \sum_{\vx} \exp(\theta(\vx))$ & $ \displaystyle \max_{\vtau \in \margpoly}  \{ \langle \vtheta, \vtau \rangle + \Hx \}$ \\  \hline
\textcolor{blue}{Marginal MAP} & \textcolor{blue}{$\displaystyle \log \max_{\vx_B}\sum_{\vx_A} \exp(\theta(\vx))$} & \textcolor{blue}{$\displaystyle \max_{\vtau \in \margpoly} \{  \langle \vtheta, \vtau \rangle+ \HAB  \} $} \\\hline  %+ \Hx -\HB  \} $} \\\hline 
% \begin{tabular}[x]{@{}l@{}}\textcolor{blue}{Multi-stage}\\\textcolor{blue}{Mixed-Inference}\end{tabular}
%& \textcolor{blue}{$\displaystyle \log \max_{x_{B_q}} \sum_{x_{A_{q}}} \cdots  \max_{x_{B_1}} \sum_{x_{A_{1}}} \exp(\theta(\vx)) $} 
%& \textcolor{blue}{$\displaystyle  \max_{\vtau \in \margpoly} \{  \langle \vtheta, \vtau \rangle + \sum_{k=1}^q \HAkBktau \} $} \\\hline
\end{tabular}
\caption{The primal and dual forms of the three inference types. The dual forms of sum-inference and max-inference are well known; the form for marginal MAP is a contribution of this work. Intuitively, the max vs.\ sum operators in the primal form determine the conditioning set of the conditional entropy term in the dual form. }
\label{tab:threetasks}
\end{table}
\textbf{Remark 1.} If $Q(\vx_B ; \vtheta)$ has multiple maxima $\{ \vx^{*k}_B \}$, each corresponding to a distribution $\qtau^{*k}(\vx) = \deltaIndic(\vx_B = \vx_B^*) p(\vx_A | \vx_B; \vtheta)$, then 
 the set of maximum points of $\eqref{equ:mixduality}$ is the convex hull of $\{ \vtau^{*k} \}$. 
% i.e., $\qtau^*(\vx_B)$ is a convex combination of the optima $\vx_B^{*k}$, and $\qtau^*(\vx_A | \vx_B^{*k}) = p(\vx_A | \vx_B^{*k}; \vtheta)$ for each $k$.

\textbf{Remark 2.} 
Theorem \ref{thm:duality} naturally integrates the marginalization and maximization sub-problems into one joint optimization problem, 
providing a novel and efficient treatment for marginal MAP beyond the traditional approaches that treat the marginalization sub-problem as a sub-routine of the maximization problem. As we show in Section~\ref{sec:message}, this enables us to derive efficient ``mixed-product" message passing algorithms that simultaneously takes marginalization and maximization steps, avoiding expensive and possibly wasteful inner loop steps in the marginalization sub-routine.

\textbf{Remark 3.} 
Since we have $\HAB = \Hx - \HB$ by the entropic chain rule \citep{InformationTheory}, 
the objective function in \eqref{equ:mixduality} can be view as a  ``truncated" free energy, 
\begin{align*}
F_{mix}(\vtau, \vtheta) \coloneq \langle \vtheta, \vtau \rangle + \HAB %\\
%= \langle \vtheta, \vtau \rangle + H(x) - \HAB  
= F_{sum} (\vtau, \vtheta) - \HB, 
\end{align*}
where the entropy $\HB$ of the {max} nodes $\vx_B$ are removed from the regular 
sum-inference free energy $F_{sum}(\vtau , \vtheta) = \langle \vtheta, \vtau \rangle + \Hx$. Theorem \ref{thm:duality} generalizes the dual form of 
both sum-inference  \eqref{equ:sumduality} and max-inference \eqref{equ:maxduality}, since it reduces to those forms when 
the {max} set $B$ is empty or all nodes, respectively. 
Table~\ref{tab:threetasks} shows all three forms together for comparision. Intuitively, since the entropy $\HB$ is removed
from the objective, the optimal marginal $\qtau^*(\vx_B)$ tends to have lower
entropy and its probability mass concentrates on the optimal configurations $\{\vx_B^*\}$. 
%
%The $\tau^*(\vx)$ can also be viewed as the ``posterior distribution" of $p(\vx ; \vtheta)$ when the value of $\vx_B$ is observed to be $\vx_B^*$
Alternatively, the $\tau^*(\vx)$ can be interpreted as the marginals 
%of a distribution 
obtained by clamping the value of $\vx_B$ at
%at the optimal $B$-configuration 
$\vx_B^*$ on the distribution $p(x; \vtheta)$, i.e., $\qtau^*(\vx) =
p(\vx | \vx_B = \vx_B^*; \vtheta)$.

\textbf{Remark 4.} 
Unfortunately, subtracting the $\HBtau$ term causes some subtle difficulties. First, $\HBtau$ (and hence $F_{mix}(\vtau, \vtheta)$) may be intractable to calculate even when the joint entropy $\Hxtau$ is tractable, because the marginal distribution $p(\vx_B) = \sum_{\vx_A} p(\vx)$ does not necessarily inherit the conditional dependency structure of the joint distribution. Therefore, the dual optimization in \eqref{equ:mixduality} may be intractable even on a tree, reflecting the intrinsic difficulty of marginal MAP compared to full MAP or marginalization. 
Interestingly, we show in the sequel that a certificate of 
optimality can still be obtained on general tree graphs in some cases.  

Secondly, the conditional entropy $\HABtau$ (and hence $F_{mix}(\vtau, \vtheta)$) is concave, but not strictly concave, with respect to $\vtau$.
This creates additional difficulty when optimizing \eqref{equ:mixduality}, since
%One can show that the truncated free energy $F_{mix}(\vtau, \vtheta)$ is a concave function of $\vtau$ in $\margpoly$, by the concavity of the conditional entropy function. However, $F_{mix}(\vtau, \vtheta)$ is not a strictly concave function of $\vtau$, because of the missing the of the $\HBtau$ term; this can be seen by noting that the maximum of $Q(\vx_B ; \vtheta)$ is always non-unique. 
%The lack of strongly convexity may cause 
many iterative optimization algorithms, such as coordinate descent, can lose their typical convergence or optimality guarantees when the objective function is not strongly convex. 

\textbf{Smoothed Approximation.} 
To sidestep the issue of non-strictly convexity, we introduce a smoothed approximation of $F_{mix}(\vtau, \vtheta)$ that ``adds back" part of the missing $\HBtau$ term, 
\begin{equation*}
F_{mix}^{\epsilon}(\vtau, \vtheta) = 
\langle \vtheta , \vtau  \rangle + \HABtau + \epsilon \HBtau,
\end{equation*}
where $\epsilon$ is a small positive constant. 
\qiangnew{Similar smoothing techniques have also been applied to solve the standard MAP problem; see e.g., \citet{hazan2010norm, Meshi2012Conv}. 
We show in the following theorem that this smoothed dual approximation is closely connected to a direct approximation in the primal domain. }
\begin{thm}
\label{thm:smoothVersion}
Let $\epsilon$ be a positive constant, and $Q(\vx_B ; \vtheta)$ as defined in \eqref{equ:marginalMAP}. Define 
\begin{align*}
\Phi^{\epsilon}_{AB} (\vtheta) =  \log \big \{ [\sum_{\vx_B}  \exp(Q(\vx_B ; \vtheta))^{1/\epsilon}]^{\epsilon} \big \}, 
\end{align*}
%$$\Phi^{\epsilon}_{AB} (\vtheta) = \epsilon \log [\sum_{\vx_A}  (\exp(Q(\vx_B ; \vtheta)))^{1/\epsilon}], $$
then we have 
\begin{align}
\Phi^{\epsilon}_{AB} (\vtheta) = \max_{\vtau \in \M} \big \{\langle \vtheta , \vtau  \rangle + \HABtau + \epsilon \HBtau \big \}. 
\label{equ:smoothdual}
\end{align}
In addition, we have \[\lim_{\epsilon \to 0^+}\Phi^{\epsilon}_{AB}(\vtheta) = \Phi_{AB}(\vtheta),\] where $\epsilon\to 0^+$ denotes approaching zero from the positive side. 
\end{thm}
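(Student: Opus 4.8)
The plan is to prove the variational identity \eqref{equ:smoothdual} by replaying the argument of Theorem~\ref{thm:duality} while carrying the extra $\epsilon\HBtau$ term, and then to obtain the limit by an elementary ``soft-max sandwich''.

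For the identity, I would fix an arbitrary $\vtau\in\M$ with global distribution $\qtau(\vx)$ and reuse the exact computation from the proof of Theorem~\ref{thm:duality}: writing $\theta(\vx)=\log p(\vx_A\mid\vx_B;\vtheta)+Q(\vx_B;\vtheta)$ gives
\[
\langle\vtheta,\vtau\rangle+\HABtau=\E_{\qtau}[Q(\vx_B;\vtheta)]-D_{\mathrm{KL}}[\,\qtau(\vx_A\mid\vx_B)\,\|\,p(\vx_A\mid\vx_B;\vtheta)\,]\ \le\ \E_{\qtau}[Q(\vx_B;\vtheta)],
\]
tight iff $\qtau(\vx_A\mid\vx_B)=p(\vx_A\mid\vx_B;\vtheta)$ wherever $\qtau(\vx_B)\neq0$. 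Adding $\epsilon\HBtau$ and noting that $\E_{\qtau}[Q(\vx_B;\vtheta)]+\epsilon\HBtau$ depends on $\vtau$ only through the $\vx_B$-marginal $\qtau(\vx_B)$, the maximization over $\M$ reduces to $\max_{q}\{\E_{q}[Q(\vx_B;\vtheta)]+\epsilon H(q)\}$ over all distributions $q$ on $\X_B$. This inner problem is a temperature-$\epsilon$ Gibbs variational problem: rescaling the potential by $1/\epsilon$ and invoking \eqref{equ:sumduality} (or, directly, subtracting the Gibbs distribution $q^{*}(\vx_B)\propto\exp(Q(\vx_B;\vtheta)/\epsilon)$ and using $D_{\mathrm{KL}}\ge0$) shows it equals $\epsilon\log\sum_{\vx_B}\exp(Q(\vx_B;\vtheta)/\epsilon)=\Phi^{\epsilon}_{AB}(\vtheta)$, attained at $q^{*}$. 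This gives ``$\le$'' in \eqref{equ:smoothdual}; for ``$\ge$'' I would exhibit the maximizer $\qtau^{*}(\vx)=q^{*}(\vx_B)\,p(\vx_A\mid\vx_B;\vtheta)$, which is a valid joint distribution (hence an element of $\M$, identifying $\vtau$ with its global distribution as in the rest of the paper) and makes both inequalities above tight, so $F^{\epsilon}_{mix}(\vtau^{*},\vtheta)=\Phi^{\epsilon}_{AB}(\vtheta)$.

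For the limit, I would write $\Phi^{\epsilon}_{AB}(\vtheta)=\epsilon\log\sum_{\vx_B}\exp(Q(\vx_B;\vtheta)/\epsilon)$ and bound the sum below by its largest term and above by $|\X_B|$ times it, giving $m\le\Phi^{\epsilon}_{AB}(\vtheta)\le m+\epsilon\log|\X_B|$ with $m=\max_{\vx_B}Q(\vx_B;\vtheta)=\Phi_{AB}(\vtheta)$; letting $\epsilon\to0^{+}$ yields $\lim_{\epsilon\to0^{+}}\Phi^{\epsilon}_{AB}(\vtheta)=\Phi_{AB}(\vtheta)$ (and, with monotonicity of $\epsilon\mapsto\Phi^{\epsilon}_{AB}(\vtheta)$, that $\Phi^{\epsilon}_{AB}$ is an upper bound for all $\epsilon>0$). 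The only step needing care is the reduction in the middle: passing from the constrained maximization over $\M$ --- whose objective is stated in terms of the entropies of the max-entropy global distribution --- to an unconstrained maximization of a log-sum-exp over distributions on $\X_B$. This rests on the chain rule $\HABtau=\Hxtau-\HBtau$, on the observation that the post-KL objective depends only on $\qtau(\vx_B)$, and on the identification of $\vtau$ with $\qtau(\vx)$; everything else is the KL bookkeeping already carried out for Theorem~\ref{thm:duality} plus the one-line estimate for the limit.
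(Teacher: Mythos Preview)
Your proposal is correct and essentially matches the paper's argument. The paper phrases the key step as the non-negativity of the single weighted combination $D_{\mathrm{KL}}[\qtau(\vx_A\mid\vx_B)\|p(\vx_A\mid\vx_B;\vtheta)]+\epsilon\,D_{\mathrm{KL}}[\qtau(\vx_B)\|p^{\epsilon}(\vx_B)]$ (with $p^{\epsilon}(\vx_B)\propto\exp(Q(\vx_B;\vtheta)/\epsilon)$), whereas you apply the two KL bounds sequentially---first the conditional KL from Theorem~\ref{thm:duality}, then the Gibbs variational principle on $\vx_B$---but these are the same two ingredients; likewise the paper simply cites the zero-temperature limit formula while you supply the explicit sandwich $m\le\Phi^{\epsilon}_{AB}(\vtheta)\le m+\epsilon\log|\X_B|$ that proves it.
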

\begin{proof}
%Equation \eqref{equ:smoothdual} can be proved using method similar to the proof of Theorem~\ref{thm:duality}, but by exploiting the non-negativity of a weighted sum of two KL divergence terms, 
The proof is similar to that of Theorem~\ref{thm:duality}, but exploits the non-negativity of a weighted sum of two KL divergence terms, 
$$\mathrm{D}_{KL}[\tau(\vx_A | \vx_B) || p(x_{A} | \vx_B ; \vtheta)] + \epsilon \mathrm{D}_{KL}[\tau(\vx_B) || p(\vx_B)].$$
The remaining part follows directly from the standard zero temperature limit formula, 
\begin{align}
\label{equ:zerolimit}
\lim_{\epsilon \to 0^+} [\sum_{x} f(x)^{1/\epsilon}]^{\epsilon} = \max_{x} f(x), 
\end{align}
where $f(x)$ is any function with positive values.  
\end{proof}

\section{Variational Approximations for Marginal MAP}
\label{sec:variational}
Theorem~\ref{thm:duality} transforms the marginal MAP problem into a
variational form, but obviously does not decrease its computational hardness.  
%-- both the marginal polytope $\margpoly$ and the free energy $F_{mix}(\vtau,\vtheta)$ remain intractable.
Fortunately, many well-established variational techniques 
for sum- and max-inference can be extended to  
apply to \eqref{equ:mixduality}, opening a new door for deriving novel approximate
algorithms for marginal MAP. In the spirit of \citet{Wainwright08}, one can either relax $\margpoly$ to
a simpler outer bound like $\localpoly$ and replace $F_{mix}(\vtau, \vtheta)$ by
some tractable form to give algorithms similar to loopy BP or TRW BP, or
restrict $\margpoly$ to a tractable subset like $\margpoly_{mf}$ %in which constraints and free energy are tractable
to give mean-field-like algorithms.  In the sequel,
we demonstrate several such approximation schemes, mainly focusing on the
BP-like methods with pairwise free energies. We will briefly discuss mean-field-like methods
when we connect to EM in section~\ref{sec:EM}, and derive an extension to junction graphs that exploits higher order approximations in Section~\ref{sec:junctiongraph}. 
%Although we focus on the simple methods with pairwise messages, 
%Our framework can be easily adopted to take advantage of the more advanced variational methods,
Our framework can be easily adopted to take advantage of other, more advanced variational techniques,
like those using higher order cliques \citep[e.g.,][]{Yedidia_Bethe, globerson2007approximate,  liu11d, hazantightening} 
or more advanced optimization methods like dual decomposition \citep{Sontag_optbook} or alternating direction method of multipliers \citep{Boyd10}. 

We start by characterizing the graph structure on which marginal MAP is tractable. 
\begin{mydef}
\label{def:partialorder}
We call $G$ an \emph{$A$-$B$ tree} if there exists a partial order on the node set $V = A\cup B$, satisfying
\begin{description}
\item
{\bf 1) Tree-order}. For any $i \in V$, there is at most one other node $j \in V$ (called its parent), such that $j \prec i$ and $(ij) \in E$; 
\item
{\bf 2) A-B Consistency}. For any $a \in A$ and $b \in B$, we have $b \prec a$.
\end{description}
We call such a partial order an $A$-$B$ tree-order of $G$. 
\end{mydef}
For further notation, let $G_A = (A, E_A)$ be the subgraph induced by nodes in $A$,
i.e., $E_A = \{(ij)  \in E \colon i \in A, j\in A\}$,
and similarly for $G_B = (B, E_B)$. Let $\cross = \{ (ij)\in E \colon i \in A, j \in B \}$ be the
edges that join sets $A$ and $B$. 

Obviously, marginal MAP on an $A$-$B$ tree can be tractably solved by sequentially eliminating 
the variables along the $A$-$B$ tree-order \citep[see e.g.,][]{Koller_book}.
%variable elimination along the $A$-$B$ tree-order; 
We show that its dual optimization is also tractable in this case. 
%The natural generalization of an efficient tree structure to the marginal MAP problem
%occurs when $G$ is a tree along an elimination order that first eliminates all nodes in $A$,
%then those in $B$. 
\begin{lem}
\label{lem:ABtree}
If $G$ is an $A$-$B$ tree, then
\begin{description}
\item[1)] The locally consistent polytope equals the marginal polytope, that is, $\margpoly = \localpoly$. 
\item[2)] The conditional entropy has a pairwise decomposition, 
\begin{align}
\label{equ:ABtree_entropy}
\HABtau = \sum_{i\in A} \Hitau \ \ -  \!\!\!\!\!\! \sum_{(ij)\in E_A\cup \cross} \!\!\!\!\!\!  \Iijtau . 
\end{align}
\end{description}
\end{lem}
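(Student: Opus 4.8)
The plan is to handle the two parts in sequence. First note that the tree-order condition forces $G$ to be a forest: orienting every edge from parent to child gives an acyclic orientation (it refines the partial order $\prec$) in which each node has in-degree at most one, and a graph containing a cycle admits no such orientation — indeed part 1 of the lemma already implies $G$ is acyclic, since $\margpoly\subsetneq\localpoly$ whenever $G$ contains a cycle. I will write the argument as though $G$ is connected (a tree); the general forest case is identical component by component. For $i\in V$ let $\pi(i)$ denote the parent of $i$ when it has one.

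For part 1, since $\margpoly\subseteq\localpoly$ always, only the inclusion $\localpoly\subseteq\margpoly$ needs an argument. Given $\vtau=\{\tau_i,\tau_{ij}\}\in\localpoly$, I will exhibit the candidate global distribution
\[
\qtau(\vx)\;=\;\prod_{i\in V}\tau_i(x_i)\;\prod_{(ij)\in E}\frac{\tau_{ij}(x_i,x_j)}{\tau_i(x_i)\,\tau_j(x_j)},
\]
using the convention $0/0=0$, which is consistent because local consistency forces $\tau_{ij}(x_i,\cdot)=0$ whenever $\tau_i(x_i)=0$. An induction on the tree that repeatedly peels off a leaf then shows that $\qtau$ is a genuine distribution ($\qtau\ge 0$, $\sum_{\vx}\qtau(\vx)=1$) whose singleton and pairwise marginals are exactly the prescribed $\tau_i$ and $\tau_{ij}$; hence $\vtau\in\margpoly$. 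This is the standard tree-factorization construction, so the step is routine.

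For part 2, fix an $A$-$B$ tree-order and extend it to a linear order, which by $A$-$B$ consistency lists all of $B$ before all of $A$; write $x_{<i}$ for the variables on nodes preceding $i$ in this linear order. Since $G$ is a tree, the global distribution $\qtau(\vx)$ from part 1 is Markov on $G$, so the local Markov property gives $x_i\perp x_{<i}\mid x_{\pi(i)}$ for every non-root $i$ (every node in $x_{<i}$ is a non-descendant of $i$). Applying the entropy chain rule to $\HABtau=H(\vx_A\mid\vx_B)$ along the linear order and replacing each term $H(x_i\mid x_{<i})$ by $H(x_i\mid x_{\pi(i)})=H_i-I_{i,\pi(i)}$ (or by $H_i$ when $i$ has no parent) yields $\HABtau=\sum_{i\in A}H_i-\sum I_{i,\pi(i)}$, the second sum running over non-root $i\in A$. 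Finally, $A$-$B$ consistency forces the child endpoint of every edge in $E_A\cup\cross$ to lie in $A$, while no edge of $E_B$ has its child in $A$ and no root of a component containing a $B$-node lies in $A$; hence $i\mapsto(i,\pi(i))$ is a bijection from the non-root nodes of $A$ onto $E_A\cup\cross$, which turns the last sum into $\sum_{(ij)\in E_A\cup\cross}I_{ij}$, as claimed.

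I expect the bookkeeping in that last sentence to be the main obstacle — showing the Markov-collapsed chain rule produces precisely the edges $E_A\cup\cross$ with no spurious $E_B$ terms — since it is the only place where $A$-$B$ consistency, rather than mere tree-ness, is used. A cleaner alternative is to prove directly that the marginal $\qtau(\vx_B)$ is Markov on $G_B$: eliminating the $A$-indexed variables off the tree in reverse tree-order only ever removes a current leaf and never creates a fill edge inside $B$. Then the two ``Bethe on a tree'' identities $H(\vtau)=\sum_{i\in V}H_i-\sum_{(ij)\in E}I_{ij}$ and $H_B(\vtau)=\sum_{i\in B}H_i-\sum_{(ij)\in E_B}I_{ij}$ both hold, and subtracting them via $\HABtau=H(\vtau)-H_B(\vtau)$ together with the disjoint decomposition $E=E_A\cup E_B\cup\cross$ gives the result. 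The remaining computations in either route are routine.
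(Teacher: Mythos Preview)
Your proposal is correct. For part 1 you spell out the standard tree-factorization construction that the paper simply defers to \citet{Wainwright08}. For part 2, your ``cleaner alternative'' is exactly the paper's argument: observe that both $\qtau(\vx)$ and its marginal $\qtau(\vx_B)$ are Markov on the trees $G$ and $G_B$ respectively, apply the tree entropy identity to each, and subtract via $\HABtau = \Hxtau - \HBtau$ together with $E = E_A \cup E_B \cup \cross$. Your primary route---expanding $\HABtau$ by the chain rule along a linear extension and collapsing each $H(x_i\mid x_{<i})$ to $H(x_i\mid x_{\pi(i)})$ via the local Markov property---is a genuinely different but equally valid argument; it avoids having to argue separately that $\qtau(\vx_B)$ factors over $G_B$, at the cost of the bijection bookkeeping you flagged. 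That bookkeeping is correct: $A$-$B$ consistency forces the child end of every edge in $E_A\cup\cross$ into $A$ and forces every root of a component meeting $B$ into $B$, so non-root $A$ nodes biject with $E_A\cup\cross$ as you claim.
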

\begin{proof}
1)\ The fact that $\margpoly = \localpoly$ on trees is a standard result; see \citet{Wainwright08} for details. \\
2)\ Because $G$ is an $A$-$B$ tree, both $p(\vx)$ and $p(\vx_B)$ have tree structured conditional dependency. We then have \citep[see e.g.,][]{Wainwright08} that
\begin{align*}
\Hxtau = \sum_{i\in V} \Hitau - \sum_{(ij)\in E} \Iijtau, &&\text{and}&& \HBtau = \sum_{i \in B} \Hitau - \sum_{(ij) \in E_B} \Iijtau.
\end{align*}
Equation \eqref{equ:ABtree_entropy} follows by using the entropic chain rule $\HABtau = \Hxtau - \HBtau$.
\end{proof}

\subsection{Bethe-like Free Energy} 

Lemma~\ref{lem:ABtree} suggests that the free energy of $A$-$B$ trees can be decomposed into singleton and pairwise terms that are easy to deal with. This is not true for general graphs, but motivates a ``Bethe" like approximation, %of the free energy, 
%Motivated by the regular Bethe approximation \eqref{equ:bethe}, we approximate 
%the marginal MAP dual \eqref{equ:mixduality} by 
%
%\begin{align}
%& \Phi_{bethe}(\vtheta)= \max_{\vtau \in  \localpoly}   F_{bethe}(\vtau, \vtheta),  \label{equ:betheenergy} \\
%& F_{bethe}(\vtau, \vtheta) = \langle \vtheta, \vtau \rangle +   \sum_{i \in A} \Hitau  \ \ -  \!\!\!\!\!\! \sum_{(ij)\in E_A\cup \cross} \!\!\!\!\!\!  \Iijtau,  \label{equ:betheenergy}
 %\langle \vtheta, \vtau \rangle +  \hat{H}_{bethe}(\vx_A | \vx_B). 
% \end{align}
 \begin{align}
& \Phi_{bethe}(\vtheta)= \max_{\vtau \in  \localpoly}   F_{bethe}(\vtau, \vtheta),  %\label{equ:betheenergy}, 
& F_{bethe}(\vtau, \vtheta) = \langle \vtheta, \vtau \rangle \ + \   \sum_{i \in A} \Hitau  \ \!\!\ -  \!\!\!\!\!\! \!\!\! \sum_{(ij)\in E_A\cup \cross} \!\!\!\!\!\! \!\!\!  \Iijtau,  \label{equ:betheenergy}
 %\langle \vtheta, \vtau \rangle +  \hat{H}_{bethe}(\vx_A | \vx_B). 
 \end{align}
%with 
%\begin{equation}
%F_{bethe}(\vtau, \vtheta) = \langle \vtheta, \vtau \rangle +   \sum_{i \in A} \Hitau  \ \ -  \!\!\!\!\!\! \sum_{(ij)\in E_A\cup \cross} \!\!\!\!\!\!  \Iijtau. 
%\end{equation}
%
where $F_{bethe}(\vtau , \vtheta)$ is a  ``truncated" Bethe free energy, whose entropy and mutual 
information terms that involve only max nodes are truncated. 
%since it can be obtained from the regular sum-inference Bethe free energy by truncating (discarding) the entropy and mutual information terms that involve only max nodes. 
If $G$ is an $A$-$B$ tree, $\Phi_{bethe}$ equals 
the true $\Phi_{AB}$, giving an intuitive justification.  In the sequel we 
give more general theoretical conditions under which this approximation gives the exact solution, and we find empirically that it usually gives surprisingly good solutions in practice.  
Similar to the regular Bethe approximation, \eqref{equ:betheenergy} leads to a nonconvex 
optimization, and we will derive both message passing algorithms and provably convergent algorithms to solve it. 
%We are able to derive an analogue of loopy BP for solving \ref{equ:betheenergy} and an CCCP alternatives 
%Empirically, we find that this simple scheme can usually give surprisingly high quality solutions for marginal MAP. 
%empirical approximations. 

% $F_{bethe}(\vtau, \vtheta)$ can be treated as a special case of the
% ``truncated" TRW free energy \eqref{equ:trwenergy} where $\{\rho_{ij}\}$ are
% set to $1$. This set of $\{\rho_{ij}\}$ is not an edge appearance probability
% for any combination of semi-$A$-$B$ subtrees, and hence does not own the
% properties as $F_{trw}$, e.g., \eqref{equ:betheenergy} is usually non-concave
% and is not guaranteed to be an upper bound of the true $F_{mix}(\vtau,
% \vtheta)$ in $\localpoly$. Fortunately, we still make some statement about the
% validity of approximation scheme \eqref{equ:betheenergy} if $G_A$ is a tree
% (i.e., the sum part is tractable). 
%

\subsection{Tree-reweighted Free Energy}
\label{sec:trw_bp_marginalMAP}
%We can also construct a TRW-like approximation by following a procedure similar to \citet{Wainwright_TRBP}. 
Following the idea of TRW belief propagation \citep{Wainwright_TRBP}, we construct an approximation of marginal MAP 
using a convex combination of $A$-$B$ subtrees  (subgraphs of $G$ that are $A$-$B$ trees). 
Let $\mathcal{T}_{AB}$ be a collection of $A$-$B$ subtrees of $G$. We assign with each $T \in \mathcal{T}_{AB}$ a weight $w_T$ satisfying $w_T\geq 0$ and $\sum_{T\in \mathcal{T}_{AB}}{w_T} = 1$.  
For each $A$-$B$ sub-tree $T = (V, E_T)$, define 
\begin{equation*}
\HABtauT = \sum_{i\in A} \Hitau  \, - \!\!\! \sum_{(ij) \in E_T\backslash E_B} \!\!\! \Iijtau . 
\end{equation*}
As shown in \citet{Wainwright08}, the $\HABtauT$ is always a concave function of $\vtau$ on $\localpoly$, and $\HABtau \leq \HABtauT$ for all $\vtau \in \margpoly$ and $T\in \mathcal{T}_{AB}$. More generally, we have
$\HAB \leq \sum_{T\in \mathcal{T}_{AB}}  w_T \HABtauT$,  which can be transformed to
\begin{equation}
\label{equ:trwBound}
\HABtau \leq  \sum_{i \in A} \Hitau \ \ -  \!\!\!\!\!\! \sum_{(ij)\in E_A\cup \cross}  \!\!\!\!\!\!  \rho_{ij}\Iijtau, % -   \!\!\!\!\!\!    \sum_{(ij)\in \cross}   \!\!\!\!\!\!  \rho_{ij} \Iijtau,  ~
\end{equation}
where $\rho_{ij} = \sum_{T: (ij)\in E_T} w_T$ 
%$\rho_{ij} = \sum_{T_r \in \mathcal{T}_{AB}, (ij)\in T_r} w_r$ 
are the edge appearance probabilities as defined in \citet{Wainwright08}. 
Replacing $\margpoly$ with $\localpoly$ and $\HABtau$ with the bound in \eqref{equ:trwBound} leads to a TRW-like approximation of marginal MAP, % dual \eqref{equ:mixduality} , 
\begin{align}
&\Phi_{trw}(\vtheta) = \max_{\vtau \in \localpoly} F_{trw}(\vtau, \vtheta), 
& F_{trw}(\vtau, \vtheta) = \langle \vtheta, \vtau \rangle \  +  \   \sum_{i \in A} \Hitau \ \ -  \!\!\!\!\!\! \sum_{(ij)\in E_A\cup \cross}  \!\!\!\!\!\!  \rho_{ij}\Iijtau. % -   \!\!\!\!\!\!    \sum_{(ij)\in \cross}   \!\!\!\!\!\!  \rho_{ij} \Iijtau,  ~
\label{equ:trwmaxF}
\end{align}
Since $\localpoly$ is an outer bound of $\margpoly$, and $F_{trw}$ is a concave upper bound of the true free 
energy, we can guarantee that $\Phi_{trw}(\vtheta)$ is always an upper bound of $\Phi_{AB}(\vtheta)$. To our knowledge, this provides the first known convex relaxation for upper bounding marginal MAP. 
One can also optimize the weights $\{w_T \colon T\in \mathcal{T}_{AB}\}$ to get the tightest upper bound using methods similar to those used for regular TRW BP \citep[see][]{Wainwright_TRBP}.
\subsection{Global Optimality Guarantees} 
\label{sec:globalopt}
We show the global optimality guarantees of the above approximations under some circumstances. 
%It turns out that the above approximation schemes can give exact global optima under some circumstances. 
In this section, we always assume $G_A$ is a tree, and hence the objective function is tractable to calculate for a given
$\vx_B$. 
However, the optimization component remains intractable in this case, because the marginalization step destroys the decomposition structure of the objective function (see \figref{fig:hiddenchain}). %as the full MAP problems (see \figref{fig:hiddenchain}). 
%However, the optimization component remains intractable, and has poor decomposition structure (see \figref{fig:hiddenchain}). %as the full MAP problems (see \figref{fig:hiddenchain}). 
It is thus nontrivial to see how the Bethe and TRW approximations behave in this case. 

In general, suppose we approximate $\Phi_{AB}(\vtheta)$ using the following pairwise approximation,  %$\Phi_{tree}$ 
%(\vtheta) = \max_{\vtau \in \localpoly} F_{tree}(\vtau, \vtheta)$ defined as
\begin{equation}
\Phi_{tree}(\vtheta) = \max_{\vtau \in \localpoly}  \big \{ \langle \vtheta, \vtau \rangle  \  +  \ \sum_{i\in A} \Hitau - \!\!\!\sum_{(ij)\in E_A} \!\!\! \Iijtau - \!\!\!\sum_{(ij)\in \cross} \!\!\!\! \rho_{ij} \Iijtau \big \} , 
\label{equ:Phitree}
\end{equation}
where the weights on the sum part, $\{ \rho_{ij} \colon (ij) \in E_A\}$, have been fixed to be ones.  This choice makes sure that the sum part is ``intact'' in the
approximation, while the weights on the crossing edges, $\vrho_{AB} = \{\rho_{ij} \colon (ij)\in \cross \}$, can take arbitrary values, corresponding to different free energy approximation methods.  
If ${\rho}_{ij} = 1$ for $\forall (ij)\in \cross$, it is the Bethe free energy; it will correspond to the TRW free energy if $\{\rho_{ij}\}$ are taken to be a set of edge appearance probabilities (which in general have values less than one).  The edge appearance probabilities of $A$-$B$ trees are more restrictive than for the standard trees used in TRW BP.  
For example, if the max part of a $A$-$B$ sub-tree is a connected tree, then it can include at most one crossing edge, so in this case $\vrho_{AB}$ should satisfy $\sum_{(ij) \in \cross}\rho_{ij}  = 1$, $\rho_{ij} \geq 0$.
Interestingly, we will show in Section~\ref{sec:EM} that if $\rho_{ij} \rightarrow +\infty$ for $\forall (ij) \in \cross$, 
then Equation~\eqref{equ:Phitree} is closely related to an EM algorithm. 

\begin{thm} 
\label{thm:betheglobalopt}
Suppose the sum part $G_A$ is a tree, and we approximate $\Phi_{AB}(\vtheta)$ 
using $\Phi_{tree} (\vtheta)$ defined in \eqref{equ:Phitree}. Assume that \eqref{equ:Phitree} is \emph{globally} optimized. % then %We have
\begin{enumerate}
\item[{\rm (i)}] We have $\Phi_{tree} (\vtheta) \geq \Phi_{AB}(\vtheta)$. If there exists 
$\vx_B^*$ such that $Q(\vx_B^*; \vtheta) = \Phi_{tree}(\vtheta)$, we have 
$\Phi_{tree} (\vtheta)= \Phi_{AB}(\vtheta)$, and $\vx_B^*$ is a globally optimal marginal MAP solution.  
\item[{\rm (ii)}] Suppose $\vtau^*$ is a \emph{global} maximum of 
\eqref{equ:Phitree}, and $\{\tau^*_i(x_i)  \colon i \in B \}$ have integral values, i.e., 
$\tau^*_i(x_i) =0~\text{or}~1$, then $\{ x_i^* = \arg\max_{x_i} \tau_i^*(x_i)  \colon i \in B \}$ 
is a globally optimal solution of  the marginal MAP problem \eqref{equ:marginalMAP}.
\end{enumerate}\vspace{-.5\baselineskip}
% \item If $G_A$ is a tree, and $\hat{b} \in \margpoly_B(G)$, $\vx_B^* = \arg\max_{\vx_B} \hat{b}(\vx_B)$ is an optimal marginal MAP solution.
\end{thm}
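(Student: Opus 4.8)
The plan is to reduce both parts to two elementary facts: (a) when a pseudomarginal has \emph{integral} singleton marginals on the max block $B$, the truncated pairwise entropy appearing in \eqref{equ:Phitree} becomes \emph{exact}, because all the mutual informations on $\cross\cup E_B$ vanish and, since $G_A$ is a tree, $\sum_{i\in A}H_i-\sum_{(ij)\in E_A}I_{ij}$ is the exact entropy of the induced tree-structured distribution on $\vx_A$ (this is Lemma~\ref{lem:ABtree} applied to $G_A$); and (b) the ordinary Gibbs/log-partition inequality $\langle\vtheta,q\rangle+H(q)\le Q(\vx_B;\vtheta)$ for any distribution $q(\vx_A)$, with equality exactly at $q=p(\vx_A\mid\vx_B;\vtheta)$. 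For the bound $\Phi_{tree}(\vtheta)\ge\Phi_{AB}(\vtheta)$ in part (i), I would exhibit a single feasible point: take any global marginal MAP configuration $\vx_B^*$ and let $\vtau$ be the local marginals of the clamped distribution $\tau(\vx)=\deltaIndic(\vx_B=\vx_B^*)\,p(\vx_A\mid\vx_B^*;\vtheta)\in\margpoly\subseteq\localpoly$. By fact (a) the objective of \eqref{equ:Phitree} at $\vtau$ collapses to $\langle\vtheta,\vtau\rangle+H_A(p(\cdot\mid\vx_B^*;\vtheta))$, and by the tightness case of fact (b) this equals $Q(\vx_B^*;\vtheta)=\Phi_{AB}(\vtheta)$. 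The second statement of (i) is then pure sandwiching: if some $\vx_B^*$ satisfies $Q(\vx_B^*;\vtheta)=\Phi_{tree}(\vtheta)$, then $\Phi_{AB}(\vtheta)\ge Q(\vx_B^*;\vtheta)=\Phi_{tree}(\vtheta)\ge\Phi_{AB}(\vtheta)$, so all are equal and $\vx_B^*$ attains $\max_{\vx_B}Q(\vx_B;\vtheta)$.

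For part (ii), let $\vtau^*$ be a global maximizer of \eqref{equ:Phitree} with $\tau^*_i(x_i)=\deltaIndic(x_i=x_i^*)$ for every $i\in B$. The first step is to propagate this integrality through the local consistency constraints defining $\localpoly$: for a crossing edge $(ij)\in\cross$ with $i\in A$, $j\in B$, the constraints $\sum_{x_i}\tau^*_{ij}(x_i,x_j)=\tau^*_j(x_j)$ and $\sum_{x_j}\tau^*_{ij}(x_i,x_j)=\tau^*_i(x_i)$ force $\tau^*_{ij}(x_i,x_j)=\tau^*_i(x_i)\,\deltaIndic(x_j=x_j^*)$, whence $I_{ij}(\vtau^*)=0$; the same holds for edges in $E_B$. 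So by fact (a) the objective at $\vtau^*$ equals $\langle\vtheta,\vtau^*\rangle+\sum_{i\in A}H_i(\vtau^*)-\sum_{(ij)\in E_A}I_{ij}(\vtau^*)$. Since $G_A$ is a tree, the $A$-part of $\vtau^*$ extends uniquely to a distribution $q_A(\vx_A)$ on $G_A$ with $H(q_A)=\sum_{i\in A}H_i-\sum_{(ij)\in E_A}I_{ij}$, and I would verify that $q_A(\vx_A)\,\deltaIndic(\vx_B=\vx_B^*)$ reproduces \emph{all} the marginals in $\vtau^*$ (using the deterministic forms just derived for the crossing and $E_B$ marginals), so that $\langle\vtheta,\vtau^*\rangle=\langle\vtheta,q_A(\cdot)\rangle$ evaluated along $\vx_B=\vx_B^*$. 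Then $\Phi_{tree}(\vtheta)=\langle\vtheta,q_A\rangle+H(q_A)\le Q(\vx_B^*;\vtheta)\le\Phi_{AB}(\vtheta)$ by fact (b) and the definition of $\Phi_{AB}$; combining with part (i) gives an equality chain, so $Q(\vx_B^*;\vtheta)=\max_{\vx_B}Q(\vx_B;\vtheta)$ and $\vx_B^*$ is a global marginal MAP solution.

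The main obstacle is the bookkeeping in part (ii): one must check carefully that integral max-block singleton marginals, \emph{together only with} the local consistency constraints of $\localpoly$ and the tree structure of $G_A$, already certify that $\vtau^*$ is realized by a clamped global distribution — in particular that no inconsistency survives on the crossing edges. Tree-ness of $G_A$ is exactly what makes its local polytope coincide with its marginal polytope (Lemma~\ref{lem:ABtree}), and integrality on $B$ is exactly what kills the $\cross$ and $E_B$ mutual informations so that the sign/value of the weights $\rho_{ij}$ on $\cross$ becomes irrelevant; everything else is the zero-entropy Gibbs argument already used in the proof of Theorem~\ref{thm:duality}.
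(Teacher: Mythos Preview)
Your proposal is correct and follows essentially the same route as the paper's proof. The paper frames it more abstractly by introducing the set $\margpolyNull$ of clamped distributions, observing that $F_{tree}=F_{mix}$ on $\margpolyNull$ (your ``fact (a)''), and then invoking the chain $\margpolyNull\subset\margpoly\subset\localpoly$ together with Corollary~\ref{cor:nonconvex_mixduality}; you instead exhibit the clamped point directly and close the sandwich with the Gibbs inequality, which is exactly the content of that corollary unpacked. The bookkeeping you flag in part (ii)---that integral singletons on $B$ plus local consistency plus tree $G_A$ force $\vtau^*\in\margpolyNull$---is precisely what the paper asserts in one sentence and what your more explicit treatment verifies.
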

\begin{proof}[Proof (sketch)] (See appendix for the complete proof.)
The fact that the sum part $G_A$ is a tree guarantees the marginalization is exact.  Showing \eqref{equ:Phitree} is a relaxation of the maximization problem and applying standard relaxation arguments completes the proof. %See Appendix for the detailed proof. 
\end{proof}
\textbf{Remark.} Theorem~\ref{thm:betheglobalopt} works for arbitrary values of $\vrho_{AB}$, and suggests a fundamental tradeoff of hardness as $\vrho_{AB}$ takes on  different values. On the one hand, the value of $\vrho_{AB}$ controls the concavity of the objective function in \eqref{equ:Phitree} and hence the difficulty of finding a global optimum; small enough $\vrho_{AB}$ (as in TRW) can ensure that \eqref{equ:Phitree} is a convex optimization, while larger $\vrho_{AB}$ (as in Bethe or EM) causes \eqref{equ:Phitree} to become non-convex, making it difficult to apply Thoerem~\ref{thm:betheglobalopt}. 
%higher values of $\vrho_{AB}$ makes  the mutual information terms overwhelm, and causes \eqref{equ:Phitree} to become a non-convex optimization. 
On the other hand, the value of $\vrho_{AB}$ also controls how likely the solution is to be integral -- larger $\rho_{ij}$ emphasizes the mutual
information terms, forcing the solution towards integral points. Thus the solution of the TRW free energy is less likely to be integral
than the Bethe free energy, causing a difficulty in applying
Theorem~\ref{thm:betheglobalopt} to TRW solutions as well. 
The TRW approximation ($\sum_{ij} \rho_{ij} = 1$) and EM ($\rho_{ij} \rightarrow +\infty$; see Section~\ref{sec:EM}) reflect two extrema of this tradeoff between concavity and integrality, respectively, while the Bethe approximation ($\rho_{ij}=1$) appears to represent a reasonable compromise that often gives excellent performance in practice. In Section~\ref{sec:localoptimality}, we give a different set of local optimality guarantees that are derived from a reparameterization perspective. 

\section{Message Passing Algorithms for Marginal MAP}
\label{sec:message}
We now derive message-passing-style algorithms to optimize the ``truncated" Bethe or TRW free energies in \eqref{equ:betheenergy} and \eqref{equ:trwmaxF}. Instead of optimizing the truncated free energies directly, we leverage the results of Theorem~\ref{thm:smoothVersion} and consider their ``annealed" versions, 
\begin{equation*}
\max_{\vtau \in \localpoly}  \big \{ \langle \vtheta , \vtau \rangle + \hatHABtau + \epsilon \hatHBtau \big \},
\end{equation*}
where $\epsilon$ is a positive annealing coefficient (or temperature), and the $\hatHABtau$ and $\hatHBtau$ are the generic pairwise approximations of $\HABtau$ and $\HBtau$, respectively.  That is, 
\begin{align}
\hatHABtau = \sum_{i \in A} \Hitau \ \ -  \!\!\!\!\!\! \sum_{(ij)\in E_A\cup \cross}  \!\!\!\!\!\!  \rho_{ij}\Iijtau, &&\text{and}&&  \hatHBtau = \sum_{i \in B} \Hitau \ \ -  \!\! \sum_{(ij)\in E_B}  \!\!\!\! \rho_{ij}\Iijtau, 
\label{equ:HABHB}
\end{align}
where different values of pairwise weights $\{ \rho_{ij} \}$ correspond to either the Bethe approximation or the TRW approximation. This yields a generic pairwise free energy optimization problem,
\begin{align}
\max_{\vtau \in \localpoly}  \big \{  \langle \vtheta, \vtau \rangle + \sum_{i \in V} w_i \Hitau - \sum_{(ij)\in E}  {w_{ij}} \Iijtau \big \},
\label{equ:generalF}
\end{align}
where the weights $\{w_i, w_{ij}\}$ are determined by the temperature $\epsilon$ and $\{\rho_{ij}\}$ via
 \begin{align}
w_i = \left\{   \begin{array}{l l}
   1 & \quad \text{$\forall i \in A$}\\
    \epsilon& \quad \text{$\forall i \in B$} , \\
  \end{array} \right. &&
  w_{ij} = \left\{   \begin{array}{l l}
    \rho_{ij} & \quad \text{$\forall (ij) \in E_A \cup \cross$}\\  
    \epsilon \rho_{ij} & \quad \text{$\forall (ij) \in E_B$} . \\  
  \end{array} \right.
\label{equ:defineWeights}
  \end{align}
%$w_{i} = \epsilon$, $w_{ij} = \epsilon \rho_{ij}$ for $i\in B, (ij)\in E_B$ and $w_i=1$, $w_{ij} = \rho_{ij}$ otherwise.
The general framework in \eqref{equ:generalF} provides a unified treatment for approximating sum-inference, max-inference and mixed, marginal MAP
problems simply by taking different weights. Specifically, 
\begin{enumerate}
\item If $w_i = 1$ for all $ i \in V$, Eq.~\eqref{equ:generalF} corresponds to the sum-inference problem and the sum-product BP objectives and algorithms. 
\item If $w_i \to 0^+$ for all $ i \in V$ \qiangold{(and the corresponding $w_{ij} \to 0^+$)}, Eq.~\eqref{equ:generalF} corresponds to the max-inference problem and the max-product linear programming objective and algorithms. 
\item If $w_i = 1$ for $\forall i \in A$ and $w_i = 0$ for $\forall i \in B$  \qiangold{(and the corresponding $w_{ij} \to 0^+$)}, Eq.~\eqref{equ:generalF} corresponds to the marginal MAP problem; in the sequel, we derive ``mixed-product" BP algorithms. 
\end{enumerate}
Note the different roles of the singleton and pairwise weights: the singleton weights $\{w_i \colon i \in V\}$ define the type of inference problem, while the pairwise weights $\{w_{ij} \colon (ij) \in E\}$ determine the approximation method (e.g., Bethe vs.\ TRW). 

\begin{algorithm}[tb]
\caption{Annealed BP for Marginal MAP} 
\label{alg:annealedmsg}
\begin{algorithmic}
%\STATE \textbf{INPUT:} the annealing coefficient $\epsilontt$, where $t$ is the iteration.
\STATE Define the pairwise weights $\{\rho_{ij} \colon (ij) \in E\}$, e.g., $\rho_{ij}=1$ for Bethe or valid appearance probabilities for TRW. 
Initialize the messages $ \{ m_{i \to j} \colon (ij) \in E \}$. % for all edges $(ij) \in E$ and $x_j \in \X_j$. 
\FOR{iteration $t$} 
\STATE 1. Update $\epsilon$ by $\epsilon = 1/t$, and correspondingly the weights $\{w_i, w_{ij}\}$ by \eqref{equ:defineWeights}. 
\STATE 2. Perform the message passing update in \eqref{equ:weightedmsg} for all edges $(ij) \in E$.  
\ENDFOR
%\STATE Decode the solution: ~~~~~$\displaystyle x_i^* = \argmax_{x_i} \{   \psi_i(x_i) m_{\sim i}(x_i)  \},   ~~ \forall i \in B. $ %\vx_B^*
\STATE Calculate the singleton beliefs $b_{i}(x_i)$ and decode the solution $\vx_B^*$, 
 \begin{align}
 %x_i^* = \argmax_{x_i} \{   \psi_i(x_i) m_{\sim i}(x_i)  \},   ~~ \forall i \in B. 
 x_i^* =  \argmax_{x_i} b_{i}(x_i), ~~ \forall i \in B, & \text{ where $b_{i}(x_i) \propto \psi_i(x_i) m_{\sim i}(x_i)$} .
 \end{align}
\end{algorithmic}
\end{algorithm} 
%}
%
\qiangold{
We now derive a message passing algorithm for solving the generic problem \eqref{equ:generalF}, using a Lagrange multiplier method similar to \citet{Yedidia_Bethe} or \citet{Wainwright_TRBP}. 
\vspace{-.8\baselineskip}
\begin{pro}
\label{pro:generalweightedBP}
Assuming $w_i$ and $w_{ij}$ are strictly positive, the stationary points of \eqref{equ:generalF} satisfy the fixed point condition of the following message passing update, %algorithm shown in Algorithm~\ref{alg:weightedmsg}. 
\begin{align}
\label{equ:weightedmsg}
& \text{Message Update:} 
&&&& m_{i\to j}(x_j)  \gets   \big[ \sum_{x_i}  (\psi_i(x_i) m_{\sim i}(x_i))^{\frac{1}{w_i}} \left (\frac{ \psi_{ij}(x_i,x_j)}{m_{j\to i}(x_i)} \right) ^{\frac{1}{w_{ij}}}  \big]^{w_{ij}} ,    \\ 
& \text{Marginal Decoding:}  &&&&  \notag\\
&&&& & 
\hspace{-.25\textwidth}\tau_{i}(x_i) \propto \big [ \psi_i(x_i) m_{\sim i}(x_i) \big ] ^{\frac{1}{w_i}}, ~~~~ \tau_{ij}(x_{i}, x_j) \propto  \tau_i(x_i) \tau_j(x_j)  \left [ \frac{ \psi_{ij}(x_i,x_j)}{ m_{i\to j}(x_j) m_{j\to i}(x_i)  } \right ] ^{\frac{1}{w_{ij}} }, 
\label{equ:weighted_marginals}
\end{align}
%\vspace{-.8\baselineskip}
%\begin{equation}
%\begin{aligned}
%\hspace{-0.25\textwidth} & \text{Singleton Marginal:}  &&&& \tau_{i}(x_i) \propto \big [ \psi_i(x_i) m_{\sim i}(x_i) \big ] ^{\frac{1}{w_i}},  \\%&\text{Message Update:} &  \tau_{i}(x_i) \propto (\psi_i(x_i) m_{\sim i})^{1/w_i} , 
%\hspace{-0.25\textwidth} & \text{Pairwise Marginal:} &&&& \tau_{ij}(x_{ij}) \propto  \tau_i \tau_j  \left [ \frac{ \psi_{ij}(x_i,x_j)}{ m_{i\to j}(x_j) m_{j\to i}(x_i)  } \right ] ^{\frac{1}{w_{ij}} }, 
%\end{aligned}
%\label{equ:weighted_marginals}
%\end{equation}
%\begin{align}
%\label{equ:weightedmsg}
%& \text{Message Update:} 
%&&~~~& m_{i\to j}(x_j)  \gets     \big[ \sum_{x_i}  (\psi_i(x_i) m_{\sim i}(x_i))^{1/w_i} ({ \psi_{ij}}(x_i,x_j)/{m_{j\to i}}(x_i))^{1/w_{ij}}  \big]^{w_{ij}} ,   \\
%&\text{Message Update:} & ,  \label{equ:weighted_marginals} \\
%& \text{Marginal Decoding:} 
%&&~~~& \tau_{i}(x_i) \propto \big [ \psi_i(x_i) m_{\sim i}(x_i) \big ] ^{1/w_i} , ~~~~ \tau_{ij}(x_{ij}) \propto  \tau_i \tau_j  \left [ \frac{ \psi_{ij}(x_i,x_j)}{ m_{i\to j}(x_j) m_{j\to i}(x_i)  } \right ] ^{1/w_{ij} }, 
%&\text{Message Update:} &  \tau_{i}(x_i) \propto (\psi_i(x_i) m_{\sim i})^{1/w_i} ,  \label{equ:weighted_marginals} \\
%& \text{Message Update:} & \tau_{ij}(x_{ij}) \propto  \tau_i \tau_j (\frac{ \psi_{ij}}{m_{i\to j} m_{j\to i}}) ^{1/w_{ij} }, 
%\label{equ:weighted_marginals}
%\end{align}
where $\displaystyle m_{\sim i} (x_i) \coloneq \prod_{k \in \neib{i}}  m_{k\to i}(x_i)$ is the product of messages sent into node $i$,  and $\neib{i}$ is the set of neighboring nodes of $i$.
%\begin{align*} m_{\sim i} (x_i) = \prod_{k \in \neib{i}}  m_{k\to i}(x_i),  ~~~ \text{where $\neib{i}$ denotes the neighboring nodes of $i$.} \end{align*}
\end{pro}
\begin{proof}[Proof (sketch)] (See appendix for the complete proof.)
Note that \eqref{equ:weighted_marginals} is simply the KKT condition of \eqref{equ:generalF}, with the log of the message $\log m_{i\to j}$ being the Lagrange multipliers. 
Plugging \eqref{equ:weighted_marginals} into the local consistency constraints of $\localpoly$ in \eqref{equ:localpolydefine} gives \eqref{equ:weightedmsg}.  
%The KKT condition 
%Writing down the KKT condition of \eqref{equ:equ:generalF} and eliminate $\{\tau_i, \tau_{ij}\}$ using the local consistency conditions of $\localpoly$ in \eqref{equ:localpolydefine}; this yields \eqref{equ:weightedmsg}-
\end{proof}
}
The above message update is mostly similar to TRW-BP of \citet{Wainwright_TRBP}, except that it incorporates general singleton weights $w_i$. 
The marginal MAP problem can be solved by running \eqref{equ:weightedmsg} with $\{w_i, w_{ij}\}$ defined by \eqref{equ:defineWeights} and a scheme for choosing the temperature $\epsilon$, either directly set to be a small constant, or gradually decreased (or annealed) to zero through iterations, e.g., by $\epsilon = 1/t$ where $t$ is the iteration. %by directly setting it to be a small constant, or by gradually decreasing (or annealing) it to zero through iterations, e.g., $\epsilon = 1/t$ where $t$ is the iteration. 
Algorithm~\ref{alg:annealedmsg} describes the details for the annealing method.  %this is shown in Algorithm~\ref{alg:annealedmsg}. 

\block{
\begin{algorithm}[htb]
\caption{Weighted Message Passing Algorithm for Solving the Generic Problem \eqref{equ:generalF}} 
\label{alg:weightedmsg}
\begin{algorithmic}
\STATE Initialize the messages $m_{i \to j} (x_j)$ for all edges $(ij) \in E$ and $x_j \in \X_j$. 
\FOR{iteration $t$}
\FOR{edge $(ij) \in E$}
\STATE {\emph{Perform message update}:}  
\begin{align}
\label{equ:weightedmsg}
&m_{i\to j}(x_j)  \gets     \big[ \sum_{x_i}  (\psi_i(x_i) m_{\sim i}(x_i))^{1/w_i} ({ \psi_{ij}(x_i,x_j)}/{m_{j\to i}(x_i)})^{1/w_{ij}}  \big]^{w_{ij}} ,   \\
&\text{where $m_{\sim i} (x_i) = \prod_{k \in \neib{i}}  m_{k\to i}(x_i)$.} \notag
\end{align}
\vspace{-1.5\baselineskip}
\ENDFOR
\ENDFOR
\STATE Calculate the singleton and pairwise marginals, 
\begin{align}
\label{equ:weighted_marginals}
\tau_{i}(x_i) \propto (\psi_i(x_i) m_{\sim i}(x_i))^{1/w_i} , &&
\tau_{ij}(x_{ij}) \propto  \tau_i(x_i) \tau_j(x_j) (\frac{ \psi_{ij}(x_i,x_j)}{m_{i\to j}(x_j) m_{j\to i}(x_i)}) ^{1/w_{ij} } 
\end{align}
\end{algorithmic}
\end{algorithm} 
}
%
%
%\block{

%

\subsection{Mixed-Product Belief Propagation}

Directly taking $\epsilon \to 0^+$ in message update \eqref{equ:weightedmsg}, we can get an interesting ``mixed-product" BP algorithm that is a hybrid of the max-product and sum-product message updates, with a novel ``argmax-product" message update that is specific to marginal MAP problems. 
This algorithm is listed in Algorithm~\ref{alg:mix_product_msg}, and described by the following proposition:
\begin{algorithm}[t]
\caption{Mixed-product Belief Propagation for Marginal MAP} 
\label{alg:mix_product_msg}
\begin{algorithmic}
\STATE Define the pairwise weights $\{\rho_{ij} \colon (ij) \in E\}$ and initialize messages $ \{ m_{i \to j} \colon (ij) \in E \}$ as in Algorithm~\ref{alg:annealedmsg}. 
%Initialize the messages $ \{ m_{i \to j} \colon (ij) \in E \}$. % for all edges $(ij) \in E$ and $x_j \in \X_j$. 
%\STATE Initialize the messages $m_{i \to j} (x_j)$ for all edges $(ij) \in E$ and $x_j \in \X_j$. 
\FOR{iteration $t$}
\FOR{edge $(ij) \in E$}
%\STATE Perform message update on all the edges $(ij)\in E$. The update form dependents  
\STATE{\emph{Perform different message updates depending on the node type of the source and destination},}  
\begin{align}
&\hspace{-0.05\textwidth}\begin{tabular}{c}{$A \to A\cup B$:}\\ {\small (sum-product)}\end{tabular}  &  &m_{i\to j}(x_j) \gets     \big[ \sum_{x_i}  (\psi_i(x_i) m_{\sim i}(x_i)) (\frac{ \psi_{ij}(x_i,x_j)}{m_{j\to i}(x_i)})^{1/\rho_{ij}}  \big]^{\rho_{ij}},  \label{equ:mix_product1}\\
&\hspace{-0.05\textwidth}\begin{tabular}{c}{$B \to B$:}\\{\small (max-product)}\end{tabular}  & &m_{i\to j}(x_j)  \gets     \max_{x_i}  (\psi_i(x_i) m_{\sim i}(x_i))^{\rho_{ij}}(\frac{ \psi_{ij}(x_i,x_j) }{m_{j\to i}(x_i)}) ,    \label{equ:mix_product2}\\ 
&\hspace{-0.05\textwidth}\begin{tabular}{c}{$B \to A$:}\\{\small (argmax-product)}\end{tabular}  &  &m_{i\to j}(x_j) \gets     \big[ \sum_{x_i \in \X_i^*} (\psi_i(x_i) m_{\sim i}(x_i)) (\frac{ \psi_{ij}(x_i,x_j)}{m_{j\to i}(x_i)})^{1/\rho_{ij}}  \big]^{\rho_{ij}} ,    \label{equ:mix_product3}\\
&&&\hspace{-0.2\textwidth}\text{where the set $\X_i^* = \argmax_{x_i} \psi_i(x_i) m_{\sim i}(x_i)$ %is the set of maximizing arguments of the belief
and $m_{\sim i}(x_i)= \prod_{k\in \neib{i}} m_{ki}(x_i)$.}   \notag 
\end{align}
%\begin{align}
%&{A \to A\cup B}:&  &m_{i\to j} \gets     \big[ \sum_{x_i}  (\psi_i m_{\sim i}) (\frac{ \psi_{ij}}{m_{j\to i}})^{1/\rho_{ij}}  \big]^{\rho_{ij}} ,  ~~~~~~~~~~~~~\text{(sum-product)} \label{equ:mix_product1}\\
%&B \to B:&&m_{i\to j}  \gets     \max_{x_i}  (\psi_i m_{\sim i})^{\rho_{ij}} (\frac{ \psi_{ij}}{m_{j\to i}}) ,   ~~~~~~~~~~~~~~~~~~~~ \text{(max-product)}    \label{equ:mix_product2}\\
%&{B \to A}:&  &m_{i\to j} \gets     \big[ \sum_{x_i \in \X_i^*} (\psi_i m_{\sim i}) (\frac{ \psi_{ij}}{m_{j\to i}})^{1/\rho_{ij}}  \big]^{\rho_{ij}} ,  ~~~~~~ \text{(argmax-product)}  \label{equ:mix_product3}\\
%&&&\text{where the set $\X_i^* = \argmax_{x_i} \psi_i(x_i) m_{\sim i}(x_i)$ %is the set of maximizing arguments of the belief, 
%and $m_{\sim i}(x_i)= \prod_{k\in \neib{i}} m_{ki}(x_i)$.} \notag
%\end{align}
\vspace{-1.5\baselineskip}
\ENDFOR
\ENDFOR
%\STATE Decode the solution: ~~~~~$\displaystyle x_i^* = \argmax_{x_i} \{   \psi_i(x_i) m_{\sim i}(x_i)  \},   ~~ \forall i \in B. $ %\vx_B^*
\STATE Calculate the singleton beliefs $b_{i}(x_i)$ and decode the solution $\vx_B^*$, 
 \begin{align}
 x_i^* =  \argmax_{x_i} b_{i}(x_i), ~~ \forall i \in B, & \text{ where $b_{i}(x_i) \propto \psi_i(x_i) m_{\sim i}(x_i)$} .
 \end{align}
\end{algorithmic}
\end{algorithm} 
\begin{pro}
As $\epsilon$ approaches zero from the positive side, that is, $\epsilon \to 0^+$, the message update \eqref{equ:weightedmsg} reduces to the update in \eqref{equ:mix_product1}-\eqref{equ:mix_product3} in Algorithm~\ref{alg:mix_product_msg}. 
\end{pro}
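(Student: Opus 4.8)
The plan is to take the explicit limit $\epsilon \to 0^+$ in the weighted message update \eqref{equ:weightedmsg}, treating the three classes of nodes/edges separately according to the weight assignment \eqref{equ:defineWeights}. Recall that for $i \in A$ we have $w_i = 1$, for $i \in B$ we have $w_i = \epsilon$, and $w_{ij} = \rho_{ij}$ on $E_A \cup \cross$ but $w_{ij} = \epsilon\rho_{ij}$ on $E_B$. So the messages split into three qualitatively different cases depending on which sets the endpoints $i,j$ belong to: (a) $i \in A$ (so $1/w_i = 1$): the outgoing factor $(\psi_i m_{\sim i})^{1/w_i}$ stays bounded and the update retains a sum over $x_i$, yielding a sum-product-like update; (b) $i \in B$ and $(ij) \notin E_B$, i.e.\ $(ij) \in \cross$ with $i\in B$ (so $1/w_i = 1/\epsilon \to \infty$ while $1/w_{ij} = 1/\rho_{ij}$ stays bounded): here the term $(\psi_i m_{\sim i})^{1/\epsilon}$ blows up and, via the zero-temperature limit formula \eqref{equ:zerolimit}, the sum $\sum_{x_i}$ collapses onto the maximizing $x_i$, giving an ``argmax-product'' update; (c) both $i,j \in B$, i.e.\ $(ij) \in E_B$ (so both $1/w_i = 1/\epsilon$ and $1/w_{ij} = 1/(\epsilon\rho_{ij})$ diverge): after factoring out $1/\epsilon$ from the exponent, the bracket itself becomes a $\max$ over $x_i$ and the outer power $w_{ij} = \epsilon\rho_{ij} \to 0$, producing a max-product-style update.

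First I would rewrite \eqref{equ:weightedmsg} in the form
\[
m_{i\to j}(x_j) \;=\; \Big[ \sum_{x_i} \big( (\psi_i m_{\sim i})^{1/w_i} (\psi_{ij}/m_{j\to i})^{1/w_{ij}} \big) \Big]^{w_{ij}},
\]
and in each of cases (b) and (c) pull the dominant $1/\epsilon$ out of the exponents so that the summand takes the shape $\big( g_{ij}(x_i,x_j) \big)^{1/\epsilon}$ times a bounded factor, where $g_{ij}$ collects the $\epsilon$-independent pieces. Then I would invoke the standard soft-max to hard-max limit $\lim_{\epsilon\to 0^+}\big[\sum_{x} f(x)^{1/\epsilon}\big]^{\epsilon} = \max_x f(x)$ from \eqref{equ:zerolimit}, being careful about the leftover bounded factors (which either survive as multiplicative corrections or, in the argmax case, get evaluated at the argmax). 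For case (a) no limit subtlety arises: one simply substitutes $w_i = 1$ and observes the update is continuous in $\epsilon$ at $\epsilon = 0$ on the relevant edges (here $w_{ij} = \rho_{ij}$ is $\epsilon$-independent when $(ij)\in E_A$, and for $(ij)\in\cross$ with $i \in A$ the outer exponent $w_{ij} = \rho_{ij}$ is likewise fixed). Matching the three resulting limiting forms against \eqref{equ:mix_product1}--\eqref{equ:mix_product3} then finishes the argument.

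The main obstacle I anticipate is the bookkeeping in case (b) — the ``argmax-product'' update — where $i\in B$ but the outer exponent $w_{ij}$ does \emph{not} vanish, so one gets a genuine hybrid: the $\sum_{x_i}$ degenerates to an evaluation at $x_i^* = \argmax_{x_i}(\psi_i m_{\sim i})$ (needing the limit formula applied to the $1/\epsilon$ power), yet the $(\psi_{ij}/m_{j\to i})^{1/\rho_{ij}}$ factor and the outer $(\cdot)^{\rho_{ij}}$ persist and must be correctly carried through the limit. I would handle this by isolating the $\epsilon$-dependent block $\big[\sum_{x_i}(\psi_i m_{\sim i})^{1/\epsilon} h(x_i)\big]^{\epsilon}$ with $h$ bounded and continuous, and showing this tends to $\max_{x_i}(\psi_i m_{\sim i})$; the remaining, slowly-varying part of the summand gets pinned to its value at the argmax. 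A secondary, more technical point is the potential non-uniqueness of the argmax (ties), which I would dispatch with the usual tie-breaking convention, and the possibility of zeros in $m_{\sim i}$ or $\psi_i$, handled exactly as in the zero-temperature limit caveats already noted after \eqref{equ:marginalpolytope} and Theorem~\ref{thm:smoothVersion}. The rest — cases (a) and (c) — is routine substitution and the already-cited limit identity.
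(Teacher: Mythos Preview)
Your approach is essentially the paper's: split into three cases by the type of the source node and edge, use the zero-temperature identity \eqref{equ:zerolimit} where an exponent $1/\epsilon$ appears, and observe case (a) is a trivial substitution. Two small technical points where your sketch deviates from the paper's execution are worth flagging. First, in case (b) the outer exponent is the fixed $\rho_{ij}$, not $\epsilon$, so the block you propose to isolate, $\big[\sum_{x_i}(\psi_i m_{\sim i})^{1/\epsilon} h(x_i)\big]^{\epsilon}$, does not naturally appear in the update; the paper instead normalizes by $c_i = \max_{x_i}\psi_i m_{\sim i}$ so that $(\psi_i m_{\sim i}/c_i)^{1/\epsilon} \to \deltaIndic(x_i \in \X_i^*)$ pointwise, substitutes this indicator into the sum, and then simply drops the diverging prefactor $c_i^{\rho_{ij}/\epsilon}$ since messages are only defined up to a multiplicative constant. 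Second, ties are not dispatched by a tie-breaking convention: the limiting update \eqref{equ:mix_product3} explicitly \emph{sums} over the full argmax set $\X_i^*$, which is exactly what the indicator-function limit delivers. Neither point changes your overall strategy, but both affect the form of the final answer.
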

\begin{proof}
For messages from $i\in A$ to $j\in A\cup B$, we have $w_i = 1$, $w_{ij} = \rho_{ij}$; the result is obvious. \\
 For messages from $i\in B$ to $j\in B$, we have $w_i = \epsilon$, $w_{ij}= \epsilon \rho_{ij}$. The result follows from the zero temperature limit formula in \eqref{equ:zerolimit}, by letting 
%   $f(x_i) =  (\psi_i(x_i) m_{\sim i}(x_i))^{\rho_{ij}} ({\psi_{ij}(x_i,x_j)}/{m_{j\to i}(x_i)})$.
  $f(x_i) =  (\psi_i(x_i) m_{\sim i}(x_i))^{\rho_{ij}} (\frac{\psi_{ij}(x_i,x_j)}{m_{j\to i}(x_i)})$.
% $f(x_i) =  (\psi_i m_{\sim i})^{\rho_{ij}} (\frac{\psi_{ij}}{m_{j\to i}})$.
  \\
For messages from $i\in B$ to $j\in A$, we have $w_i = \epsilon$, $w_{ij} = \rho_{ij}$. 
%and $c_i = \max_{x_i} \psi_i m_{\sim i}$ its maximum value; 
One can show that
$$\lim_{\epsilon \to 0^+}   \Big[ \frac{\psi_i(x_i) m_{\sim i}(x_i) }{   \max_{x_i}  \psi_i(x_i) m_{\sim i}(x_i)     } \Big]^{1/\epsilon}  =  \deltaIndic(x_i \in    \X_i^* ),$$
%$$\lim_{\epsilon \to 0^+}   \Big[ \frac{\psi_i(x_i) m_{\sim i}(x_i) }{c_i} \Big]^{1/w_i}  =  \deltaIndic(x_i \in \X_i^*).$$
where $\X_i^* = \argmax_{x_i} \psi_i(x_i) m_{\sim i}(x_i)$. % be the set of maximizing arguments of the belief $b_i$, 
Plugging this into \eqref{equ:weightedmsg} and dropping the constant term, we get the message update in \eqref{equ:mix_product3}. 
\end{proof}

Algorithm~\ref{alg:mix_product_msg} has an intuitive interpretation: the sum-product and max-product messages in \eqref{equ:mix_product1} and \eqref{equ:mix_product2} correspond to the marginalization and maximization steps, respectively. The special ``argmax-product" messages in \eqref{equ:mix_product3} serves to synchronize the sum-product and max-product messages -- it restricts the max nodes to the currently decoded local marginal MAP solutions $\X^*_i = \argmax \psi_i(x_i) m_{\sim i}(x_i)$, and passes the posterior beliefs back to the sum part. Note that the summation notation in \eqref{equ:mix_product3} can be ignored if $\X^*_i$ has only a single optimal state.

One critical feature of our mixed-product BP is that it takes simultaneous movements on the marginalization and maximization sub-problems in a parallel fashion, 
and is computationally much more efficient than the traditional methods that require fully solving a marginalization sub-problem before taking each maximization step. This advantage is inherited from our general variational framework, which naturally integrates the marginalization and maximization sub-problems into a joint optimization problem. 

Interestingly, Algorithm~\ref{alg:mix_product_msg} also bears similarity to a recent hybrid message passing method of \citet{Jiang10}, which differs from Algorithm~\ref{alg:mix_product_msg} only in replacing the special argmax-product messages \eqref{equ:mix_product3}  with regular max-product messages.
% (see Algorithm~\ref{alg:jiang_product}). 
We make a detailed comparison of these two algorithms in Section~\ref{sec:compare_jiang}, and show that it is in fact the argmax-product messages \eqref{equ:mix_product3} that lends our algorithm several appealing optimality guarantees.

\subsection{Reparameterization Interpretation and Local Optimality Guarantees}
\label{sec:localoptimality}
%%%%%%%%%%%%%\subsection{Reparameterization}%%%%%%%%%%%%%%%%
An important interpretation of the sum-product and max-product BP is the 
reparameterization viewpoint \citep{Wainwright03, Weiss07}: Message passing updates can be 
viewed as moving probability mass between local pseudo-marginals (or beliefs), in a way that leaves their
product a reparameterization of the original distribution, while ensuring some consistency conditions at the fixed points. Such viewpoints are theoretically important, because they are useful for proving optimality guarantees for the BP algorithms.  %at any fixed point, the local beliefs are guaranteed to satisfy the sum (resp.\ max) consistency property. 
In this section, we show that the mixed-product BP in Algorithm~\ref{alg:mix_product_msg} has a similar reparameterization interpretation, based on which we establish a local optimality guarantee for mixed-product BP. 
%Algorithm~\ref{alg:mix_product_msg}. 

%An important interpretation of the sum-product and max-product algorithms is the reparameterization viewpoint \citep{Wainwright03, Weiss07}.  Message passing can be 
%viewed as moving mass between the sum-marginals (resp.\ max-marginals), in a way that leaves their product a reparameterization of the original 
%distribution; at any fixed point, the sum (resp.\ max) marginals are guaranteed to satisfy
%the sum (resp.\ max) consistency property. Interestingly, the mixed-beliefs have a similar reparameterization interpretation.

To start, we define a set of ``mixed-beliefs" as %based on the ``mixed-product" messages \eqref{equ:mix_product1}-\eqref{equ:mix_product3}, 
\begin{align}
\margin_{i}(x_i)  \propto \psi_i(x_i) m_{\sim i}(x_i), &&
\margin_{ij}(x_{ij}) \propto  \margin_i(x_i) \margin_j(x_j) \left [ \frac{ \psi_{ij}(x_i, x_j)}{m_{i\to j}(x_j) m_{j\to i}(x_i)}\right ] ^{1/\rho_{ij} }. 
\label{equ:mixedmargin}
\end{align}
The marginal MAP solution should be decoded from  $x_i^* \in \arg\max_{x_i} \margin_i(x_i),
\forall i \in B$, as is typical in max-product BP. 
%The maximum of the mixed-beliefs $x_i^* \in \arg\max_{x_i} \margin_i(x_i), \forall i \in B$ can be extracted as an estimate of the marginal MAP solution, as is typical in max-product BP. 
%We will prove that such $\vx_B^*$ are locally optimal under some conditions. 
\newcommand{\epmarign}{\margin()}
Note that the above mixed-beliefs $\{b_i, b_{ij}\}$ are different from the local marginals $\{\tau_i, \tau_{ij}\}$ defined in \eqref{equ:weighted_marginals}, but are rather softened versions of $\{\tau_i, \tau_{ij}\}$.Their relationship is explicitly clarified in the following.  %their relationship: % between $\margin$ and $\vtau^*$.
\begin{pro}
%Let $\{\tau_{i,\epsilon}, \tau_{ij, \epsilon} \}$ be a fixed point of weighted message 
%passing \eqref{equ:wmsg} under weights $\{w_i(\epsilon), w_{ij}(\epsilon)\}$. Define 
The $\{\tau_i, \tau_{ij}\}$ in \eqref{equ:weighted_marginals} and the $\{\margin_i, \margin_{ij}\}$ in \eqref{equ:mixedmargin} are associated via, 
\begin{align*}
\begin{cases}
\margin_{i} \propto \tau_{i} &\forall i\in A , \\
\margin_{i}\propto (\tau_{i})^{\epsilon} &\forall i\in B  
\end{cases} & &
\begin{cases}
\margin_{ij} \propto \margin_{i} \margin_{j}  (\frac{\tau_{ij} }{\tau_{i}  \tau_{j}}) & \forall (ij)\in E_A \cup \cross \\
\margin_{ij} \propto \margin_{i} \margin_{j}  (\frac{\tau_{ij} }{\tau_{i}  \tau_{j}})^{\epsilon} & \forall (ij)\in E_B .
\end{cases}
\end{align*}
%then $\{\margin_{i, \epsilon}, \margin_{ij,\epsilon} \}$ approaches the 
%mixed-beliefs \eqref{equ:mixedmargin} of the fixed point of the mixed message 
%scheme \eqref{equ:mix_product1}-\eqref{equ:mix_product3} as $\epsilon \to 0^+$. 
\end{pro}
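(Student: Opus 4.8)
The plan is to prove both families of identities by direct substitution: read off $w_i$ and $w_{ij}$ from their definition \eqref{equ:defineWeights}, and compare the formulas \eqref{equ:weighted_marginals} for $\{\tau_i,\tau_{ij}\}$ against the formulas \eqref{equ:mixedmargin} for $\{\margin_i,\margin_{ij}\}$, case by case on whether a node or edge lies in the $A$- or $B$-part. The only tool needed beyond bookkeeping is the elementary observation that if $f\propto g$ with strictly positive normalizers then $f^{\alpha}\propto g^{\alpha}$ for every exponent $\alpha$; since all messages, potentials and beliefs here are strictly positive, this causes no difficulty.

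First I would dispatch the singleton relations. By \eqref{equ:weighted_marginals}, $\tau_i\propto(\psi_i m_{\sim i})^{1/w_i}$, whereas by \eqref{equ:mixedmargin}, $\margin_i\propto\psi_i m_{\sim i}$. For $i\in A$ one has $w_i=1$, hence $\tau_i\propto\psi_i m_{\sim i}\propto\margin_i$; for $i\in B$ one has $w_i=\epsilon$, and raising $\tau_i\propto(\psi_i m_{\sim i})^{1/\epsilon}$ to the power $\epsilon$ gives $(\tau_i)^{\epsilon}\propto\psi_i m_{\sim i}\propto\margin_i$. This is exactly the claimed dichotomy $\margin_i\propto\tau_i$ on $A$ and $\margin_i\propto(\tau_i)^{\epsilon}$ on $B$.

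Next I would handle the pairwise relations. Inverting the second equation of \eqref{equ:weighted_marginals} gives $\psi_{ij}/(m_{i\to j}m_{j\to i})\propto(\tau_{ij}/(\tau_i\tau_j))^{w_{ij}}$, and plugging this into the second equation of \eqref{equ:mixedmargin} produces
\[
\margin_{ij}\;\propto\;\margin_i\margin_j\Big(\frac{\tau_{ij}}{\tau_i\tau_j}\Big)^{w_{ij}/\rho_{ij}}.
\]
It then remains only to compute the exponent $w_{ij}/\rho_{ij}$: for $(ij)\in E_A\cup\cross$ we have $w_{ij}=\rho_{ij}$, so the exponent is $1$; for $(ij)\in E_B$ we have $w_{ij}=\epsilon\rho_{ij}$, so the exponent is $\epsilon$. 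These are precisely the two cases in the statement, which would complete the argument.

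I do not anticipate any real obstacle: the proposition is a pointwise reparameterization identity that follows mechanically from the two sets of definitions together with \eqref{equ:defineWeights}, without invoking fixed-point conditions or any property of the graph $G$. The only place requiring mild care is keeping track of the (positive) normalization constants as they pass through the exponentiation steps, and as noted above that is automatic.
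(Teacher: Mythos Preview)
Your proposal is correct and follows exactly the approach the paper indicates: the paper's own proof is the single sentence ``the result follows from the simple algebraic transformation between \eqref{equ:weighted_marginals} and \eqref{equ:mixedmargin},'' and your case-by-case substitution using the weight definitions \eqref{equ:defineWeights} is precisely that transformation spelled out in full.
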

\begin{proof}
 Result follows from the simple algebraic transformation between \eqref{equ:weighted_marginals} and \eqref{equ:mixedmargin}. 
%The result follows directly from application of the zero temperature limit; see Section~4 in \citet{Weiss07} for a similar proof for max-product. 
\end{proof}
Therefore, as $\epsilon \to 0^+$, the $\tau_i$ ($= b_i^{1/\epsilon}$) for $i\in B$ should concentrate their mass on a deterministic configuration, but $b_i$ may continue to have soft values. 

We now show that the mixed-beliefs $\{b_i, b_{ij} \}$ have a reparameterization interpretation. 
\begin{thm}
\label{thm:reparameter}
At the fixed point of mixed-product BP in Algorithm~\ref{alg:mix_product_msg} , the mixed-beliefs defined in \eqref{equ:mixedmargin} satisfy\\
%gives a reparameterization of the original distribution, and they satisfies several consistency constraints. 
\textbf{Reparameterization:}%\vspace{-.25\baselineskip}
\begin{equation}
\label{equ:repara}
p(\vx) \propto \prod_{i\in V} \margin_{i} (x_i) \prod_{(ij)\in E}\big[ \frac{\margin_{ij}(x_i, x_j)}{\margin_{i}(x_i) \margin_{j}(x_j)} \big]^{\rho_{ij}}  .
%\vspace{-.25\baselineskip}
\end{equation}
\textbf{Mixed-consistency:}%\vspace{-.25\baselineskip}
\begin{align}
{\rm (a)}\hspace{-1em}&& \sum_{x_i} b_{ij}(x_i, x_j) & = b_j(x_j), & \forall  i \in A, j \in A\cup B  ,  \label{equ:sum_consistency}\\
{\rm (b)}\hspace{-1em}&&\max_{x_i} b_{ij}(x_i, x_j)  &= b_j(x_j), &\forall  i \in B, j \in B  , \label{equ:max_consistency} \\
{\rm (c)}\hspace{-1em}&&\sum_{x_i \in \arg\max b_i}    \!\!\!\!\!\!\!\!  b_{ij}(x_i, x_j) & = b_j(x_j), & \forall  i \in B, j \in A  . \label{equ:mix_consistency}
\end{align}
\end{thm}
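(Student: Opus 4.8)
The plan is to derive both the reparameterization identity and the three mixed-consistency conditions directly from the fixed-point equations of the message update \eqref{equ:weightedmsg} (equivalently, the mixed-product updates \eqref{equ:mix_product1}--\eqref{equ:mix_product3}), by taking the $\epsilon\to 0^+$ limit carefully. First I would recall the standard reparameterization argument for weighted (TRW-style) BP: for the generic weighted update \eqref{equ:weightedmsg} with strictly positive weights $\{w_i,w_{ij}\}$, a direct substitution of the fixed-point messages into the expressions \eqref{equ:weighted_marginals} for $\{\tau_i,\tau_{ij}\}$ shows that the product $\prod_{i} \tau_i \prod_{(ij)} (\tau_{ij}/\tau_i\tau_j)^{w_{ij}/\text{(something)}}$ reconstitutes $p(\vx)$ up to a constant; this is the argument of \citet{Wainwright03, Wainwright_TRBP}. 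I would then translate this identity from the $\tau$'s to the mixed-beliefs $\{b_i,b_{ij}\}$ using the explicit algebraic relations in the Proposition immediately preceding the theorem: since $b_i\propto\tau_i$ on $A$ and $b_i\propto\tau_i^\epsilon$ on $B$, while $b_{ij}/b_ib_j$ equals $\tau_{ij}/\tau_i\tau_j$ on $E_A\cup\cross$ and its $\epsilon$-th power on $E_B$, the exponents conspire so that the weighted product in \eqref{equ:repara} — which uses the \emph{uniform} exponent $\rho_{ij}$ on every edge rather than $w_{ij}$ — is exactly the reparameterization of $p(\vx)$. The key bookkeeping is that the $\epsilon$ factors attached to $B$-nodes and $E_B$-edges in \eqref{equ:defineWeights} cancel precisely against the $\epsilon$ powers relating $b$ to $\tau$, leaving a clean $\rho_{ij}$-weighted identity that is independent of $\epsilon$ and hence survives the limit.

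Next I would establish the three consistency conditions, treating them according to the edge type, since each corresponds to a different one of the messages \eqref{equ:mix_product1}--\eqref{equ:mix_product3}. For case (a), an edge $(ij)$ with $i\in A$: the message $m_{i\to j}$ is a sum-product message \eqref{equ:mix_product1}, and the usual sum-product fixed-point manipulation — substitute the definitions of $b_{ij}$ and $b_j$, use the fixed-point form of $m_{i\to j}$, and sum over $x_i$ — yields $\sum_{x_i} b_{ij}(x_i,x_j)=b_j(x_j)$ exactly as in ordinary sum-product BP; this holds for every $\epsilon$, so it passes to the limit. For case (b), an edge with both endpoints in $B$: here both $m_{i\to j}$ and the belief exponents involve $w_i=\epsilon$, $w_{ij}=\epsilon\rho_{ij}$, and taking $\epsilon\to 0^+$ converts the weighted-sum-product consistency $\sum_{x_i}\tau_{ij}=\tau_j$ into the max-marginal consistency $\max_{x_i} b_{ij}(x_i,x_j)=b_j(x_j)$, via the zero-temperature limit \eqref{equ:zerolimit} (this is the standard passage from TRW-BP to max-product LP). For case (c), the genuinely new case, an edge with $i\in B$, $j\in A$: the message is the argmax-product update \eqref{equ:mix_product3}, whose derivation in the preceding proposition shows $(\psi_i m_{\sim i}/c_i)^{1/w_i}\to\deltaIndic(x_i\in\X_i^*)$ as $\epsilon\to 0^+$, so the effective sum collapses to a sum over $x_i\in\arg\max_{x_i} b_i(x_i)$; carrying this indicator through the $b_{ij}$--$b_j$ substitution gives $\sum_{x_i\in\arg\max b_i} b_{ij}(x_i,x_j)=b_j(x_j)$.

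The main obstacle I anticipate is making the $\epsilon\to 0^+$ limit rigorous and uniform: the fixed-point messages themselves depend on $\epsilon$, and for $i\in B$ some $\tau_i$ entries tend to $0$ while $\tau_i^\epsilon$ tends to $1$, so one must argue that the mixed-beliefs $\{b_i,b_{ij}\}$ and the (normalized) messages converge, and that the limiting objects satisfy the limiting fixed-point equations. The clean way to handle this is not to take a limit of solutions at all, but to verify directly that \emph{any} fixed point of Algorithm~\ref{alg:mix_product_msg} (i.e., of \eqref{equ:mix_product1}--\eqref{equ:mix_product3} as written, with the argmax set $\X_i^*$) satisfies \eqref{equ:repara}--\eqref{equ:mix_consistency}, by plugging the fixed-point messages into \eqref{equ:mixedmargin} and manipulating — thereby sidestepping the limit entirely. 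The only delicate point then is the argmax-product edge: one must check that when $\X_i^*$ contains more than one state, the sum in \eqref{equ:mix_product3} is exactly what is needed for \eqref{equ:mix_consistency} to hold, and that the reparameterization \eqref{equ:repara} is unaffected because $b_i$ is defined by the same formula $b_i\propto\psi_i m_{\sim i}$ regardless of ties. I would organize the final proof as: (1) reparameterization via direct substitution; (2)--(4) the three consistency conditions, one per edge type, each a short fixed-point computation.
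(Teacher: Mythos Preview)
Your proposal is correct, and the approach you ultimately settle on --- verifying the identities by direct substitution of the belief definitions \eqref{equ:mixedmargin} into the fixed-point equations \eqref{equ:mix_product1}--\eqref{equ:mix_product3}, rather than passing through an $\epsilon\to 0^+$ limit --- is exactly what the paper does (its proof is literally the one-line instruction ``directly substitute''). Your extended discussion of the $\epsilon$-limit route and its hazards is sound and well-motivated, but since you yourself identify the direct-substitution argument as the cleaner path and organize the final proof around it, there is no substantive difference from the paper's approach.
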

\begin{proof}
%\vspace{-.25\baselineskip}
Directly substitute the definition \eqref{equ:mixedmargin} into the message update \eqref{equ:mix_product1}-\eqref{equ:mix_product3}. 
%the mixed message update \eqref{equ:mixmsg} into \eqref{equ:mixedmargin}.
\end{proof}

The three mixed-consistency constraints exactly map to the three
types of message updates in Algorithm~\ref{alg:mix_product_msg}. %\eqref{equ:mix_product1}-\eqref{equ:mix_product3}.  
Constraint (a) and (b) enforces the regular sum- and max- consistency of the sum- and max- product messages in \eqref{equ:mix_product1} and \eqref{equ:mix_product2}, respectively. 
 Constraint (c) corresponds to the argmax-product message update in \eqref{equ:mix_product3}: it enforces the marginals to be consistent after $x_i$ is assigned to the currently decoded  solution, 
 %to the solutions of a local marginal MAP problem, 
 $x_i =\arg \max_{x_i} b_i (x_i) =  \arg \max_{x_i} \sum_{x_j} b_{ij}(x_i, x_j)$, corresponding to solving a local marginal MAP problem on $b_{ij}(x_i, x_j)$. 
%it can be interpreted as solving some local marginal-MAP problem $x_i = \arg \max \sum_{x_j} b_{ij}$.
It turns out that this special constraint is a crucial ingredient of mixed-product BP, enabling us to prove guarantees on the strong local optimality of the solution. %of the mixed-product BP. %a strong sense of local optimality of the solution. % of the mixed message scheme. 

%%%%%%%%%%%%%local optimality property%%%%%%%%%%%%%%%%%%%%%%%
Some notation is required. Suppose $C$ is a subset of {max} nodes in $B$. Let $G_{C\cup A} = (C\cup A, E_{C\cup A})$ be
the subgraph of $G$ induced by nodes $C\cup A$, where $E_{C\cup A} = \{(ij) \in E \colon i,
j \in C\cup A\}$. We call $G_{C\cup A}$ a semi-$A$-$B$ subtree of $G$ if the edges in $E_{C\cup A}
\backslash E_B$ form an $A$-$B$ tree. In other words, $G_{C\cup A}$ is a semi-$A$-$B$
tree if it is an $A$-$B$ tree when ignoring any edges entirely within the {max} set $B$. 
%or if the max nodes $C$ form a tree structure after marginalizing sum nodes $A$ and deleting the other max nodes $B\setminus C$ and their incident edges. 
See \figref{fig:semiABtree} for examples of semi $A$-$B$ trees. 

%\newcommand{F_{mix}}{\hat{F}}
%%%%%%%%%% theorem about zero temperature limit %%%%%%%%%%%%%%%%%%%
Following \citet{Weiss07}, we say that a set of weights $\{\rho_{ij} \}$ is \emph{provably convex} if there exist positive constants $\kappa_i $ and $\kappa_{i\to j}$, such that $\kappa_i  + \sum_{i' \in \neib{i}}\kappa_{i' \to i} =
1$ and $\kappa_{i\to j} + \kappa_{j\to i} = \rho_{ij}$. 
\citet{Weiss07} shows that if $\{\rho_{ij} \}$ is provably convex, then $H(\vtau) = \sum_i \Hitau - \sum_{ij} \rho_{ij} \Iijtau$ is a concave function of $\vtau$ in the locally consistent polytope $\localpoly$. 
%call such set of weights $\{\rho_{ij} \colon (ij)\in E_B\}$ ``provably concave".  
%We can establish the following result. 
%We say that $\hatHBtau$ is provably concave if it can be reformed into $\hatHBtau = \sum_{i\in B} \kappa_i H(x_i)  + \sum_{(ij) \in E_B} \kappa_{ij} H(x_i | x_j) $ for some positive $\kappa_i $ and $\kappa_{ij}$. This is equivalent to saying $\kappa_i  + \sum_{j\in \neib{i}}\kappa_{ij} =1$, $\kappa_{ij} + \kappa_{ji} = \rho_{ij}$. Following \citet{Weiss07}, we
%call such set of weights $\{\rho_{ij} \colon (ij)\in E_B\}$ ``provably concave".  We can establish the following result. 
\begin{thm}
\label{thm:localopt}
%Let $\{\margin_i, \margin_{ij}\}$ be a set of mixed-beliefs that satisfy the reparameterization and mixed-consistency of Theorem~\ref{thm:reparameter}.
%Let $\{\margin_i, \margin_{ij}\}$ be a fixed point of mixed-product BP in Algorithm~\ref{alg:mix_product_msg}. 
Suppose $C$ is a subset of $B$ such that $G_{C\cup A}$ is a semi-$A$-$B$ tree, and the weights $\{
\rho_{ij} \}$ satisfy %the following conditions: 
\begin{enumerate}%[I] %\addtolength{\itemsep}{-0.6\baselineskip}\vspace{-.6\baselineskip}
\item $\rho_{ij} =1$ for $(ij)\in E_A$;
\item $0\leq \rho_{ij} \leq 1$ for $(ij)\in E_{C\cup A} \cap \cross$;
\item $\{\rho_{ij} \colon (ij)\in E_{C\cup A} \cap E_B\}$ is provably convex.
\end{enumerate}\vspace{-.5\baselineskip}
At the fixed point of mixed-product BP in Algorithm~\ref{alg:mix_product_msg}, if the mixed-beliefs on the max nodes $\{b_i, b_{ij} \colon i, j \in B\}$ defined in  \eqref{equ:mixedmargin} all have unique maxima, then there exists a $B$-configuration $\vx_B^*$ satisfying $x^*_i = \arg \max b_i$ for $\forall i
\in B$ and $(x^*_i, x^*_j) = \arg\max b_{ij}$ for $\forall (ij)\in E_B$, and $\vx_B^*$ is locally optimal in the sense that $Q(\vx_B^*; \vtheta)$ is not
smaller than any $B$-configuration that differs from $\vx_B^*$ only on $C$, that is, $Q(\vx_B^* ; \vtheta) = \max_{\vx_C}Q([\vx_C, x_{B\setminus C}^*] ; \vtheta)$. 
%then
\end{thm}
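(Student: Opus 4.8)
The plan is to mimic the reparameterization-based optimality proofs for max-product and tree-reweighted BP \citep{Wainwright03,Weiss07}, carried out in the mixed-inference setting using Theorem~\ref{thm:reparameter}. There are three stages: (i) decode a well-defined, locally consistent configuration from the fixed-point mixed-beliefs; (ii) clamp the max variables outside $C$ at $x^*_{B\setminus C}$ and reduce the claim to a marginal-MAP statement on the semi-$A$-$B$ subtree $G_{C\cup A}$; (iii) prove optimality on the sub-model by eliminating the tree-structured sum part $A$ exactly and then applying a standard max-product optimality argument on the (possibly cyclic but provably-convex) max part $G_C$.

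\textbf{Stage (i): decoding.} By Theorem~\ref{thm:reparameter} the mixed-beliefs satisfy max-consistency \eqref{equ:max_consistency} on every edge of $E_B$. Together with the hypothesis that all $b_i$ and $b_{ij}$ have unique maximizers, this forces, for each $(ij)\in E_B$, the joint maximizer of $b_{ij}$ to coincide with the pair of marginal maximizers: since $\max_{x_i}b_{ij}(x_i,x_j)=b_j(x_j)$, maximizing both sides over $x_j$ shows the $j$-component of $\arg\max b_{ij}$ is $x_j^*=\arg\max b_j$, and symmetrically for $i$; one also reads off $b_i(x_i^*)=b_j(x_j^*)=b_{ij}(x_i^*,x_j^*)$. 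This produces the claimed $\vx_B^*$ with $x_i^*=\arg\max b_i$ and $(x_i^*,x_j^*)=\arg\max b_{ij}$ on $E_B$.

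\textbf{Stages (ii)--(iii): reduction and optimality.} By definition of $Q$ in \eqref{equ:marginalMAP}, local optimality on $C$ is equivalent to showing $\vx_C^*$ maximizes $\sum_{\vx_A}\exp(\theta(\vx_A,\vx_C,x^*_{B\setminus C}))$. I would restrict the reparameterization \eqref{equ:repara} to $\vx_{B\setminus C}=x^*_{B\setminus C}$: every factor on an edge leaving $G_{C\cup A}$ is then either constant or absorbable into the singleton factor of its endpoint inside $C\cup A$, and the equalities from Stage (i) together with condition~2 ($0\le\rho_{ij}\le 1$ on crossing edges of $E_{C\cup A}$) show the resulting effective singleton beliefs on $C$ are still uniquely maximized at the $x_i^*$. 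This exhibits the clamped model, up to a multiplicative constant, as a model on $G_{C\cup A}$ carrying a Bethe-like reparameterization \eqref{equ:repara} whose beliefs satisfy \eqref{equ:sum_consistency}--\eqref{equ:mix_consistency} on $G_{C\cup A}$. Since $E_{C\cup A}\setminus E_B$ is an $A$-$B$ tree (Definition~\ref{def:partialorder}), the $A$-nodes form a forest hanging below $C$; I would eliminate them from the leaves upward, using sum-consistency \eqref{equ:sum_consistency} (with $\rho_{ij}=1$ on $E_A$, condition~1) and mix-consistency \eqref{equ:mix_consistency} on crossing edges — this last step being exactly where the argmax-product messages are required — to re-express $\exp(Q([\vx_C,x^*_{B\setminus C}];\vtheta))$, up to a constant, as a pairwise reparameterization over $G_C$ alone, with the provably-convex weights of condition~3 and with beliefs still uniquely maximized at $\vx_C^*$ and max-consistent on $E_C$. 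Because these weights are provably convex, the associated entropy is concave (\citealt{Weiss07}), and the reparameterization plus max-consistency with unique maxima yields, by the standard argument of \citet{Weiss07}, that $\vx_C^*$ uniquely maximizes $Q([\vx_C,x^*_{B\setminus C}];\vtheta)$, which is the desired conclusion.

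\textbf{Main obstacle.} The delicate parts are Stages (ii) and (iii): verifying that clamping the outside variables does not move the decoded maximizers on $C$, and that summing out the tree-structured sum part preserves the pairwise reparameterization and argmax structure on the max part $G_C$. Both reduce to careful bookkeeping with the three mixed-consistency conditions and with the weight restrictions of the theorem; in particular the $A$-elimination step genuinely needs the argmax-product consistency \eqref{equ:mix_consistency} and breaks under ordinary max-consistency (as used by \citealt{Jiang10}), which is the structural reason our algorithm carries this guarantee while theirs does not.
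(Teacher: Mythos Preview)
Your overall architecture (reparameterize via Theorem~\ref{thm:reparameter}, clamp $B\setminus C$, then invoke \citet{Weiss07} on the max part) is the same as the paper's, and Stage~(i) is fine. The gap is in Stage~(iii). You assert that eliminating the $A$-nodes leaf-to-root ``re-expresses $\exp(Q([\vx_C,x^*_{B\setminus C}];\vtheta))$, up to a constant, as a pairwise reparameterization over $G_C$ alone.'' This is not what happens. Sum-consistency~\eqref{equ:sum_consistency} lets you collapse $A$-to-$A$ edges (where $\rho_{ij}=1$) to constants, but at a crossing edge $(i,\pi_i)$ with $i\in A$, $\pi_i\in C$, the residual is
\[
f_i(x_{\pi_i}) \;=\; \sum_{x_i}\Big[\tfrac{b_{i,\pi_i}(x_i,x_{\pi_i})}{b_{\pi_i}(x_{\pi_i})}\Big]^{\rho_{i,\pi_i}}\big[b_i(x_i)\big]^{1-\rho_{i,\pi_i}},
\]
which for general $x_{\pi_i}$ does not reduce to a constant or to a power of $b_{\pi_i}$. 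The mixed-consistency condition~\eqref{equ:mix_consistency} only gives information at $x_{\pi_i}=x_{\pi_i}^*$, so it cannot be used to simplify $f_i$ away for arbitrary $\vx_C$; you therefore do not obtain a clean reparameterization on $G_C$ to which \citet{Weiss07} applies directly.

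The paper avoids this by a different decomposition: it writes $p(\vx)=\hat p_B(\vx_B)\cdot\hat p_{A|B}(\vx)$, where $\hat p_B$ collects all $E_B$ factors and $\hat p_{A|B}$ collects the $E_A\cup\partial_{AB}$ factors. Then $\vx_B^*$ is shown to maximize each factor \emph{separately}: $\hat p_B$ by \citet{Weiss07} (condition~3), and $\sum_{\vx_A}\hat p_{A|B}$ by bounding each $f_i(x_{\pi_i})\le 1$ via H\"older's inequality (this is exactly where condition~2, $0\le\rho_{ij}\le 1$, is used) with equality at $x_{\pi_i}^*$ via~\eqref{equ:mix_consistency}. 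The H\"older step is the missing idea in your write-up; without it, your elimination of $A$ does not yield a usable object on $G_C$. Condition~2 in your Stage~(ii) is also misplaced: it is not needed for clamping $B\setminus C$ (that uses only~\eqref{equ:mix_consistency} on $\partial_{AD}$-edges), but rather is the hypothesis that makes H\"older go through on the $\partial_{AC}$-edges.
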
 
\begin{proof}[Proof (sketch)]
%Sketch of proof 
(See appendix for the complete proof.) 
%Since the maximum of $b_i$, $b_{ij}$ are unique, we have $b_{ij}(x_i, x_j^*) = b_i(x_i)$ for $i \in A$, $j\in B$. 
The mixed-consistency constraint (c) in \eqref{equ:mix_consistency} and the fact that $G_{C\cup A}$ is a semi-$A$-$B$ tree enables the summation part 
to be eliminated away. The remaining part only involves the {max} nodes, and 
the method in \citet{Weiss07} for analyzing standard MAP can be applied.  
\end{proof}
%
%We should emphasize that the proof above critically relies on the mixed-consistency constraint (c) in \eqref{equ:mix_consistency}: it ensures all the terms that involve the sum nodes to be cancelled, reducing the problem to an standard MAP analysis in \citet{Weiss07}. 
%We emphasize that mixed-consistency constraint (c) plays an important role in the proof 
%of Theorem~\ref{thm:localopt}, canceling the terms that involve variables in $B \backslash C$. 
%
\qiangnew{\textbf{Remark.} The proof of Theorem~\ref{thm:localopt} relies on transforming the marginal MAP problem to a standard MAP problem by eliminating the summation part. Therefore, variants of Theorem~\ref{thm:localopt} may be derived using other global optimality conditions of convexified belief propagation or linear programming algorithms for MAP, such as those in \citet{werner2007linear, werner2010revisiting, wainwright2005map}. We leave this to future work.}
%\todo{conditions of reweighted MAP?  linear programming approaches to MAP?}

For $G_{C\cup A}$ to be a semi $A$-$B$ tree, the sum part $G_A$ must be a tree, which 
Theorem~\ref{thm:localopt} assumes implicitly. For the hidden Markov chain in \figref{fig:hiddenchain},
Theorem~\ref{thm:localopt} implies only the local 
optimality up to Hamming distance one (or coordinate-wise optimality), because any semi $A$-$B$ subtree of $G$ in \figref{fig:hiddenchain} can contain at most one max node. 
However, Theorem~\ref{thm:localopt} is in general much stronger, especially when the {sum} part is not fully connected, or when the {max} part has interior regions disconnected from the {sum} part. As examples, see \figref{fig:semiABtree}(b)-(c).
\begin{figure}[tbp]
   \centering
   \begin{tabular}{ccccc}
   \includegraphics[height=0.1\textwidth]{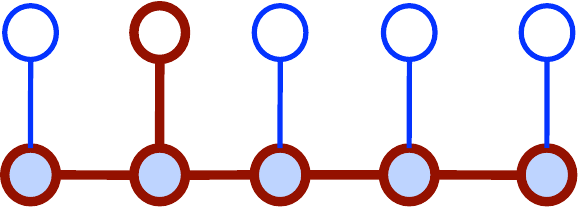}  & &   
   \includegraphics[height=0.1\textwidth]{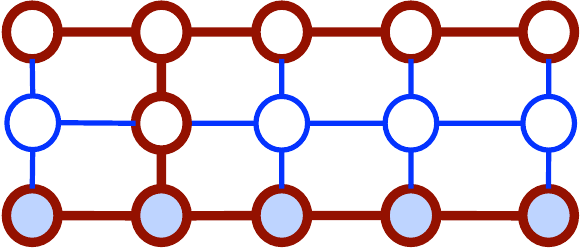}  && 
   \includegraphics[height=0.1\textwidth]{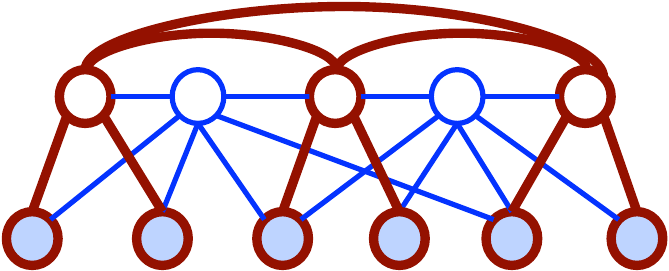}    \\
   (a) && (b) && (c)
   \end{tabular}
   \caption{Examples of semi $A$-$B$ trees. The shaded nodes represent sum nodes, while the unshaded are max nodes. 
   In each graph, a semi $A$-$B$ tree is labeled by red bold lines.
   Under the conditions of Theorem~\ref{thm:localopt},  the fixed point of mixed-product BP is locally optimal up to jointly perturbing all the max nodes in any semi-A-B subtree of $G$. }
   \label{fig:semiABtree}
\end{figure}

\subsection{The importance of the Argmax-product Message Updates}
\label{sec:compare_jiang}
\citet{Jiang10} proposed a similar hybrid message passing algorithm, repeated here as Algorithm~\ref{alg:jiang_product}, which differs from our mixed-product BP only in replacing our argmax-product message update \eqref{equ:mix_product3} with the usual max-product message update \eqref{equ:mix_product2}. \qiangnew{We show in this section that this very difference gives Algorithm~\ref{alg:jiang_product} very different properties, and fewer optimality guarantees, than our mixed-product BP. }
\begin{algorithm}[h]
\caption{ Hybrid Message Passing by \citet{Jiang10} } 
\label{alg:jiang_product}
\begin{algorithmic}
\STATE 1. Message Update: 
\begin{align*}
&\begin{tabular}{c}{$A \to A \cup B$}:\\ {(\small sum-product)}\end{tabular} 
%\begin{tabular}{cc}
%{A \to A \cup B}: %\\
%{\small (sum-product)}
%\end{tabular}
&& m_{i\to j}(x_j) \gets     \big[ \sum_{x_i}  (\psi_i(x_i) m_{\sim i}(x_i)) (\frac{ \psi_{ij}(x_i,x_j)}{m_{j\to i}(x_i)})^{1/\rho_{ij}}  \big]^{\rho_{ij}} ,\\%   ~~~~~~~~~~~~~~~~~~~~ \text{(sum-product)}  \\ % \label{equ:jiang_product1}\\
&%\begin{tabular}{c}
\begin{tabular}{c}{$A \to A \cup B$}:\\ {(\small max-product)}\end{tabular} 
%{B \to A \cup B}: %\\
%{\small (max-product)}
%\end{tabular}
&& m_{i\to j}(x_j)  \gets     \max_{x_i}  (\psi_i(x_i) m_{\sim i}(x_i))^{\rho_{ij}} (\frac{ \psi_{ij}(x_i,x_j)}{m_{j\to i}(x_i)}) . % ~~~~~~~~~~~~~~~~~~~~~~~~~~~~ \text{(max-product)}  % \label{equ:jiang_product2}
\end{align*}
\STATE 2. Decoding:\quad $x_i^{*} = \argmax_{x_i} b_i(x_i)$ for $\forall i\in B$, where $b_i(x_i) \propto \psi_i(x_i) m_{\sim i}(x_i)$.
\end{algorithmic}
\end{algorithm}

Similar to our mixed-product BP, Algorithm~\ref{alg:jiang_product} also satisfies the reparameterization property in \eqref{equ:repara} (with beliefs $\{b_i, b_{ij}\}$ defined by \eqref{equ:mixedmargin});
 it also satisfies a set of similar, but crucially different, consistency conditions at its fixed points,
\begin{align*}
\sum_{x_i} b_{ij}(x_i, x_j) = b_j(x_j), ~~~~~~~~~~~  \forall i\in A, j \in A\cup B, \\
\max_{x_i} b_{ij}(x_i, x_j) = b_j (x_j), ~~~~~~~~~~~  \forall i\in B, j \in A\cup B, 
\end{align*}
which exactly map to the max- and sum- product message updates in Algorithm~\ref{alg:jiang_product}. 

Despite its striking similarity, Algorithm~\ref{alg:jiang_product} has very different properties, and does not share the appealing variational interpretation and optimality guarantees that we have demonstrated for mixed-product BP. 
First, it is unclear whether Algorithm~\ref{alg:jiang_product} can be interpreted as a fixed point algorithm for maximizing our, or a similar, variational objective function. 
Second, it does not inherit the same optimality guarantees in Theorem~\ref{thm:localopt}, despite its similar reparameterization and consistency conditions. 
These disadvantages are caused by the \qiangold{miss} %\qiangold{or lack?} 
 of the special argmax-product message update and its associated mixed-consistency condition in \eqref{equ:mix_consistency}, which was a critical ingredient of the proof of Theorem~\ref{thm:localopt}.

More detailed insights into Algorithm~\ref{alg:jiang_product} and mixed-product BP can be obtained by considering the special case when the full graph $G$ is an undirected tree. We show that in this case, Algorithm~\ref{alg:jiang_product} can be viewed as optimizing a set of \emph{approximate} objective functions, obtained by rearranging the max and sum operators into orders that require less computational cost, while 
mixed-product BP attempts to maximize the \emph{exact} objective function by message updates that effectively perform some ``asynchronous" coordinate descent steps. In the sequel, we use an illustrative toy example to explain the main ideas. 

\vspace{1\baselineskip}

\begin{wrapfigure}{r}{0.18\textwidth}
  \begin{center}
  \vspace{-25pt}
    \includegraphics[width=0.18\textwidth]{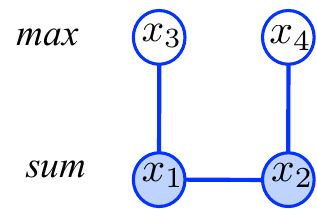}
      \vspace{-25pt}
  \end{center}
\end{wrapfigure}
\textbf{Example 2.} \textit{Consider the marginal MAP problem shown on the right, where the graph $G$ is an undirected tree;  the sum and max sets are  $A=\{1,2\}$ and $B=\{3,4\}$, respectively. We analyze how Algorithm~\ref{alg:jiang_product} and mixed-product BP in Algorithm~\ref{alg:mix_product_msg} perform on this toy example, when both taking Bethe weights ($\rho_{ij} = 1$ for $(ij)\in E$). }

\textit{
\emph{\textbf{Algorithm~\ref{alg:jiang_product} (\citet{Jiang10})}}. Since $G$ is a tree, one can show that Algorithm 3 (with Bethe weights) terminates after a full forward and backward iteration (e.g., messages passed along $x_3\to x_1 \to x_2 \to x_4$ and then $x_4 \to x_2 \to x_1 \to x_3$). By tracking the messages, one can write its final decoded solution in a closed form, 
%\begin{align*}
%x_4^* = \argmax_{x_4} b_4(x_4), && b_4(x_4) = \sum_{x_2} \sum_{x_1} \max_{x_3}  [\exp(\theta(\vx))],  
%\end{align*}
\begin{align*}
x_3^* = \argmax_{x_3}\sum_{x_1}\sum_{x_2} \max_{x_4}[ \exp(\theta(\vx))], && x_4^* = \argmax_{x_4}\sum_{x_2}\sum_{x_1} \max_{x_3} [\exp(\theta(\vx))],  
%b^*_3(x_3) = \sum_{x_1}\sum_{x_2} \max_{x_4}[ \exp(\theta(\vx))], && b^*_4(x_4) = \sum_{x_2}\sum_{x_1} \max_{x_3} [\exp(\theta(\vx))],  
\end{align*}
On the other hand, the true marginal MAP solution is given by, 
\begin{align*}
x_3^* = \argmax_{x_3} \max_{x_4} \sum_{x_1}\sum_{x_2} [ \exp(\theta(\vx))], && x_4^* = \argmax_{x_4} \max_{x_3}\sum_{x_2}\sum_{x_1} [\exp(\theta(\vx))]. 
%b^*_3(x_3) = \sum_{x_1}\sum_{x_2} \max_{x_4}[ \exp(\theta(\vx))], && b^*_4(x_4) = \sum_{x_2}\sum_{x_1} \max_{x_3} [\exp(\theta(\vx))],  
\end{align*}
Here, Algorithm~\ref{alg:jiang_product} approximates the exact marginal MAP problem by rearranging the max and sum operators into an elimination order that makes the calculation easier. A similar property holds for the general case when $G$ is undirected tree: Algorithm 3 (with Bethe weights) terminates in a finite number of steps, and its output solution $x_i^*$ effectively maximizes an approximate objective function obtained by reordering the max and sum operators along a tree-order (see Definition~\ref{def:partialorder}) that is rooted at node $i$. 
The performance of the algorithm should be related to the error caused by exchanging the order of max and sum operators. However, exact optimality guarantees are likely difficult to show because it maximizes an inexact objective function.  
In addition, since each component $x_i^*$ uses a different order of arrangement, and hence maximizes a different surrogate objective function, it is unclear whether the joint $B$-configuration $\vx_B^* = \{x_i^* \colon i \in B\}$ given by Algorithm~\ref{alg:jiang_product} maximizes a single consistent objective function. 
}

\textit{
\emph{\textbf{Algorithm~\ref{alg:mix_product_msg}  (mixed-product)}}. On the other hand, the mixed-product belief propagation in Algorithm~\ref{alg:mix_product_msg} may not terminate in a finite number of steps, nor does it necessarily yield a closed form solution when $G$ is an undirected tree. 
However, Algorithm~\ref{alg:mix_product_msg} proceeds in an attempt to optimize the exact objective function. 
In this toy example, we can show that the true solution is guaranteed to be a fixed point of Algorithm~\ref{alg:mix_product_msg}. 
Let $b_3(x_3)$ be the mixed-belief on $x_3$ at the current iteration, and $x_3^* = \argmax_{x_3} b_3(x_3)$ its unique maxima. 
After a message sequence passed from $x_3$ to $x_4$, one can show that $b_4(x_4)$ and $x_4^*$ update to
\begin{align*}
x_4^* &= \argmax_{x_4}  b_4(x_4),  &&
%~~~~~~ 
b_4(x_4) =  \sum_{x_2} \sum_{x_1} \exp(\theta([x_3^*, x_{\neg 3} ])) = \exp( Q([x_3^*, x_{4}] ; \vtheta)), 
\end{align*}
where we maximize the exact objective function $Q([x_3, x_4] ; \vtheta)$ with fixed $x_3 = x_3^*$. 
Therefore, on this toy example, one sweep ($x_3\to x_4$ or $x_4 \to x_3$) of Algorithm~\ref{alg:mix_product_msg} is effectively performing a coordinate descent step, which monotonically improves the true objective function towards a local maximum.  
In more general models, Algorithm~\ref{alg:mix_product_msg} differs from sequential coordinate descent, and does not guarantee monotonic convergence. But, it can be viewed as a ``parallel" version of coordinate descent, which ensures the stronger local optimality guarantees shown in Theorem~\ref{thm:localopt}. 
}

%The following results basically write down the ideas in the above example for general cases. 
%\begin{thm}
%Consider Algorithm~\ref{alg:mix_product_msg} and Algorithm~\ref{alg:jiang_product}, both with Bethe weights ($\rho_{ij} = 1$ for $(ij)\in E$) on graph $G$ that is undirected tree.  %Assume $o=[o_1, \ldots, o_q]$ represent a partial order where $o_i < o_j$ iff $i  \prec j$. The algorithm runs the forward and backwardly along order $o$. 
%We have 
%1). Algorithm~\ref{alg:jiang_product} have
%\begin{align}
%b(x_1)   =  
%\end{align}
%\end{thm}

\section{Convergent Algorithms by Proximal Point Methods }
\label{sec:proximal}
An obvious disadvantage of mixed-product BP is its lack of convergence guarantees, even when $G$ is an undirected tree. In this section, we apply a proximal point approach \citep[e.g.,][]{martinet1970breve, Rockafellar76} to derive convergent algorithms that directly optimize our free energy objectives, \qiangnew{which take the form of transforming marginal MAP into a sequence of pure (or annealed) sum-inference tasks.}  
Similar methods have been applied to standard sum-inference \citep{Yuille_CCCP} and max-inference \citep{Ravikumar10}. 
%max-  or sum- inference problems, e.g., \citet{Yuille_CCCP, Ravikumar10}. % CCCP  for marginalization \citep{Yuille_CCCP} and marginal MAP% \citep{Liu11}.  % \citep{Yuille_CCCP} 

For the purpose of illustration,  we first consider the problem of maximizing the \emph{exact} marginal MAP free energy, $F_{mix}(\vtau, \vtheta) = \langle \vtau, \vtheta \rangle + H_{A|B}(\vtau)$. The proximal point algorithm works by iteratively optimizing a smoothed problem,  %transforms the optimization problem into a sequence of smoothed problems, 
$$\vtau^{t+1} =  \argmin_{\vtau \in \margpoly}  \{  - F_{mix}(\vtau, \vtheta)  +  \epsilontt D( \vtau ||  \vtau^{t})  \}, $$
where $\vtau^{t}$ is the solution at iteration $t$, and $\epsilontt$ is a positive
coefficient.  Here, $D(\cdot || \cdot)$ is a distance, called the proximal function, which forces $\vtau^{t+1}$ to be close to $\vtau^{t}$; 
typical choices of $D(\cdot || \cdot)$ are Euclidean or Bregman distances or $\psi$-divergences
\citep[e.g.,][]{Teboulle92, Iusem93}. 
Proximal algorithms have nice convergence guarantees:  
the objective series $\{f(\vtau^t) \}$ is guaranteed to 
be non-increasing at each iteration, and $\{\vtau^{t}\}$ converges to an
optimal solution, %for convex programs, 
under some regularity conditions. See, e.g.,
\citet{Rockafellar76, Tseng93, Iusem93}. The proximal algorithm is closely related to the majorize-minimize (MM) algorithm \citep{Hunter04} and the convex-concave procedure \citep{Yuille_CCCP}.

%The proximal method can be equivalently interpreted as a marjorize-minimize (MM) algorithm \citep{Hunter04}, or a convex concave procedure \citep{Yuille_CCCP}. The MM  and CCCP algorithms have been widely applied to standard inference problems to obtain convergence guarantees or better solutions, see e.g., \citet{Yuille_CCCP, Liu11}. 
% Cunha10} for Bregman or $\psi$-divergences  proximal functions. 
%See \cite[][]{Rockafellar76} for the convergence results in the case with
%Euclidean proximal function, and \cite[][et al.]{Tseng93, Iusem93, Cunha10}
%for Bregman or $\psi$-divergences  proximal functions. 

%It can be shown that the convergence of the entropic proximal method was treated in \cite[][et al.]{Tseng93, Iusem93, Cunha10}; it is guaranteed that the objective does not increase through the proximal iterations, and it converges to an optimal solution for convex programming.
% which was original defined as the Euclidean distance, and was later extended to general cases like Bregman distances or $\psi$-divergences \citep[e.g.,][]{Teboulle92}. 

% The $D(  \cdot || \cdot )$ is a distance measurement that forces $\vtau^{t+1}$ to be close to $\vtau^{t}$; the $\epsilon^{t}$ here is a positive proximal coefficient; typical choices of $\epsilon^{t}$ are $\epsilontt = 1$ or $\epsilon^{t} = 1/ t$.  

For our purpose, we take $D(\cdot || \cdot)$ to be a KL divergence between distributions on the max nodes, 
$$D(\vtau || \vtau^t)  =  \mathrm{KL}(\tau_B(\vx_B) || \tau^t_B(\vx_B) ) = \sum_{\vx_B} \tau_B(\vx_B ) \log  \frac{\tau_B(\vx_B)}{\tau^t_B(\vx_B)}.$$
In this case, the proximal point algorithm reduces to Algorithm~\ref{alg:proximal_point}, which iteratively solves a smoothed free energy objective, with natural parameter $\vtheta^t$ updated at each iteration. 
\begin{algorithm}[t]
\caption{Proximal Point Algorithm for Marginal MAP (Exact)} 
\label{alg:proximal_point}
\begin{algorithmic}
%\STATE {\bfseries Input: } input. 
%\STATE {\bfseries Input: } The factors of the graphical model $F = \{\psi_\alpha(\vx_\alpha) \}_{\alpha \in \I}$, an elimination order $o$, $\ibound$
%\STATE {\bfseries Output: } an upper or lower bound on $Z$ %the partition function
\STATE Initialize local marginals $\vtau^0$. 
\FOR{iteration $t$}
\STATE 
\vspace{-1.2\baselineskip}
\begin{align}
& \vtheta^{t+1} = \vtheta + \epsilontt \log  \vtau^{t}_B, \label{equ:proximal_inner0}  \\
&\vtau^{t+1}  = \arg\max_{\tau \in \margpoly} \{   \langle \vtau, \vtheta^{t+1} \rangle + H_{A|B}(\vtau) + \epsilontt H_B(\vtau) \},  \label{equ:proximal_inner}
\end{align}
\vspace{-1.2\baselineskip}
\ENDFOR
\STATE Decoding: 
$\displaystyle x_i^{*} = \argmax_{x_i} \tau_i(x_i)$ for $\forall i\in B$. %where $b_i(x_i) \propto \psi_i(x_i) m_{\sim i}(x_i)$. 
\end{algorithmic}
\end{algorithm} 
Intuitively, the proximal inner loop \eqref{equ:proximal_inner0}-\eqref{equ:proximal_inner} essentially ``adds back'' the truncated entropy
term $H_B(\vtau)$, while canceling its effect by adjusting $\vtheta$ in the opposite
direction.  
Typical choices of $\epsilontt$ include $\epsilontt = 1$ (constant) and $\epsilontt = 1/t$ (harmonic). 
Note that the proximal approach is distinct from an annealing method, which would require that the annealing coefficient vanish to zero. 
Interestingly, if we take $\epsilontt = 1$, then the inner maximization problem \eqref{equ:proximal_inner} reduces to the standard log-partition function duality \eqref{equ:sumduality}, corresponding to a pure marginalization task. This has the interpretation of transforming the marginal MAP problem into a sequence of standard sum-inference problems. 
%This transforms marginal MAP into a sequence of standard marginalization tasks. 

In practice we approximate $\HAB$ and $\HB$ by pairwise entropy decomposition $\hatHABtau$ and $\hatHBtau$ in \eqref{equ:HABHB}, respectively.  
If $\hatHBtau$ is provably convex in the sense of \citet{Weiss07}, that is, there exist positive constants $\{\kappa_i, \kappa_{i\to j}\}$ satisfying $\rho_i = \kappa_i + \sum_{k\in \neib{i}} \kappa_{k\to i}$ and $\rho_{ij} = \kappa_{i\to j} + \kappa_{j \to i}$ for $i,j\in B$.
Then the resulting approximate algorithm can be interpreted as a proximal algorithm that 
maximizes $\hat{F}_{mix}(\vtau, \vtheta)$ with proximal function as 
\begin{equation*}
D_{pair}(\vtau || \vtau^t)  = \sum_{i \in B} \kappa_i  \mathrm{KL}[\tau_i (x_i)|| \tau_i^0(x_i)] ~ + \!  \sum_{(ij) \in E_B} \kappa_{i\to j}  \mathrm{KL}[(\tau_{ij}(x_i | x_j) || \tau_{ij}^0(x_i | x_j)  ]. 
\end{equation*}
%where $\{\kappa_i, \kappa_{i\to j}\}$ are positive, and satisfy $\rho_i = \kappa_i + \sum_{k\in \neib{i}} \kappa_{k\to i}$ and $\rho_{ij} = \kappa_{i\to j} + \kappa_{j \to i}$. 
In this case, Algorithm~\ref{alg:proximal_point} is still a valid proximal algorithm and inherits its convergence guarantees. 
In practice one uses approximations that are not provably convex. An interesting special case is when both $\HAB$ and $\HB$ are approximated by a Bethe approximation.
\qiangnew{This has the effect that the optimization \eqref{equ:proximal_inner} can be solved using standard belief propagation.
Although the Bethe form for $\HAB$ and $\HB$ 
is provably convex only in some special cases, such as when $G$ is tree structured,}
we find in practice that this approximation gives very accurate solutions, even on general loopy graphs where its convergence is no longer theoretically guaranteed.

%if $H_B$ is approximated by a convex entropy decomposition. 
%In other words, Algorithm~\ref{alg:proximal_point} still has convergent guarantees if $H_B$ is approximated by a convex entropy decomposition. 

\begin{comment}
In practice, $\HAB$, and hence $F_{mix}(\vtau, \vtheta)$, is approximated by a Bethe or TRW approximation, and correspondingly, $D(\vtau || \vtau^t)$ should be also replaced by some tractable surrogate, e.g., 
$$
D_{pair}(\vtau || \vtau^t)  = \sum_{i \in B} \kappa_i  \mathrm{KL}[\tau_i (x_i)|| \tau_i^0(x_i)] ~ + \!  \sum_{(ij) \in E_B} \kappa_{i\to j}  \mathrm{KL}[(\tau_{ij}(x_i | x_j) || \tau_{ij}^0(x_i | x_j)  ], 
$$
where $\{\kappa_i, \kappa_{i\to j}\}$ are positive numbers; this corresponds to replacing $H_B(\vtau)$ in \eqref{equ:proximal_inner} by a ``convex"  pairwise entropy decomposition, 
\begin{align}
\hat{H}_B(\vtau) =  \sum_{i \in B} \rho_i \Hitau \    - \!\! \sum_{(ij) \in E_B }\rho_{ij} \Iijtau, 
\label{equ:proxim_HB}
\end{align}
where $\rho_i = \kappa_i + \sum_{k\in \neib{i}} \kappa_{k\to i}$, $\rho_{ij} = \kappa_{i\to j} + \kappa_{j \to i}$, and $\{\rho_i, \rho_{ij}\}$ is provably convex in the sense of \citet{Weiss07}. In other words, Algorithm~\ref{alg:proximal_point} still has convergent guarantees if $H_B$ is approximated by a convex entropy decomposition. 
\end{comment}

\qiangnew{
The global convergence guarantees of the proximal point algorithm may also fail if the inner update \eqref{equ:proximal_inner} is not solved exactly. 
%The global convergence guarantees of the proximal point algorithm holds only when the (surrogate) objective function is convex and the inner update \eqref{equ:proximal_inner} is solved exactly or approximately up to certain criteria \citep[e.g.,][]{solodov2001unified}. 
It should also be possible to develop globally convergent algorithms without inner loops using the techniques that have been developed for full marginalization or MAP problems  \citep[e.g.,][]{meltzer2009convergent, hazan2010norm, jojic2010accelerated, SavchynskyySmooth12}, but we leave this to future work. }

%which is guaranteed to convergence only if $G$ is an undirected tree (because Bethe approximation is provably convex on a tree). %but does not have guarantees for general loopy graphs because $\hat{H}_B$ is not provably convex. 
%On the other hand, even when $\{ \rho_i, \rho_{ij}\}$ is not provably convex, one can still use it in the proximal algorithm, but the convergence is not generally guaranteed. A special interesting case is when both $F_{mix}(\vtau, \vtheta)$ and $H_B(\vtau)$ are replaced by their Bethe approximation, which turns out to usually give high quality solution even it is not guaranteed to converge, and it do converge when the $G$ is an undirected tree. 
%More generally, one can approximate $\hat{H}_B(\vtau)$ by pairwise entropy decomposition where 
%In case $G$ is an undirected tree, we can 

%On the other hand, there is no restriction on replacing $H_B(\vtau)$ with pairwise approximation where $\{ \rho_i, \rho_{ij}\}$ is 

%%%%%%%%%%%%%%%%%%%%%%%%
%%%%%%%%%%%%%%%%%%%%%%%%

%
%\subsection{coordinate descent and EM}
\section{ Connections to EM}
\label{sec:EM}
A natural algorithm for solving the marginal MAP problem is to use the expectation-maximization (EM) algorithm,
by treating $\vx_A$ as the hidden variables and $\vx_B$ as the ``parameters'' to be maximized.  In this
section, we show that the EM algorithm can be seen as a coordinate ascent algorithm on a mean field variant
of our framework.  

We start by introducing a ``non-convex" generalization of Theorem~\ref{thm:duality}.  %which replaces the marginal polytope in \eqref{equ:mixduality} with nonconvex sets.   
\begin{cor} 
\label{cor:nonconvex_mixduality}
Let $\margpolyNull$ be the subset of the marginal polytope $\margpoly$ corresponding to the distributions in which $\vx_B$ are clamped to some deterministic values, that is, 
$$\margpolyNull = \{ \vtau \in  \margpoly~ \colon ~ \text{$\exists \vx_B^* \in \X_{B}$, such that $\tau(\vx_B) =  \deltaIndic(\vx_B = \vx_B^*) $}  \}.$$ 
Then the dual optimization \eqref{equ:mixduality} remains exact if the marginal polytope $\margpoly$ is replaced by any $\N$ satisfying $\margpolyNull  \subseteq \N \subseteq  \margpoly$, that is, 
%$$\Phi_{AB}  = \max_{\vtau \in \N} F_{mix}(\vtau, \vtheta).$$
\begin{align}
\label{equ:nonconvex_mixdaulity}
\Phi_{AB}  = \max_{\vtau \in \N}  \{  \langle \vtheta, \vtau \rangle + \HABtau \}.
\end{align}
\end{cor}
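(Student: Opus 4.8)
The plan is to piggyback on Theorem~\ref{thm:duality} via a simple sandwiching argument, exploiting the fact that the (unique up to convex hull) maximizer $\vtau^*$ of the full problem \eqref{equ:mixduality} already lies in the small set $\margpolyNull$. First I would recall from Theorem~\ref{thm:duality} (and Remark~1) that for any $\vtau \in \margpoly$ we have $\langle \vtheta, \vtau\rangle + \HABtau \le \Phi_{AB}(\vtheta)$, with equality attained precisely on the convex hull of the points $\vtau^{*k}(\vx) = \deltaIndic(\vx_B = \vx_B^{*k})\, p(\vx_A \mid \vx_B;\vtheta)$ corresponding to the maximizers $\vx_B^{*k}$ of $Q(\vx_B;\vtheta)$. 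Each such $\vtau^{*k}$ has $\tau^{*k}(\vx_B) = \deltaIndic(\vx_B = \vx_B^{*k})$, so by definition $\vtau^{*k} \in \margpolyNull$.

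Now take any $\N$ with $\margpolyNull \subseteq \N \subseteq \margpoly$. For the upper bound, since $\N \subseteq \margpoly$, maximizing $\langle \vtheta,\vtau\rangle + \HABtau$ over $\N$ can only be smaller: $\max_{\vtau \in \N}\{\langle\vtheta,\vtau\rangle + \HABtau\} \le \max_{\vtau\in\margpoly}\{\langle\vtheta,\vtau\rangle + \HABtau\} = \Phi_{AB}(\vtheta)$. For the matching lower bound, pick any single maximizer $\vx_B^*$ of $Q(\cdot;\vtheta)$ and the associated point $\vtau^* \in \margpolyNull \subseteq \N$; evaluating the objective there gives exactly $\langle\vtheta,\vtau^*\rangle + H_{A|B}(\vtau^*) = \Phi_{AB}(\vtheta)$, as shown in the proof of Theorem~\ref{thm:duality}. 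Hence $\max_{\vtau\in\N}\{\langle\vtheta,\vtau\rangle + \HABtau\} \ge \Phi_{AB}(\vtheta)$, and the two inequalities combine to give \eqref{equ:nonconvex_mixdaulity}.

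There is essentially no hard step here; the only thing to be careful about is the degenerate case where $Q(\vx_B;\vtheta)$ has zero elements in the conditional $p(\vx_A\mid\vx_B;\vtheta)$, so that the optimal $\vtheta$ associated to $\vtau^*$ has infinite entries — but this is exactly the situation already handled by the footnotes around the maximum-entropy distribution and in the proof of Theorem~\ref{thm:duality}, so the value $\langle\vtheta,\vtau^*\rangle + H_{A|B}(\vtau^*) = \Phi_{AB}(\vtheta)$ still holds by the same limiting/continuity argument. I would also remark that one does \emph{not} need $\N$ to be convex for this argument — it only needs to contain at least one optimal vertex — which is precisely what makes the corollary useful for the mean-field set $\margpoly_{mf}$ used in connecting to EM in Section~\ref{sec:EM}.
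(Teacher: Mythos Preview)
Your proposal is correct and follows essentially the same approach as the paper: both arguments rely on the observation that the maximizer $\vtau^*$ of \eqref{equ:mixduality} already lies in $\margpolyNull$, so shrinking the feasible set to any $\N$ with $\margpolyNull \subseteq \N \subseteq \margpoly$ cannot lower the optimal value. The paper states this in one sentence, while you spell out the two-sided sandwich explicitly and add useful remarks about non-convex $\N$ and the degenerate-support case.
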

\begin{proof}
For an arbitrary marginal MAP solution $\vx_B^{*}$, the $\vtau^{*}$ with ${\qtau^{*}}(\vx) = p(\vx | \vx_B = \vx_B^{*}; \vtheta)$ 
is an optimum of \eqref{equ:mixduality} and satisfies $\vtau^{*} \in \margpolyNull $. Therefore, restricting the optimization on $\margpolyNull$ (or any $\N$) does not change the maximum value of the objective function. %$F_{mix}(\vtau , \vtheta)$.
\end{proof}
\textbf{Remark.} Among all $\N$ satisfying $\margpolyNull  \subseteq \N \subseteq  \margpoly$, the marginal polytope $\margpoly$ is the smallest (and the unique) convex
set that includes $\margpoly^o$, i.e., it is the convex hull of $\margpoly^o$.

\begin{comment}
Given the structure of the marginal MAP problem, it is natural to consider a
coordinate ascent method that alternately optimizes $\tau(\vx_A | \vx_B)$ and $\tau(\vx_B)$. Denote
$\tau(\vx_A | \vx_B)$ by $\tau_{A|\vx_B}$, and $\tau(\vx_B)$ by $\tau_B$, so that $\vtau =
\tau_B \tau_{A | \vx_B}$.  
The coordinate update is 
%
\begin{align*}
%\label{equ:coord1}
&\text{sum:}\ \ &\tau^{n+1}_{A|\vx_B}\hspace{0em}&\gets  \hspace{0em} \arg \hspace{-1.21em}  \max_{\tau_{A|\vx_B} \in M_{A|B} }   F_{mix}( \tau_B^{n} \tau_{A|\vx_B}, \vtheta)&&& \\
%\text{where $ M_{A|B} = \{\tau(\vx_A| \vx_B)|  \vtau\in \margpoly \}$} \\
&\text{max:} \  \ &\tau^{n+1}_B \hspace{0em}&\gets \hspace{0em}   \arg \hspace{-.3em}  \max_{ \tau_B \in M_B}   \hspace{0em}  F_{mix}(\tau_{B}\tau_{A|\vx_B}^{n+1}, \vtheta)  &&&
%\text{where $ M_{A|B} = \{\tau(\vx_A| \vx_B)|  \vtau\in \margpoly \}$}\\
%\label{equ:coord2}
\end{align*}
where $\tau^n_{A|\vx_B}$, $\tau_B^n$ represent the solution at the $n^{\rm th}$ iteration,
and $\margpoly_{A|B}$ and $\margpoly_B$ are the set of $\tau_{A|\vx_B}$ and $\tau_B$
respectively. Note we have $\margpoly = \margpoly_{A|B} \times \margpoly_B$, where $\times$
represents the Cartesian product. %The optimization of $\tau_{A|B}$ corresponds to the
%summation in the primal problem, and $\tau_{B}$ the primal maximization. 
%As the maximization One can then apply approximation on this two parts
%separately. 
\end{comment}

To connect to EM, we define $\margpoly^{\times}$,  the set of
distributions in which $\vx_A$ and $\vx_B$ are independent, that is,  % $(\vx_A, \vx_B)$, i.e., 
$\margpoly^{\times} = \{\vtau \in \margpoly \colon 
\qtau(\vx) =\qtau(\vx_A) \qtau(\vx_B) \}$.
% $\margpoly$ to $\margpoly^{\times}$, set of
%distributions with a product form on pairs $(\vx_A, \vx_B)$, i.e., 
%$\margpoly^{\times} = \{\vtau \in \margpoly \colon 
%\qtau(\vx) =\qtau(\vx_A) \qtau(\vx_B) \}$.
%; it can be
%decomposed into $\margpoly^{\times} = \margpoly_A  \times \margpoly_B$, where $\margpoly_A$ is 
%the set of valid moments
%$\tau_A$. 
Since $\margpolyNull \subset \margpoly^{\times} \subset \margpoly$, the dual optimization \eqref{equ:mixduality} remains exact when restricted to $\margpoly^\times$, that is, 
%meaning that  $\margpoly^{\times}$ includes all the vertices of marginal polytope $\margpoly$. By Corollary~\ref{cor:????} we have that the dual representation in Theorem~\ref{thm:duality} remains exact when the optimization domain is restricted on $\margpoly^\times$, that is, 
\begin{align}
\Phi_{AB}(\vtheta) = \max_{\vtau \in \M^{\times}}  \{ \langle \vtheta, \vtau \rangle  + \HABtau \} = \max_{\vtau \in \M^{\times}}  \{ \langle \vtheta, \vtau \rangle  + \HAtau \}, 
\end{align}
where the second equality holds because  $\HABtau = \HAtau$ for $\vtau \in \M^{\times}$. 
%where the second equality holds because $\vx_A$ and $\vx_B$ are independent for distributions inside $\margpoly^{\times}$ and hence $\HABtau = \HAtau$.

Although $\margpoly^{\times}$ is no longer a convex set, it is natural to consider a coordinate update that alternately optimizes $\tau(\vx_A)$ and $\tau(\vx_B)$, 
\begin{equation}\begin{split}
\text{Updating sum part}:~~~~~~&  \vtau_A^{t+1} \gets \arg \!\!\!\! \max_{\vtau_A \in \margpoly_{A}}   \{ \langle\E_{{\qtau_B^{t}}}(\vtheta), \vtau_A \rangle +\HAtauA \} ,  \\
\text{Updating max part}: ~~~~~~& \vtau_B^{t+1} \gets \arg \!\!\!\! \max_{\vtau_B \in \margpoly_{B}} \langle\E_{\qtau_A^{t+1}}(\vtheta ), \vtau_B \rangle     ,   
\label{equ:dualEM}
\end{split}\end{equation}
where $\M_A$ and $\M_B$ are the marginal polytopes over $\vx_A$ and $\vx_B$, respectively.  
%
%where $t$ denotes the iteration number. 
% the $H_A(\tau_A)$ represents the entropy of $\tau_A(\vx_A)$, i.e., $H_A(\tau_A) =- \sum_{\vx_A} \tau_A(\vx_A) \log \tau_A(\vx_A)$. 
%where $q_A^{n}(\vx_A)$ and $q_B^{n}(\vx_B)$ are the maximum entropy distribution of $\tau_A^n \in \margpoly_A$ and $\tau_B^n \in \margpoly_B$. 
%As $\margpoly^{mf}_A = \margpoly_A$, and $\margpoly^{mf}_B = \margpoly^{mf}_B$, 
Note that the sum and max step each happen to be the dual of a sum-inference and max-inference problem, 
respectively. If we go back to the primal, and update the primal configuration 
$\vx_B$ instead of $\vtau_B$, \eqref{equ:dualEM} can be rewritten into
%Note that the E-step is a duality of the sum-inference problem, and the M-step is the duality of a max-inference problem.  
%\begin{align}
%&\text{E step}: &\tau_A^{n+1} & \gets    p(\vx_A | \vx_B^{n}; \vtheta)  \notag\\
%&\text{M step}: &\vx^{n+1}_B  & \gets \arg \max_{\vx_B} \E_{\tau_A^{n+1}}(\vtheta). 
%\label{equ:primalEM}
%\end{align}
%
\begin{equation*}
\begin{split}
\text{E step}:~~~~~~&\tau_{A}^{t+1}(\vx_A)  \gets    p(\vx_A | {\vx}_B^{t}; \vtheta) , \\
\text{M step}:~~~~~~&\vx^{t+1}_B   \gets \arg \max_{\vx_B} \E_{\tau_A^{t+1}}(\vtheta),
\end{split}
\end{equation*}
which is exactly the EM update, viewing $\vx_B$ as parameters and $\vx_A$ as hidden variables. Similar connections between EM and the coordinate ascent method on variational objectives has been discussed in \citet{Neal98} and \citet{Wainwright08}.

When the E-step or M-step are intractable, one can insert various
approximations. In particular, approximating $\margpoly_A$ by a
mean-field inner bound $\margpoly_A^{mf}$ leads to variational EM.  An interesting
observation is obtained by using a Bethe approximation \eqref{equ:bethe} to solve the E-step and a 
linear relaxation to solve the M-step; in this case, the EM-like update is equivalent to
solving
\begin{equation}
\max_{\vtau \in \localpoly^{\times}} \big \{  \langle \vtheta, \vtau \rangle   +  \sum_{i\in A}{\Hitau}  \ - \sum_{(ij)\in E_A} \Iijtau  \big \}, 
\label{equ:EMenergy}
\end{equation}
where $\localpoly^{\times}$ is the subset of $\localpoly$ in which $\tau_{ij} (x_i,
x_j) = \tau_i(x_i ) \tau_j (x_j)$ for $(ij)\in \cross$. Equivalently,
$\localpoly^{\times}$ is the subset of $\localpoly$ in which $\Iijtau = 0$ for $(ij)\in
\cross$. Therefore, \eqref{equ:EMenergy} can be treated as a special case of
\eqref{equ:Phitree} by taking $\rho_{ij} \to +\infty$, forcing the solution
$\tau^*$ to fall into $\localpoly^{\times}$.  As we discussed in Section~\ref{sec:globalopt}, EM represents an extreme of the tradeoff between convexity and integrality implied by Theorem~\ref{thm:betheglobalopt}, which strongly encourages vertex solutions by sacrificing convexity, and hence is likely to become stuck in local optima.

\section{Junction Graph Belief Propagation for Marginal MAP}
\label{sec:junctiongraph}
In the above,  we have restricted the discussion to pairwise models and pairwise entropy approximations, mainly for the purpose of clarity. In this section, we extend our algorithms to leverage higher order cliques, based on the junction graph representation  \citep{mateescu2010join, Koller_book}. 
%give a brief illustration of how to develop more advanced algorithms that take advantage of higher order entropy approximations. We focus on a junction graph approximation similar to \citep{mateescu2010join, Koller_book}, 
Other higher order methods, like generalized BP \citep{Yedidia_Bethe} or  their convex variants \citep{Wainwright_TRBP, Wiegerinck05}, can be derived similarly. %but are  leaved out for space. 

%analogous to generalized BP with Kikuchi free energy \citep{Yedidia_Bethe}, junction graph BP \citep{mateescu2010join, Koller_book}, which take advantage of higher order entropy approximations. We will mainly focus on develop a junction graph approximation similar to \citep{mateescu2010join} in this section, although the other approximation methods are straightforward to write down. 

\newcommand{\Jmaxset}{\pi}
\newcommand{\JG}{\mathcal{G}}
\newcommand{\JV}{\mathcal{V}}
\newcommand{\JE}{\mathcal{E}}
\newcommand{\JC}{\mathcal{C}}
\newcommand{\JS}{\mathcal{S}}
\newcommand{\JA}{\mathcal{A}}
\newcommand{\JB}{\mathcal{B}}
\newcommand{\entcolon}{\colon}
\renewcommand{\L}{\mathbb{L}(\JG)}
\newcommand{\pa}[1]{{\mathrm{pa}({#1})}}
\newcommand{\JDec}[0]{\mathcal{D}}%{\hat{V}}
\newcommand{\JChan}[0]{\mathcal{R}}%{\hat{V}}
\renewcommand{\v}[1]{\boldsymbol{#1}}
\newcommand{\bpa}[1]{{\pi(#1)}}%{{\bar{\mathrm{pa}}(#1)}}
\newcommand{\strong}{consistent }
\newcommand{\bdelta}[0]{\mathcal{\boldsymbol{\delta}}}
\newcommand{\EU}[0]{\mathrm{EU}}
\newcommand{\meu}{\mathrm{MEU}}
\newcommand{\MEU}{\mathrm{MEU}}
\newcommand{\Hcktau}{H_{c_k}(\vtau)}
\newcommand{\Hskltau}{H_{s_{kl}}(\vtau)}
\newcommand{\Hbktau}{H_{\Jmaxset_{k}}(\vtau)}
\newcommand{\Hcbktau}{H_{c_k | \Jmaxset_{k}}(\vtau)}

% A cluster graph is a graph of clusters of variables, that is, A junction graph $\G = (\V, \Edge)$ is an undirected graph, where each node $k \in \V$ is associated with a subset of  is a tripe $(\G, \C, \S)$
 %
%Junction graphs provide a procedural framework to approximate the dual \eqref{equ:dual}.  
For notation, a cluster graph is a graph of subsets of variables (called clusters). Formally, it is a triple $(\JG, \JC, \JS)$, where $\JG =
(\JV, \JE)$ is an undirected graph, with each node $k \in \JV$ associated with
a cluster $c_k \in \JC$, and each edge $(kl) \in \JE$ with a
subset $s_{kl}  \in \JS$  (called separators) satisfying $s_{kl} \subseteq c_k \cap c_l$.
We assume that $\JC$ subsumes the index set $\mathcal{I}$, that is, for any
$\alpha \in \mathcal{I}$, we can assign it with a $c_k \in \JC$, denoted $c[\alpha]$,
such that $\alpha \subseteq c_k$.
%; let $\alpha[c_k] = \{ \alpha \colon c[\alpha] = c_k \}$.
 In this case, we can reparameterize $\vtheta
= \{ \theta_{\alpha} \colon \alpha \in \mathcal{I}\} $ into  $\vtheta = \{
\theta_{c_k} \colon k \in \JV \} $ by taking 
%$ \theta_{c_k}$ to be the sum of values in $\{  \theta_{\alpha} \colon c[\alpha] = c_k \}$, 
%$ \theta_{c_k} = \sum_{\alpha[c_k]}  \theta_{\alpha}$, 
$\displaystyle \theta_{c_k} = \!\!\!\!\!\sum_{\alpha \colon c[\alpha] = c_k}  \!\!\!\!\theta_{\alpha}$, 
without changing the distribution. 
Therefore, we simply assume $\JC = \I$ in this paper without loss of generality. % we would not distinguish between different reparameterization forms later,
% since they correspond to the same joint distribution.  A cluster graph must
% be \emph{family-preserving} -- for each $\alpha \in \mathcal{I}$, there is a
% exist a cluster $c_k$ such that $ \alpha \subset c_k $. 
%
%For a subset of variables $\beta \subseteq V$, its induced sub-cluster graph
%$\JG_{\beta}$ is defined as $ (\JV_{\beta}, \JE_{\beta})$, where $\JV_{\beta}
%= \{k \in \JV | \beta \subseteq c_k \}$ and $\JE_{\beta} = \{(kl) \in \JE |
%\beta \subseteq s_{kl} \}$. 
A cluster graph is called a \emph{junction graph} if it satisfies the
\emph{running intersection property} -- for each $i \in V$, the induced
sub-graph consisting of the clusters and separators that include $i$ is a
connected tree. A junction graph is a junction tree if $\JG$ is a tree. % structured graph.

To approximate the variational dual form, we first replace $\M$ with a higher order 
locally consistent polytope $\L$, which is the set of local marginals $\vtau = \{ \tau_{c_k}, \tau_{s_{kl}} \colon k \in \JV, (kl)\in \JE \}$ that are consistent on the intersections of the clusters and separators, that is,  
$$\L = \{ \vtau \colon   \sum_{x_{c_k \setminus s_{kl}}}\tau_{c_k}(x_{c_k}) = \tau(x_{s_{kl}}), \tau_{c_k}(x_{c_k}) \geq 0, \text{for $\forall ~ k\in \JV, (kl)\in \JE$} \}.  $$
Clearly, we have $\M \subseteq \L$ and that $\L$ is tighter than the pairwise polytope $\localpoly$ we used previously. 

We then approximate the joint entropy term by a linear combination of the entropies over the clusters and separators, 
%the standard sum-inference free energy \eqref{equ:dual} by 
\begin{align*}
H(\vtau)  \approx  \sum_{k \in \JV}\Hcktau - \sum_{(kl) \in \JE} \Hskltau, 
%\max_{\vtau \in \L } \{ \langle \vtheta , \vtau  \rangle +  \sum_{k \in \JV}\Hcktau - \sum_{(kl) \in \JE} \Hskltau  \},
%\label{equ:sumprod_jgraph}
\end{align*}
where $\Hcktau$ and $\Hskltau$ are the entropy of the local marginals $\tau_{c_k}$ and $\tau_{s_{kl}}$, respectively. %that is, $\Hcktau$. 
 %the joint entropy is approximated by a linear combination of the entropies of local marginals.  
Further, we approximate $\HBtau$ by a slightly more restrictive entropy decomposition, 
$$
\HBtau \approx \sum_{k \in \JV}  H_{\Jmaxset_k}(\vtau), 
$$
where $\{\Jmaxset_k  \colon k \in \JV \}$ is a non-overlapping partition of the max nodes $B$ satisfying $\Jmaxset_k \subseteq c_k$ for $\forall k \in \JV$.
In other words, $\Jmaxset$ represents an assignment of each max node $x_b \in B$ into a cluster $k$ with $x_b \in \Jmaxset_k$.
%where $\{\Jmaxset_k \colon \Jmaxset_k \subseteq c_k, \forall k \in \JV \}$ forms a partition of $B$, that is, $\cup_{k \in \JV} \Jmaxset_k = B$ and $\Jmaxset_k \cap \Jmaxset_l  = \emptyset, \forall k, l \in \JV$. 
Let $\JB$ be the set of clusters $k \in \JV$ for which $\Jmaxset_k \neq \emptyset$, %, i.e., $\JB = \{ k \in \JC \colon \pi_k \neq \emptyset \}$ and 
and call $\JB$ the \emph{max-clusters}; correspondingly, call $\JA = \JV \setminus \JB$ the \emph{sum-clusters}. See \figref{fig:junction_example} for an example.
%\todo{Example here?  Shaded clusters?} 
%
\begin{figure*}[t]     % \usepackage{subfigure}
\begin{tabular}{ccc}
%\hspace{4em}\scalebox{0.95}
 \qquad \includegraphics[height= .25\textwidth]{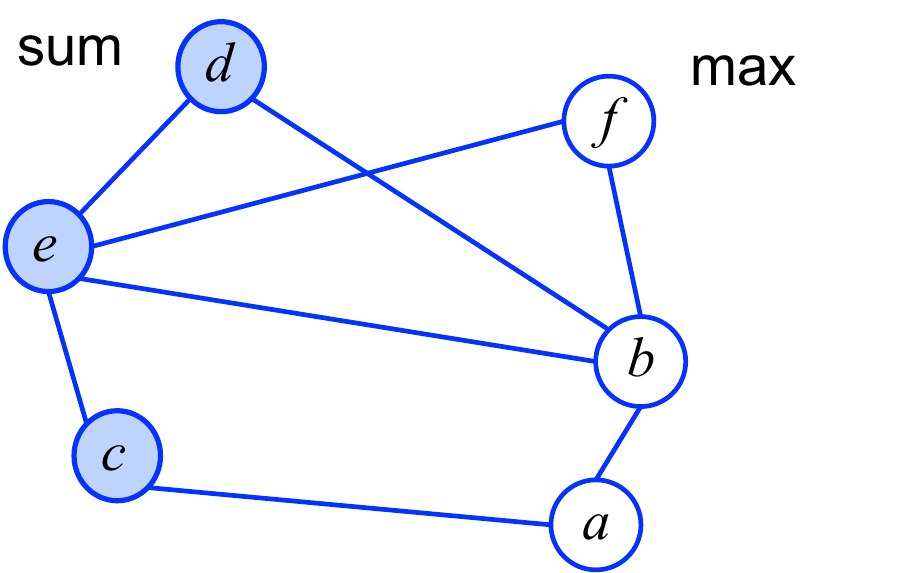}    &  &
\raisebox{1em}{\includegraphics[height= .2\textwidth]{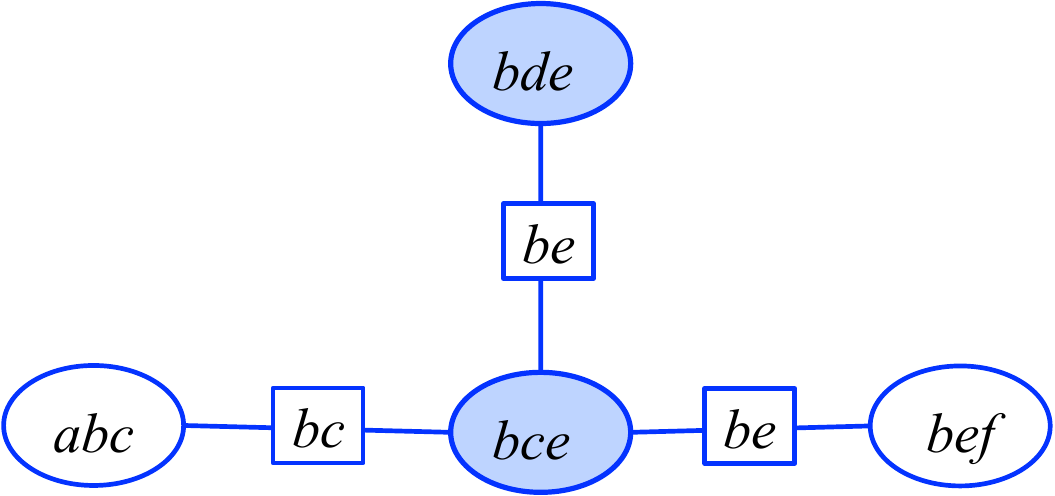}} \\
  {\small (a) } & &  {\small (b) }
  %\qquad\qquad\qquad & 
\end{tabular}
\caption{(a) An example of marginal MAP problem, where $d,c,e$ are sum nodes (shaded) and $a, b, f$ are max nodes.  
(b) A junction graph of (a).  Selecting a partitioning of max nodes, $\Jmaxset_{bde}=\Jmaxset_{bef}=\emptyset$, $\Jmaxset_{abc} = \{a,b\}$, and $\Jmaxset_{bef}=\{f\}$,
results in $\{bde\}, \{bce\}$ being sum clusters (shaded) and $\{abc\}, \{bef\}$ being max clusters.
%
%where $\{bde\}, \{bce\}$ are sum clusters (shaded; $\Jmaxset_1=\Jmaxset_2=\emptyset$), and $\{abc\}, \{bef\}$ are max clusters (with, e.g., $\Jmaxset_3 = \{a,b\}$ and $\Jmaxset_4=\{f\}$).
}
\label{fig:junction_example}
\end{figure*}

Overall, the marginal MAP dual form in \eqref{equ:mixduality} is approximated by 
\begin{align}
%\max_{\vtau \in \L} \big\{   \langle \vtheta , \vtau \rangle + \sum_{i\in \JV} \Hcbktau -  \sum_{(ij)\in \JE} \Hskltau \big\}
\max_{\vtau \in \L} \big\{   \langle \vtheta , \vtau \rangle +  \sum_{k \in \JA}  \Hcktau   +   \sum_{k \in \JB} \Hcbktau -  \sum_{(kl )\in \JE} \Hskltau \big\}
\label{equ:jgraphdualobj}
\end{align}
where $\Hcbktau = \Hcktau - \Hbktau$. 
Optimizing \eqref{equ:jgraphdualobj} using a method similar to the derivation of mixed-product BP in Algorithm~\ref{alg:mix_product_msg}, we obtain a  ``mixed-product" junction graph belief propagation, given in Algorithm~\ref{alg:mix_jgraph_BP}. 
%\begin{align}
%\begin{array}{c}\mbox{\bf \small Sum messages:}\\\mbox{\small ( from sum clusters)}\end{array}&\ \ 
%\mbox{{\bf \small Sum messages} \small (sum clusters) }: &
% m_{k \to l} \propto  \sum_{x_{c_k \setminus s_{kl}} } \psi_{c_k} m_{\sim k \setminus l}, 
% \label{equ:summessage} \\
%\begin{array}{c}\mbox{\bf \small Argmax messages:}\\\mbox{\small (from max clusters)}\end{array} &\ \ 
%\mbox{{\bf \small Argmax messages} \small (max clusters) }: &
% m_{k \to l} \propto   \sum_{x_{c_k \setminus s_{kl}} }( \psi_{c_k} m_{\sim k \setminus l} ) \cdot \deltaIndic[x_{\Jmaxset_k} \in \X^*_{\Jmaxset_k}],  
% \label{equ:meumessage}
%\end{align}
%where $\X^*_{\Jmaxset_k} = \argmax_{x_{\Jmaxset_k}}  \sum_{x_{c_k \setminus \Jmaxset_k}} b(x_{c_k})$ and $b(x_{c_k}) =  \psi_{c_k} m_{\sim k}$. %%\prod_{k' \in \mathcal{N}
%
%
\begin{algorithm}[tb]
\caption{Mixed-product Junction Graph BP} 
\label{alg:mix_jgraph_BP}
\begin{algorithmic}
\STATE 1. Passing messages  between clusters on the junction graph until convergence: 
%\begin{equation}
%\begin{tabular}{clr}
%a&b&c
%\end{tabular}
%\end{equation}
\begin{align*}
&\begin{tabular}{c}{$\JA \to \JA \cup \JB$}:\\ {(\small sum-product)}\end{tabular} && m_{k \to l}(x_{s_{kl}}) \propto  \sum_{x_{c_k \setminus s_{kl}} } \psi_{c_k}(x_{c_k}) m_{\sim k \setminus l}(x_{c_k})  ,  \\
% \hspace{10em} \text{(Sum-product message)} \\% \label{equ:jiang_product1}\\
&\begin{tabular}{c}{$\JB \to \JA \cup \JB$}:\\ {(\small argmax-product)}\end{tabular}  && m_{k \to l}(x_{s_{kl}})  \propto \sum_{x_{c_k \setminus s_{kl}} }( \psi_{c_k}(x_{c_k})  m_{\sim k \setminus l}(x_{c_k})  ) \cdot \deltaIndic[x_{\Jmaxset_k} \in \X^*_{\Jmaxset_k}],  \\
% ~~~~~ \text{(Argmax-product message)} \\   %\label{equ:jiang_product2}
&&& \hspace{-.2\textwidth}  \text{where $\X^*_{\Jmaxset_k} = \argmax_{x_{\Jmaxset_k}}  \sum_{x_{c_k \setminus \Jmaxset_k}} b_k(x_{c_k})$, } \\
&&&  \hspace{-.27\textwidth}   \text{\ \ \ \ \  \ \ \ $b_k(x_{c_k}) =  \psi_{c_k}(x_{c_k}) \!\!\! \prod_{k' \in \mathcal{N}(k)} \!\!\! m_{k' \to k}(x_{s_{k'k}})$~~  and ~~$m_{\sim k \setminus l}(x_{c_k}) ~ = \!\!\!\!\!\!\! \prod_{k' \in \mathcal{N}(k)\setminus \{l\}}  \!\!\! \!\!\! m_{k'\to k}(x_{s_{k'k}})$.}
\end{align*}
\STATE 2. Decoding: 
$\displaystyle \vx^*_{\Jmaxset_k} = \argmax_{x_{\Jmaxset_k}}  \sum_{x_{c_k \setminus \Jmaxset_k}} b_k(x_{c_k})$
for $\forall k\in \JB$. %where $b_i(x_i) \propto \psi_i(x_i) m_{\sim i}(x_i)$.
\end{algorithmic}
\end{algorithm} 

Similarly to our mixed-product BP in Algorithm~\ref{alg:mix_product_msg}, Algorithm~\ref{alg:mix_jgraph_BP} also admits an intuitive reparameterization interpretation and a strong local optimality guarantee. 
Algorithm~\ref{alg:mix_jgraph_BP} can be seen as a special case of a more general junction graph BP algorithm derived in \citet{liu12b} for solving maximum expected
utility tasks in decision networks.  For more details, we refer the reader to that work.

\section{Experiments}
\label{sec:experiments}
We illustrate our algorithms on both simulated models and more realistic diagnostic Bayesian networks taken from the UAI08 inference challenge. 
We show that our Bethe approximation algorithms perform best among all the tested algorithms, including \citet{Jiang10}'s hybrid message passing and a state-of-the-art local search algorithm \citep{Park04}. 

We implement our mixed-product BP in Algorithm~\ref{alg:mix_product_msg} with Bethe weights ({\tt mix-product (Bethe)}), the regular sum-product BP ({\tt sum-product}), max-product BP ({\tt max-product}) and \citet{Jiang10}'s hybrid message passing (with Bethe weights) in Algorithm~\ref{alg:jiang_product} ({\tt Jiang's method}), where the solutions are all extracted by maximizing the singleton marginals of the max nodes. For all these algorithms, we run a maximum of 50 iterations; in case they fail to converge, we run 100 additional iterations with a damping coefficient of $0.1$. We initialize all these algorithms with 5 random initializations and pick the best solution; for {\tt mix-product (Bethe)} and {\tt Jiang's method}, we run an additional trial initialized using the sum-product messages, which was reported to perform well in \citet{Park04} and \citet{Jiang10}. We also run the proximal point version of mixed-product BP with Bethe weights ({\tt Proximal (Bethe) }), which is Algorithm~\ref{alg:proximal_point} with both $\HAB$ and $\HB$ approximated by Bethe approximations. 
%which is guaranteed to converge when $G$ a tree. 

We also implement the TRW approximation, but only using the convergent proximal point algorithm, because the TRW upper bounds are valid only when the algorithms converge. 
The TRW weights of $\hat{H}_{A|B}$ are constructed by first (randomly) selecting spanning trees  of $G_A$, and then augmenting each spanning tree with one uniformly selected edge in $\partial_{AB}$; 
%joining each connected component of $G_A$ with one uniformly selected edge in $\partial_{AB}$; 
the TRW weights of $\hat{H}_B(\vtau)$ are constructed to be provably convex, using the method of TRW-S in \citet{trws}. We run all the proximal point algorithms for a maximum of 100 iterations, with a maximum of 5 iterations of weighted message passing updates \eqref{equ:weightedmsg}-\eqref{equ:weighted_marginals} for the inner loops (with 5 additional damping with 0.1 damping coefficient). 
   
In addition, we compare our algorithms with SamIam, which is a state-of-the-art implementation of the local search algorithm for marginal MAP \citep{Park04}; we use its default Taboo search method with a maximum of 500 searching steps, and report the best results among 5 trials with random initializations, and one additional trial initialized by its default method (which sequentially initializes $x_i$ by maximizing $p(x_{i} | x_{\mathrm{pa}_i})$ along some predefined order). 
%, with $x_{\mathrm{pa}_i}$ being initialized before). 

We also implement an EM algorithm, whose expectation and maximization steps are approximated by sum-product and max-product BP, respectively.  We run EM with 5 random initializations and one initialization by sum-product marginals, and pick the best solution.

\textbf{Simulated Models.} 
%
%
%For the simulated mode,
We consider pairwise models over discrete random variables taking values in $\{-1,0, +1\}^n$, 
\begin{equation*}
p(\vx) \propto \exp\big[\sum_i \theta_{i}(x_i) + \sum_{(ij)\in E} \theta_{ij}(x_{i}, x_{j})\big].
\end{equation*}
The value tables of $\theta_i$ and $\theta_{ij}$ are randomly generated from normal distribution, $\theta_{i}(k) \sim \mathrm{Normal}(0, 0.01)$, $\theta_{ij}(k,l) \sim \mathrm{Normal}(0, \sigma^2)$, where $\sigma$ controls the strength of coupling. Our results are averaged on 1000 randomly generated sets of parameters. 

We consider different choices of graph structures and max / sum node patterns: 
\begin{enumerate}
\item \emph{Hidden Markov chain} with 20 nodes, as shown in \figref{fig:hiddenchain}.  
\item \emph{Latent tree models}. We generate random trees of size 50, by finding the minimum spanning trees of random symmetric matrices with elements drawn from $\mathrm{Uniform}([0,1])$. We take the leaf nodes to be max nodes, and the non-leaf nodes to be sum nodes.  See \figref{fig:rand_tree_result}(a) for a typical example. 
\item \emph{$10\times10$ Grid} with max and sum nodes distributed in two opposite chess board patterns shown in \figref{fig:chessboard_result}(a) and \figref{fig:chessboard_rev_result}(a), respectively.  In \figref{fig:chessboard_result}(a), the sum part is a loopy graph, and the max part is a (fully disconnected) tree; in  \figref{fig:chessboard_rev_result}(a), the max and sum parts are flipped. 
%the case is exactly opposite in \figref{fig:chessboard_rev_result}(a).%Note the sum part in \figref{fig:chessboard_result}(a) is a loopy graph, while that  in \figref{fig:chessboard_result}(a) 
\end{enumerate}

The results on the hidden Markov chain are shown in \figref{fig:hiddenchain_result}, where we plot in panel (a) different algorithms' percentages of obtaining the globally optimal solutions among 1000 random trials, 
and in panel (b) their relative energy errors defined by $Q(\hat{\vx}_B; \vtheta) - Q(\vx_B^*; \vtheta)$, where $\hat{\vx}_B$ is the solution returned by the algorithms, and $\vx_B^*$  is the true optimum. 

The results of the latent tree models and the two types of 2D grids are shown in \figref{fig:rand_tree_result},  \figref{fig:chessboard_result} and \figref{fig:chessboard_rev_result}, respectively. Since the globally optimal solution $\vx_B^*$ is not tractable to calculate in these cases, we report the approximate relative error defined by $Q(\hat{\vx}_B; \vtheta) - Q(\tilde{\vx}_B; \vtheta)$, where $\tilde{\vx}_B$ is the best solution we found across all algorithms.
%Note that in all our models, $Q(\vx_B; \vtheta)$ is tractable to evaluate for a given $\vx_B$; our algorithms can be applied 

%The trends in \figref{fig:rand_tree_result} and \figref{fig:chessboard_result} are similar to that of hidden Markov chain, probably because all of them have tree structured max parts. %On the other hand, the trends in \figref{fig:chessboard_rev_result} is different in some 

\textbf{Diagnostic Bayesian Networks.} 
We also test our algorithms on two diagnostic Bayesian networks taken from the UAI08 Inference Challenge, where we construct marginal MAP problems by randomly selecting varying percentages of nodes to be max nodes. Since these models are not pairwise, we implement the junction graph versions of {\tt mix-product (Bethe)} and {\tt proximal (Bethe)} shown in Section~\ref{sec:junctiongraph}.  \figref{fig:uai_result} shows the approximate relative errors of our algorithms and {\tt local search (SamIam)} as the percentage of the max nodes varies. 
%We can see that the relative performance of the local search method decreases as the percentage of max nodes increases, probably because it is difficult to search on large solution spaces by local movements. 

\textbf{Insights.}
Across all the experiments, we find that {\tt mix-product (Bethe)}, {\tt proximal (Bethe)} and {\tt local search (SamIam)} significantly outperform all the other algorithms, while {\tt proximal (Bethe)} outperforms the two others in some circumstances. In the hidden Markov chain example in \figref{fig:hiddenchain_result}, these three algorithms almost always (with probability $\geq 99 \%$) find the globally optimal solutions. However, the performance of SamIam tends to degenerate when the max part has loopy dependency structures (see \figref{fig:chessboard_rev_result}), or when the number of max nodes is large (see \figref{fig:uai_result}), both of which make it difficult to explore the solution space by local search. On the other hand, {\tt mix-product (Bethe)} tends to degenerate as the coupling strength $\sigma$ increases (see \figref{fig:chessboard_rev_result}), probably because its convergence gets worse as $\sigma$ increases. 

We note that our TRW approximation gives much less accurate solutions than the other algorithms, but is able to provide an upper bound on the optimal energy. Similar phenomena have been observed for TRW-BP in standard max- and sum- inference. 

The hybrid message passing of \citet{Jiang10} is significantly worse than {\tt mix-product (Bethe)}, {\tt proximal (Bethe)} and {\tt local search (SamIam)}, but is otherwise the best among the remaining algorithms. EM performs similarly to (or sometimes worse than) Jiang's method. 

The regular max-product BP and sum-product BP are among the worst of the tested algorithms, indicating the danger of approximating mixed-inference by pure max- or sum- inference. 
Interestingly, the performances of max-product BP and sum-product BP have opposite trends: In \figref{fig:hiddenchain_result}, \figref{fig:rand_tree_result} and \figref{fig:chessboard_result}, where the max parts are fully disconnected and the sum parts are connected and loopy, max-product BP usually performs worse than sum-product BP, but gets better as the coupling strength $\sigma$ increases; sum-product BP, on the other hand, tends to degenerate as $\sigma$ increases. In \figref{fig:chessboard_rev_result}, where the max / sum pattern is reversed 
%from that of \figref{fig:chessboard_result} 
(resulting in a larger, loopier max subgraph), max-product BP performs better than sum-product BP.

\begin{figure*}[t]     % \usepackage{subfigure}
\begin{tabular}{cc}
%\raisebox{2.3em}{ \includegraphics[width= .15\textwidth]{figures_jmlr/nstates3_legend}}& %mixhiddenchain_legend.pdf} } &  
\!\!\!\!\!\!
\scalebox{0.95}{\includegraphics[width= .32\textwidth]{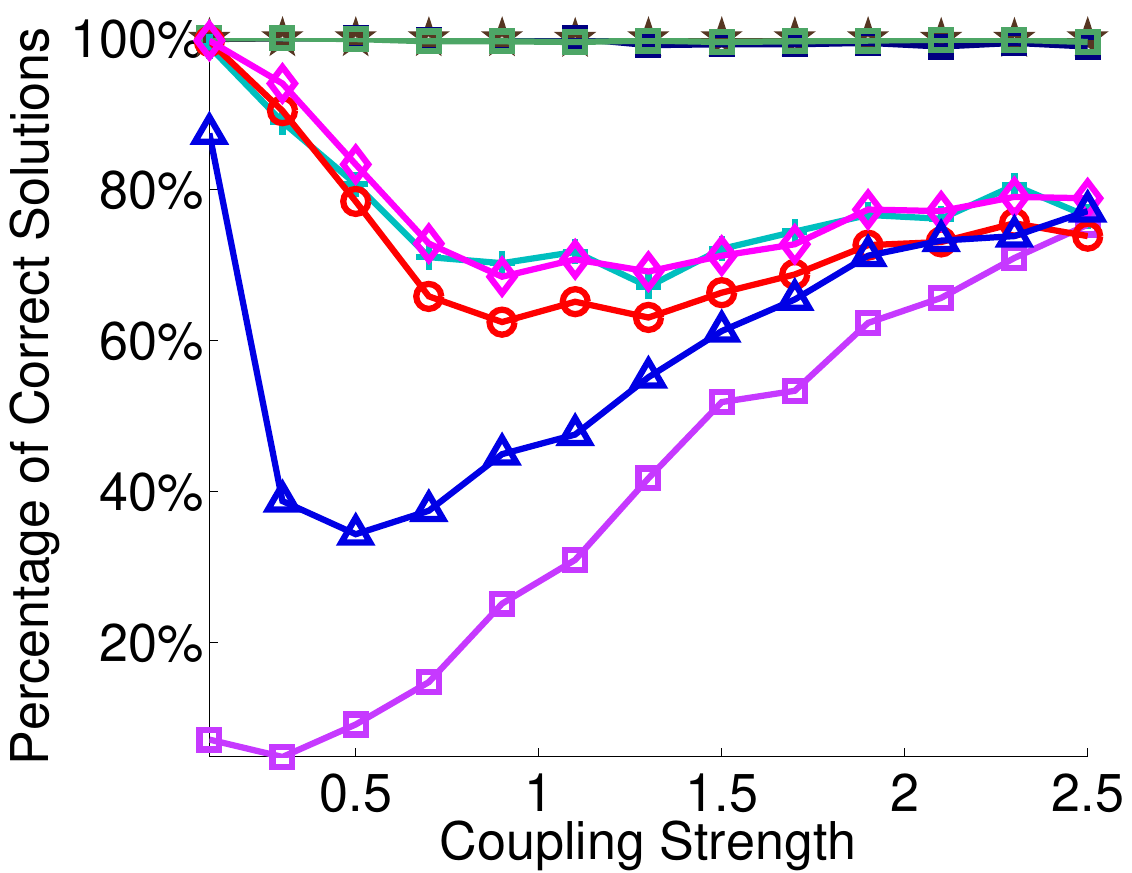}}  
 &%good_timecourse_1.pdf}   \\
\hspace{2.8em}\scalebox{0.95}{\includegraphics[width= .32\textwidth]{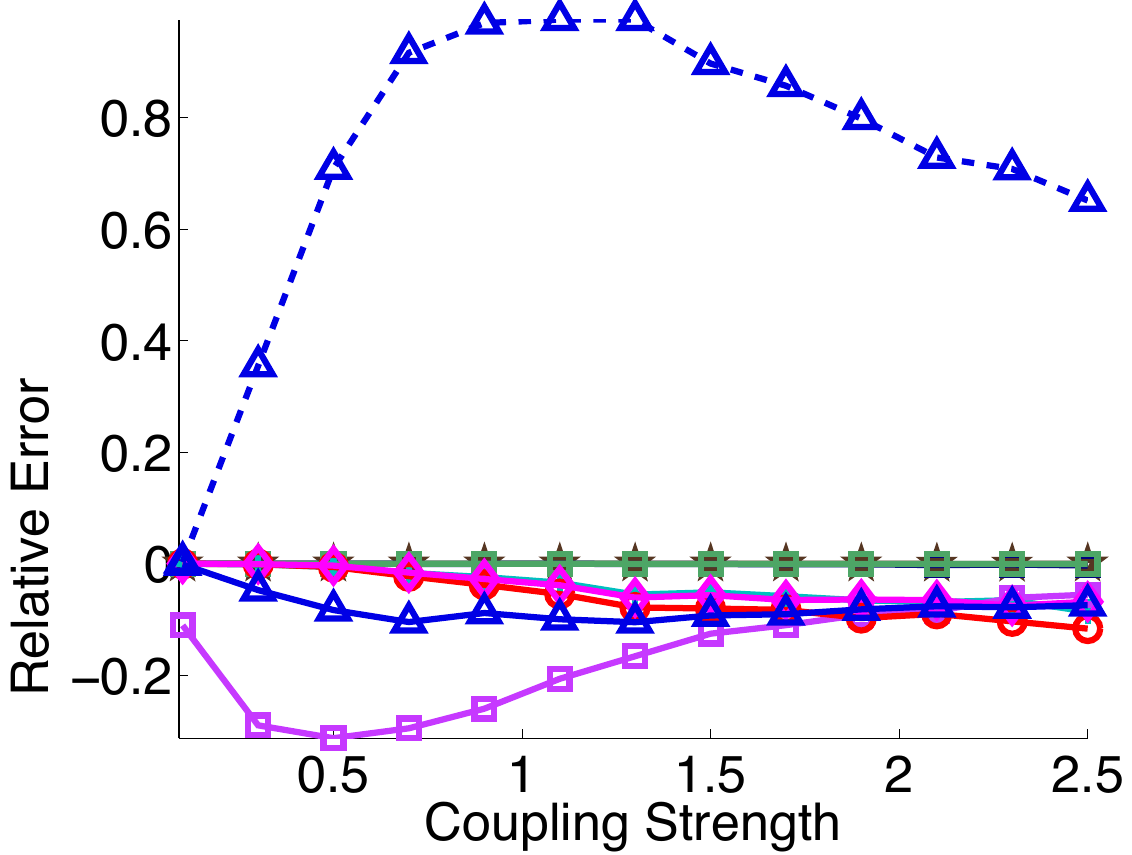} \qquad   %mixhiddenchain_correct_try2.pdf}   &
\hspace{-2.6em}\raisebox{.2em}{\begin{tikzpicture}
% \draw[gray,fill=gray,path fading=east]  (0,0) -- (0,.8) -- (.6,3) -- (.6,0) -- cycle;
\shade[left color=gray!50!white,right color=gray!50!white] (0,0) -- (0,.8) -- (.6,2.8) -- (.6,-.4) -- cycle;
 \draw[gray!50!white]  (.6,-.4) -- (4.6,-.4) -- (4.6,2.8) -- (.6,2.8) -- cycle;
% \draw (1,0) -- (2,0) -- (2,2) -- (1,2) -- cycle;
\end{tikzpicture}}
\hspace{-10.5em}\raisebox{.5em}{\includegraphics[width= .25\textwidth]{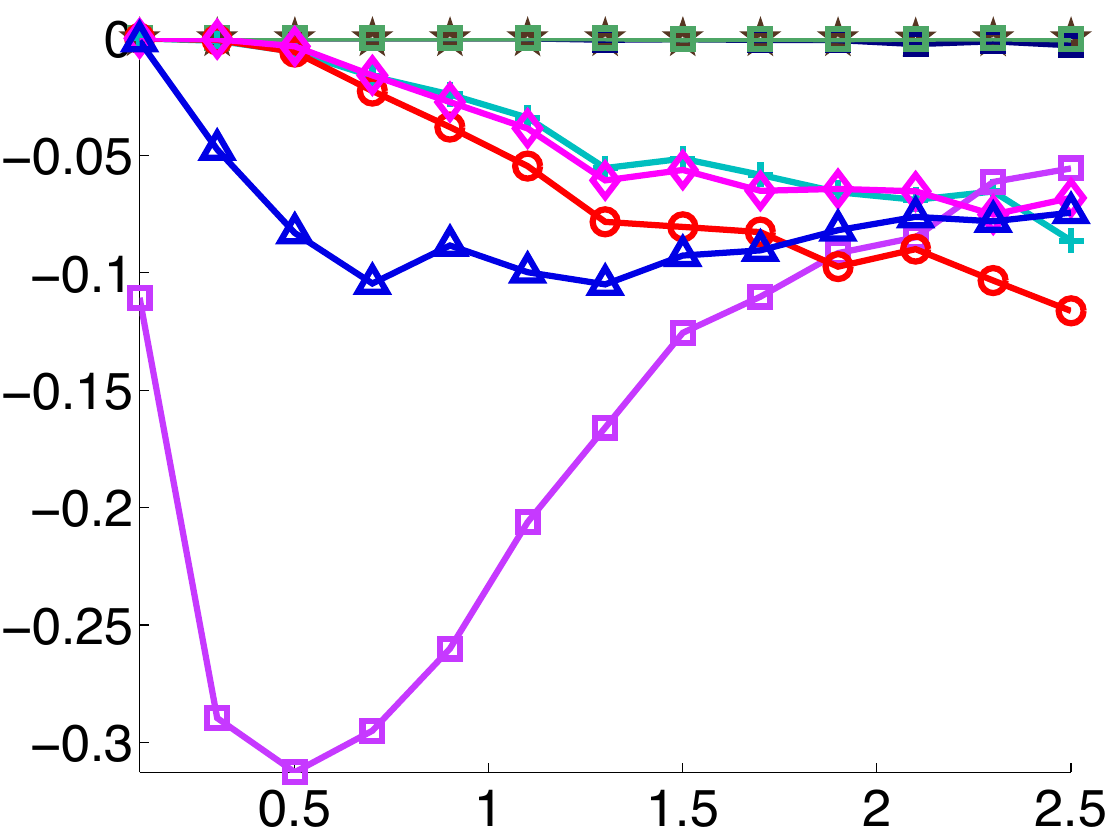}}}
   \\
    {\small (a) } & {\small (b)} %& {\small (c)}
\begin{picture}(0,0)
\put(-160,70){\includegraphics[width= .16\textwidth]{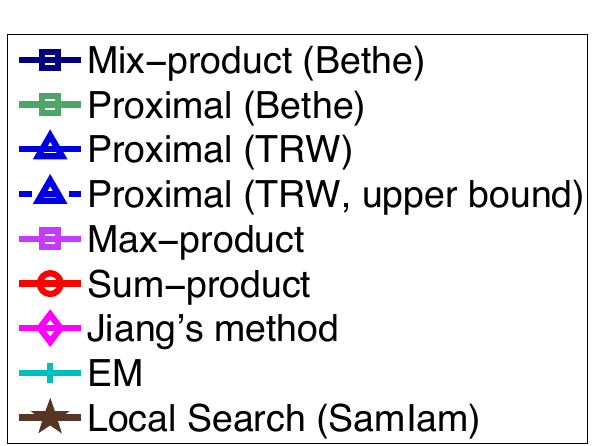}}
%\put(-190,25){\includegraphics[width= .15\textwidth]{figures_jmlr/hmm_state3_assym_LEGEND_withbound.pdf}}
\end{picture}
\end{tabular}
\caption{Results on the hidden Markov chain in \figref{fig:hiddenchain} (best viewed in color). (a)
different algorithms' probabilities of obtaining the globally optimal solution among 1000 random trials. {\tt Mix-product (Bethe)}, {\tt Proximal (Bethe)} and {\tt Local Search (SamIam)} almost always (with probability $\geq 99\%$) find the optimal solution.   
(b) The relative energy errors of the different algorithms, and the upper bounds obtained by {\tt Proximal (TRW)} as a function of coupling strength $\sigma$.} 
\label{fig:hiddenchain_result}
\end{figure*}
\begin{figure*}[t]     % \usepackage{subfigure}
\begin{tabular}{cc}
%\raisebox{2.3em}{ \includegraphics[width= .15\textwidth]{figures_jmlr/nstates3_legend}}& %mixhiddenchain_legend.pdf} } &  
\!\!\!\!\!\!
\raisebox{0em}{\scalebox{0.9}{\includegraphics[width= .29\textwidth]{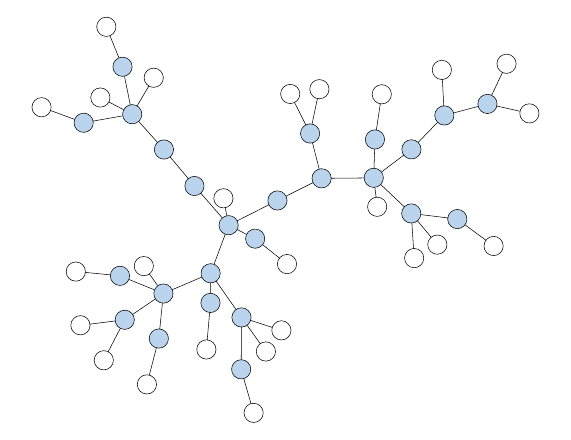}}}  
 &%good_timecourse_1.pdf}   \\
\hspace{4em}\scalebox{0.95}{\includegraphics[width= .32\textwidth]{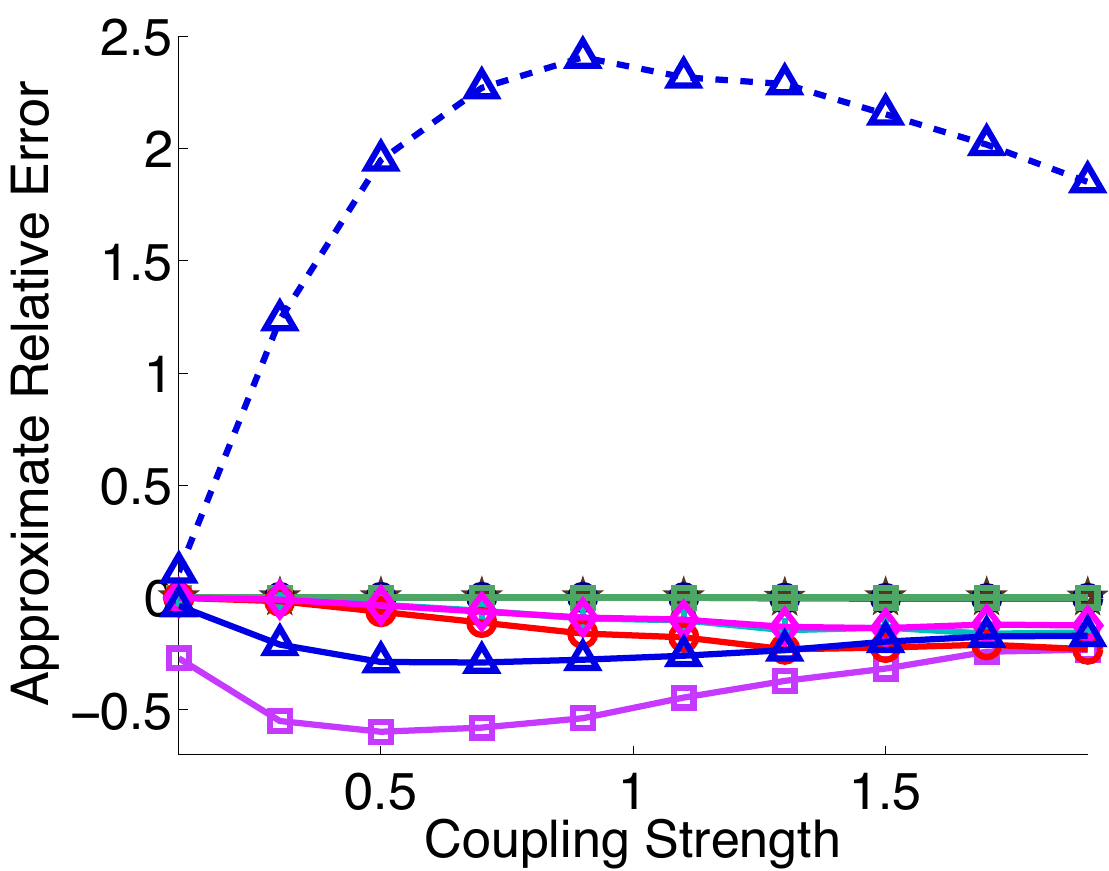} \qquad   %mixhiddenchain_correct_try2.pdf}   &
\hspace{-2.3em}\raisebox{.2em}{\begin{tikzpicture}
% \draw[gray,fill=gray,path fading=east]  (0,0) -- (0,.8) -- (.6,3) -- (.6,0) -- cycle;
\shade[left color=gray!50!white,right color=gray!50!white] (0,0) -- (0,.8) -- (.6,2.8) -- (.6,-.4) -- cycle;
 \draw[gray!50!white]  (.6,-.4) -- (4.6,-.4) -- (4.6,2.8) -- (.6,2.8) -- cycle;
% \draw (1,0) -- (2,0) -- (2,2) -- (1,2) -- cycle;
\end{tikzpicture}}
\hspace{-10.5em}\raisebox{.5em}{\includegraphics[width= .25\textwidth]{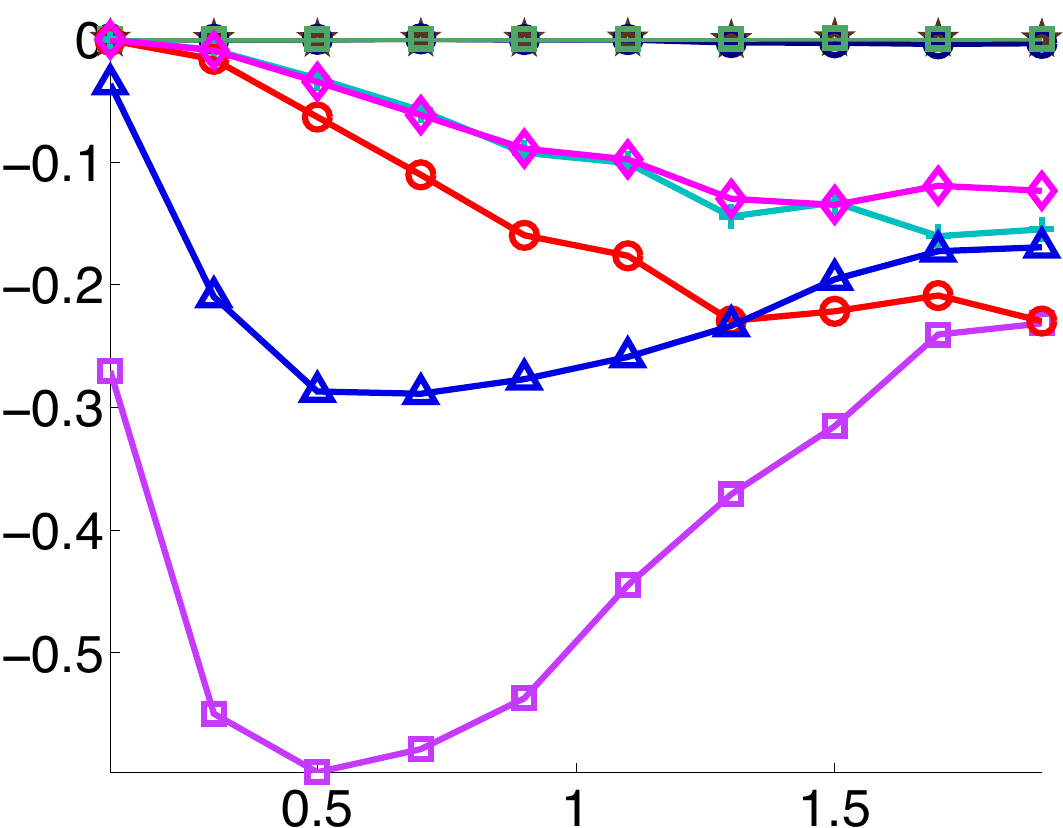}}}   \\%mixhiddenchain_err_2.pdf}   &
 {\small (a) } & {\small (b)} %& {\small (c)}
\begin{picture}(0,0)
\put(-182,20){\includegraphics[width= .16\textwidth]{figures_jmlr/hmm_state3_assym_LEGEND_withbound.pdf}}
%\hspace{-8em} \includegraphics[width= .18\textwidth]{figures_jmlr/hmm_state3_assym_LEGEND_withbound.pdf}  
\end{picture}
\end{tabular}
\caption{(a) A typical latent tree model, whose leaf nodes are taken to be max nodes (white) and non-leaf nodes to be sum nodes (shaded). 
(b) The approximate relative energy errors of different algorithms, and the upper bound
obtained by {\tt Proximal (TRW)} as a function of coupling strength $\sigma$.} %{\tt Mix-product (Bethe)} and {\tt Proximal (Bethe)} perform significantly better than all the other algorithms.  

\label{fig:rand_tree_result}
\end{figure*}
\begin{figure*}[tbh]     % \usepackage{subfigure}
\begin{tabular}{cc}
%\raisebox{2.3em}{ \includegraphics[width= .15\textwidth]{figures_jmlr/nstates3_legend}}& %mixhiddenchain_legend.pdf} } &  
%\!\!\!\!\!\!
\raisebox{5.5em}{\begin{tabular}{c}
\raisebox{0em}{\scalebox{1}{\includegraphics[width= .15\textwidth]{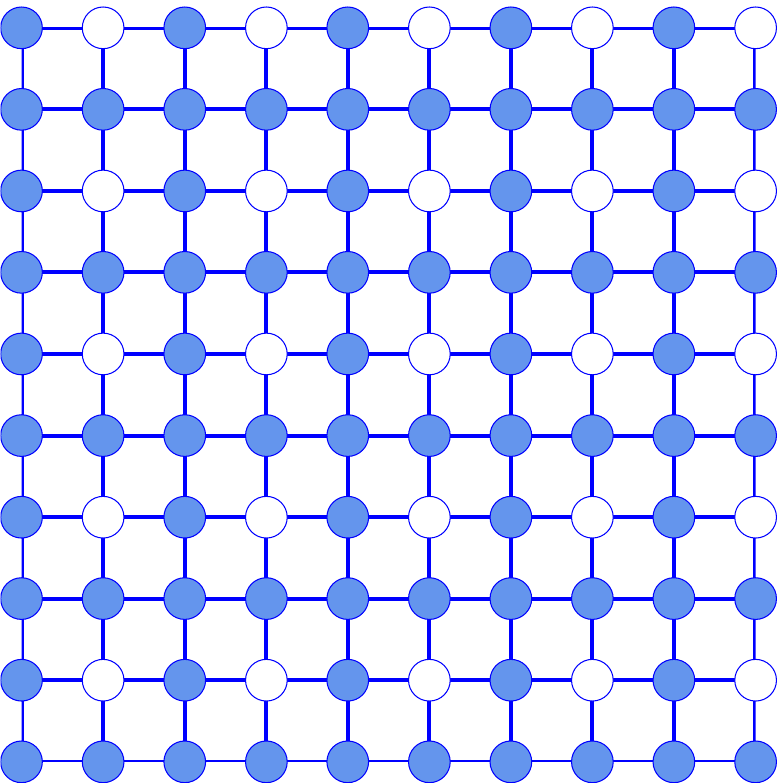}}}   \\
{\includegraphics[width= .15\textwidth]{figures_jmlr/hmm_state3_assym_LEGEND_withbound.pdf}}
\end{tabular}
}
 &%good_timecourse_1.pdf}   \\
\hspace{0em}\scalebox{1.2}{\includegraphics[width= .32\textwidth]{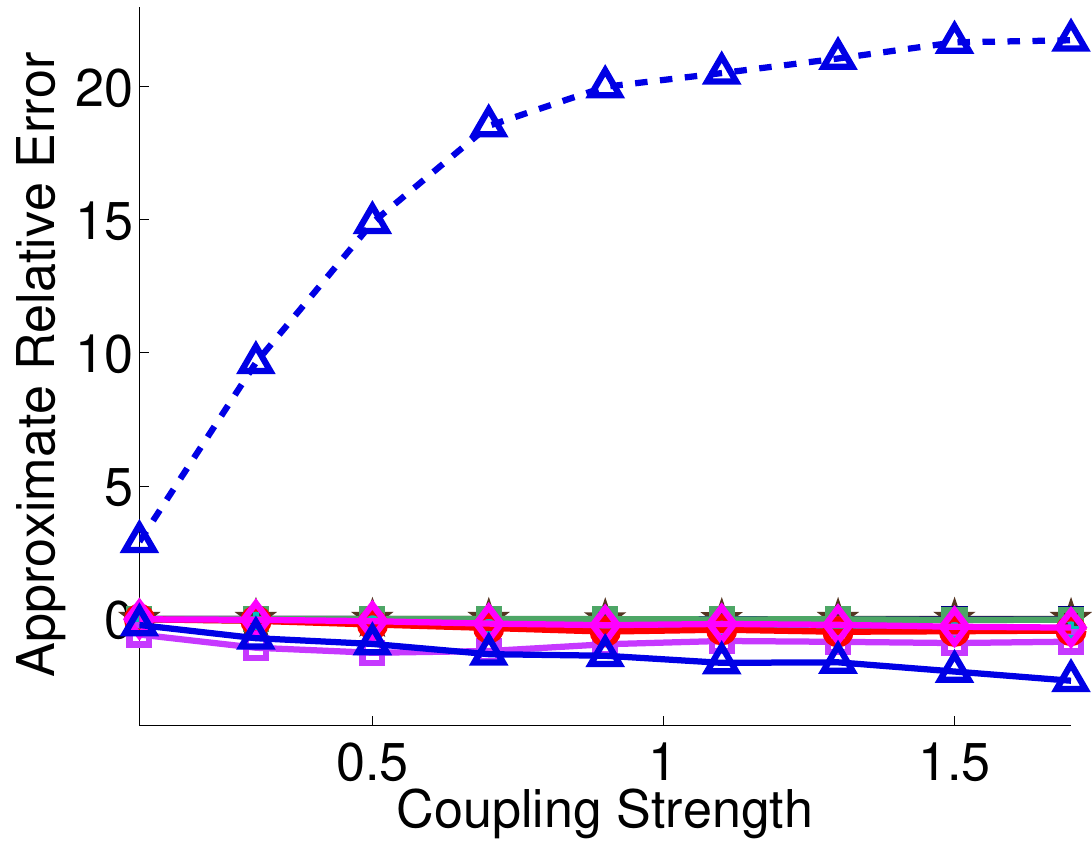} \qquad   %mixhiddenchain_correct_try2.pdf}   &
\hspace{-2.3em}\raisebox{.2em}{\begin{tikzpicture}
% \draw[gray,fill=gray,path fading=east]  (0,0) -- (0,.8) -- (.6,3) -- (.6,0) -- cycle;
\shade[left color=gray!50!white,right color=gray!50!white] (0, 0) -- (0,.5) -- (.6,2.8) -- (.6,-.4) -- cycle;
 \draw[gray!50!white]  (.6,-.4) -- (4.6,-.4) -- (4.6,2.8) -- (.6,2.8) -- cycle;
% \draw (1,0) -- (2,0) -- (2,2) -- (1,2) -- cycle;
\end{tikzpicture}}
\hspace{-10.5em}\raisebox{.8em}{\includegraphics[width= .25\textwidth]{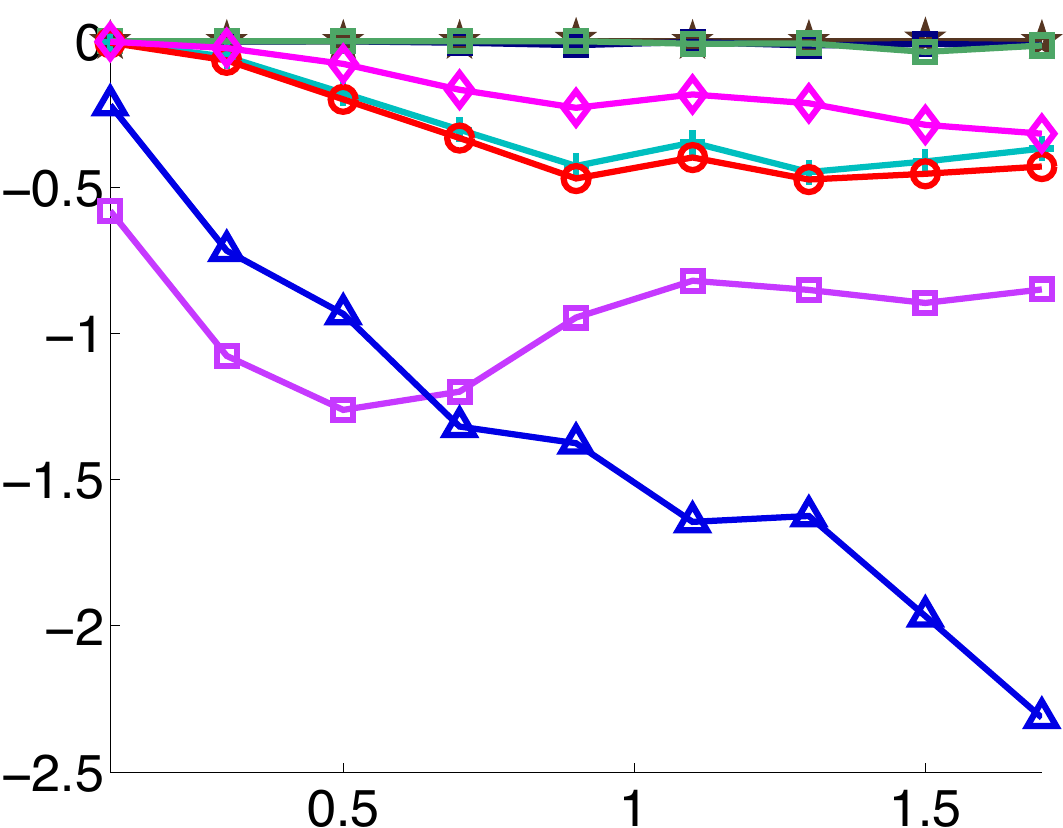}}}   \\%mixhiddenchain_err_2.pdf}   &
 {\small (a) } & {\small (b)} %& {\small (c)}
%\begin{picture}(0,0)
%\put(-179,22){\includegraphics[width= .15\textwidth]{figures_jmlr/hmm_state3_assym_LEGEND_withbound.pdf}}
%\hspace{-8em} \includegraphics[width= .18\textwidth]{figures_jmlr/hmm_state3_assym_LEGEND_withbound.pdf}  
%\end{picture}
\end{tabular}
\caption{(a) A marginal MAP problem defined on a $10\times10$ Ising grid, with shaded sum nodes
and unshaded max nodes; note that the sum part is a loopy graph, while max part is fully disconnected. (b) The approximate relative errors
of different algorithms and the upper bound obtained by Proximal (TRW) as a function of coupling strength $\sigma$.} 
\label{fig:chessboard_rev_result} 
\label{fig:chessboard_result}
\end{figure*} 

\begin{figure*}[tbh]     % \usepackage{subfigure}
\begin{tabular}{cc}
%\raisebox{2.3em}{ \includegraphics[width= .15\textwidth]{figures_jmlr/nstates3_legend}}& %mixhiddenchain_legend.pdf} } &  
%\!\!\!\!\!\!
\raisebox{5.5em}{\begin{tabular}{c}
\raisebox{0em}{\scalebox{1}{\includegraphics[width= .15\textwidth]{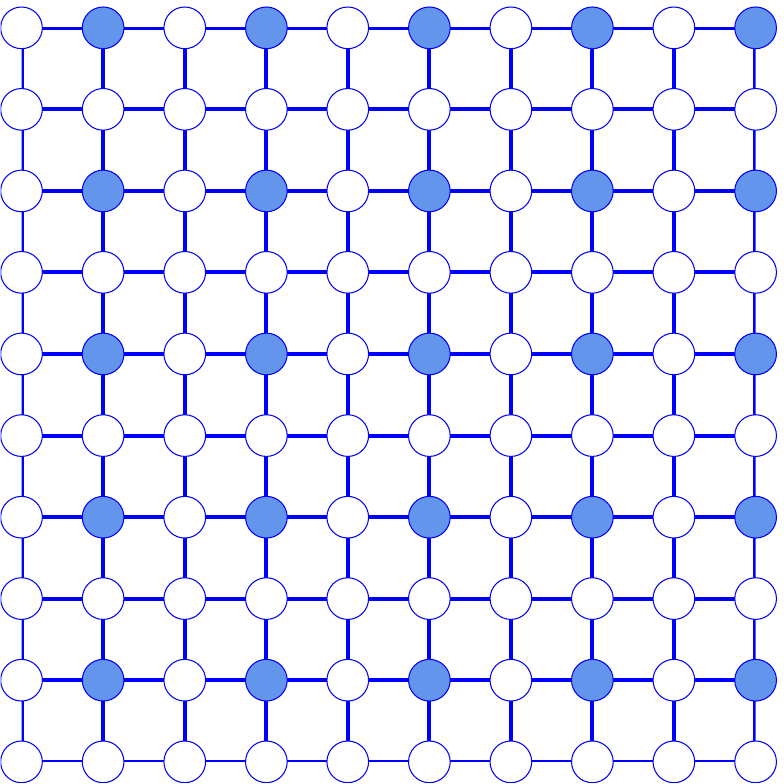}}}   \\
{\includegraphics[width= .15\textwidth]{figures_jmlr/hmm_state3_assym_LEGEND_withbound.pdf}}
\end{tabular}
}
 &%good_timecourse_1.pdf}   \\
\hspace{0em}\scalebox{1.2}{\includegraphics[width= .32\textwidth]{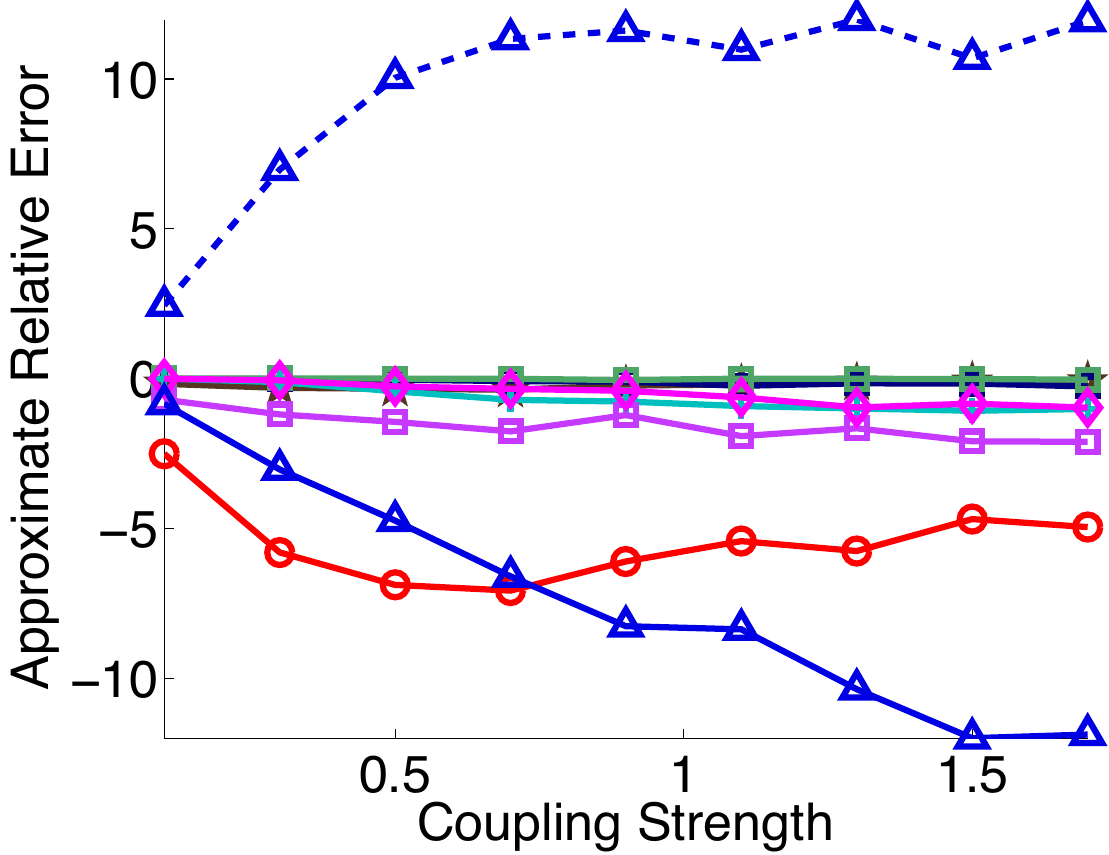} \qquad   %mixhiddenchain_correct_try2.pdf}   &
\hspace{-2.3em}\raisebox{.2em}{\begin{tikzpicture}
% \draw[gray,fill=gray,path fading=east]  (0,0) -- (0,.8) -- (.6,3) -- (.6,0) -- cycle;
\shade[left color=gray!50!white,right color=gray!50!white] (0, 1.3) -- (0,1.65) -- (.6,2.8) -- (.6,-.4) -- cycle;
 \draw[gray!50!white]  (.6,-.4) -- (4.6,-.4) -- (4.6,2.8) -- (.6,2.8) -- cycle;
% \draw (1,0) -- (2,0) -- (2,2) -- (1,2) -- cycle;
\end{tikzpicture}}
\hspace{-10.5em}\raisebox{.8em}{\includegraphics[width= .25\textwidth]{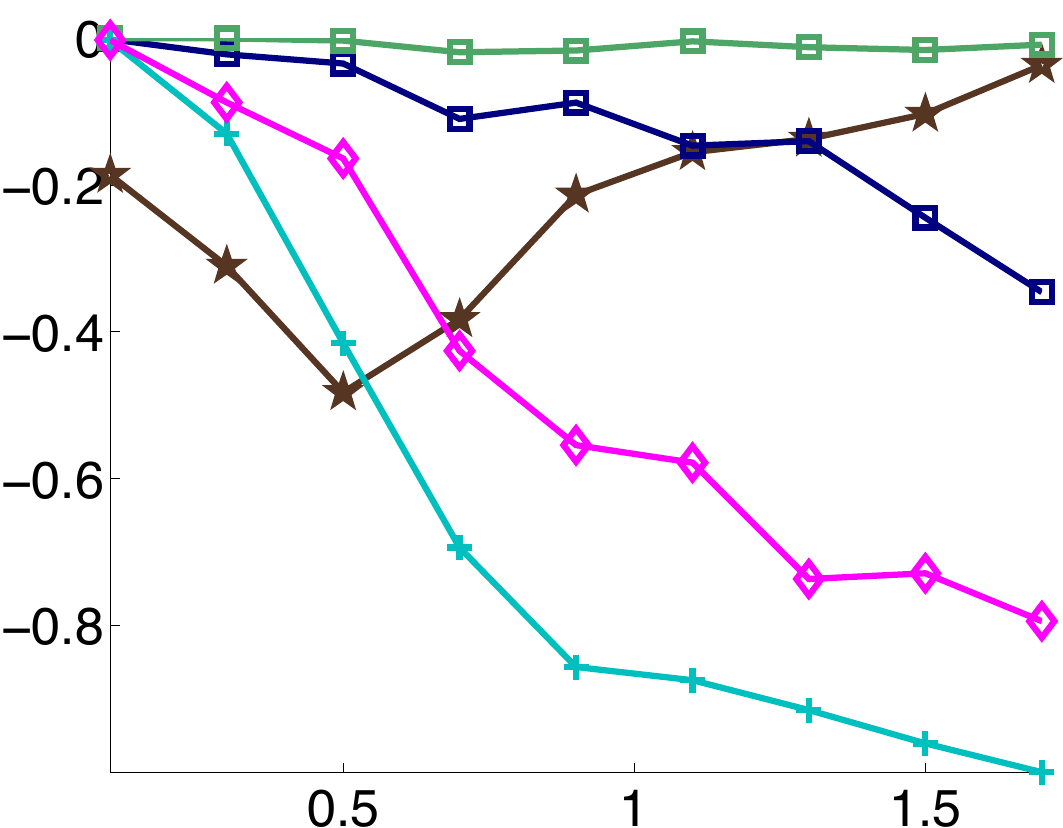}}}   \\%mixhiddenchain_err_2.pdf}   &
 {\small (a) } & {\small (b)} %& {\small (c)}
%\begin{picture}(0,0)
%\put(-179,22){\includegraphics[width= .15\textwidth]{figures_jmlr/hmm_state3_assym_LEGEND_withbound.pdf}}
%\hspace{-8em} \includegraphics[width= .18\textwidth]{figures_jmlr/hmm_state3_assym_LEGEND_withbound.pdf}  
%\end{picture}
\end{tabular}
\caption{(a) A marginal MAP problem defined on a $10\times10$ Ising grid, but with max / sum part exactly opposite to that in \figref{fig:chessboard_result}; note that the max part is loopy, while the sum part is fully disconnected in this case. (b) The approximate relative errors
of different algorithms and the upper bound obtained by Proximal (TRW) as a function of coupling strength $\sigma$.} 
\label{fig:chessboard_rev_result}
\end{figure*}

\begin{figure*}[t]     % \usepackage{subfigure}
\begin{tabular}{c}
\hspace{-1em} \raisebox{0em}{\scalebox{1}{\includegraphics[width= 1\textwidth]{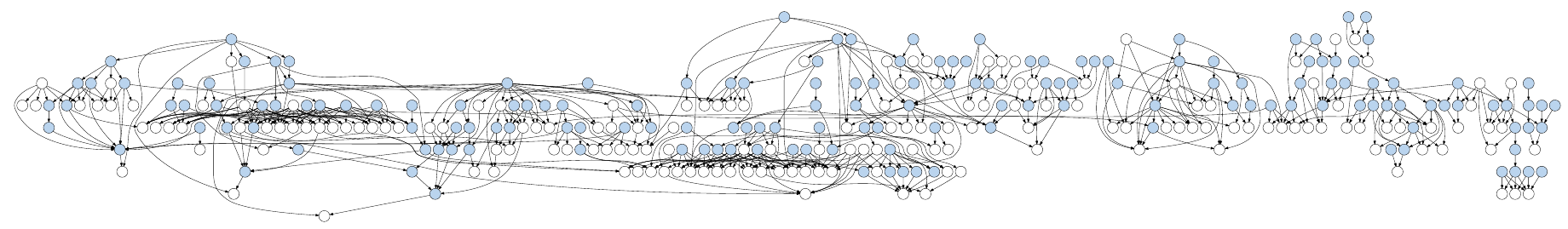}}}   \\
{\small (a) The structure of Diagnostic BN-2, with 50\% randomly selected sum nodes shaded. }
\end{tabular}
\begin{tabular}{cc}
\raisebox{0em}{\scalebox{1}{\includegraphics[width= .35\textwidth]{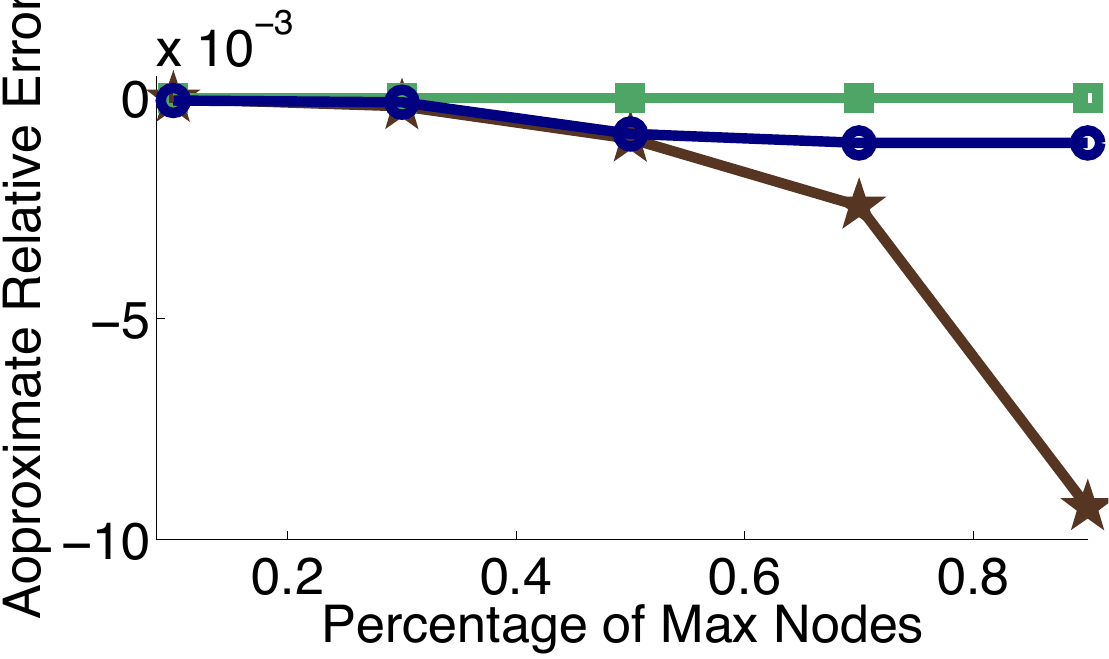}}}   &
\raisebox{0em}{\scalebox{1}{\includegraphics[width= .35\textwidth]{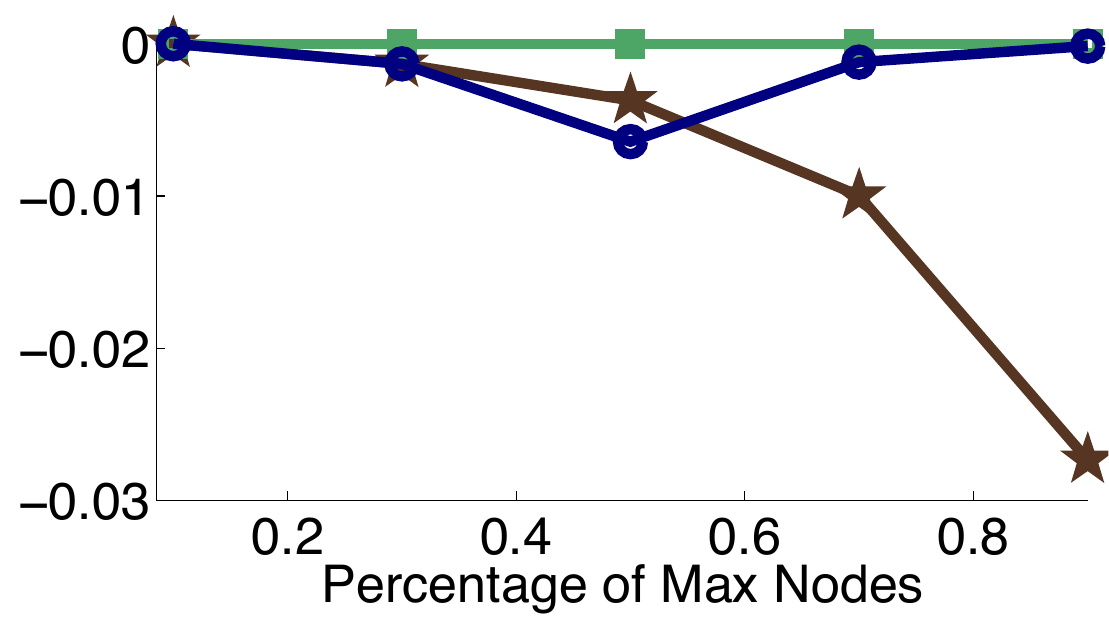}}}   \\
 {\small (b) Diagnostic BN-1 } & {\small (c) Diagnostic BN-2} %& {\small (c)}
\end{tabular}
\begin{picture}(0,0)
\put(0,20){\includegraphics[width= .2\textwidth]{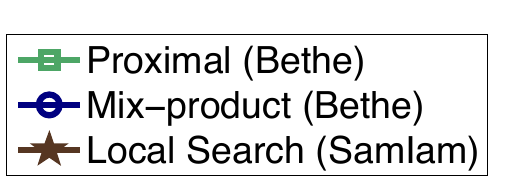}}
\end{picture}
\caption{The results on two diagnostic Bayesian networks (BNs) in the UAI08 inference challenge. (a) The Diagnostic BN-2 network.  (b)-(c) The performances of algorithms on the two BNs as a function of the percentage of max nodes. The local search method tends to degenerate when the number of max nodes is large, making it difficult to search over the solution space. 
Results are averaged over 100 random trials.} 
\label{fig:uai_result}
\end{figure*}

\section{Conclusion and Further Directions}
\label{sec:conclusion}
We have presented a general variational framework for solving marginal MAP
problems approximately, opening new doors for developing efficient algorithms. 
In particular, we show that our proposed ``mixed-product" BP admits appealing theoretical properties and performs well in practice.

Potential future directions include improving the performance of the truncated TRW approximation
by optimizing weights, deriving optimality conditions that may be applicable
even when the sum component does not form a tree, studying the convergent properties of mixed-product BP, and 
leveraging our results to learn hidden variable models for data. 
%approximations, and extending these algorithms to ``generalized'' message passing on
%higher order cliques.  
%on we propose a class of ``mixed-product" BP algorithm and a convergent alternative based on proximal point algorithm. We perform comprehensive analysis, both theoretically and experimentally. 
% Theoretically, our algorithms are justified by showing
%conditions under which the solutions are global or local optima. Our
%experiments demonstrate that our truncated Bethe approximation performs extremely
%well compared to similar approaches. 

%Future directions include improving the performance of the truncated TRW approximation
%by optimizing weights, deriving optimality conditions that may be applicable
%even when the sum component does not form  a tree, studying mean field-like
%approximations, and extending these algorithms to ``generalized'' message passing on
%higher order cliques.

\subsection*{Acknowledgments} 
We thank Arthur Choi for providing help on SamIam. 
This work was supported in part by the 
National Science Foundation (awards IIS-1065618 and IIS-1254071), 
and a Microsoft Research Ph.D Fellowship.

%\vskip 0.2in
%\bibliographystyle{natbib}{abbrvnat}%{natbib}
\bibliography{marginalMAP_jmlr} %mbe_report

\newpage
\appendix
\newcommand{\RNum}[1]{\uppercase\expandafter{\romannumeral #1\relax}}

\qiangold{
\section{Proof of Proposition~\ref{pro:generalweightedBP}} \label{app:generalweightedBP}
\begin{proof}
The Lagrangian of \eqref{equ:generalF} with the local consistency constraint of $\localpoly$ in \eqref{equ:localpolydefine} is 
$$
\langle \vtheta, \vtau \rangle + \sum_{i\in V} [ w_i H_i(\vtau)  + \lambda_i^0 \sum_{x_i} \tau_i(x_i) ] - \sum_{(ij) \in E} [ w_{ij} I_{ij}(\vtau)  + \sum_{x_j} \lambda_{i\to j} (x_j) \sum_{x_i} ( \tau_{ij}(x_i, x_j ) - \tau_j (x_j) ) ]. 
$$
where $\{\lambda_i^0 \colon i\in V \}$ and $\{ \lambda_{j\to i}(x_i) \colon (ij)\in E, x_i \in \X_i \}$ are the Lagrange multipliers.  
Recall that% $\langle \vtheta, \vtau \rangle = \sum_{i\in V} \theta_i(x_i) \tau_i(x_i) + \sum_{(ij)\in E} \theta_{ij}(x_i, x_j) \tau_{ij}(x_i, x_j)$, and 
\begin{align*}
& \langle \vtheta, \vtau \rangle = \sum_{i\in V} \theta_i(x_i) \tau_i(x_i) + \sum_{(ij)\in E} \theta_{ij}(x_i, x_j) \tau_{ij}(x_i, x_j) , \\
& H_i(\vtau) = - \sum_{x_i} \tau_i(x_i) \log \tau_i(x_i) , \\ %&&\text{and}&& \\
& I_{ij}(\vtau) = \sum_{x_i, x_j} \tau_{ij}(x_i, x_j) \log  \frac{\tau_{ij}(x_{i}, x_j)}{\sum_{x_i}\tau_{ij}(x_i, x_j)  \sum_{x_j}  \tau_{ij}(x_i, x_j) }. 
\end{align*}
Taking the derivative of the Lagrangian w.r.t. $\tau_i(x_i)$ and $\tau_{ij}(x_i, x_j)$, we have 
\begin{align}
&\theta_i(x_i)  - w_i \log \tau_i(x_i)    + \sum_{j\in \partial_i}  \lambda_{j\to i}(x_i)  = const, \label{equ:logweighted_marginals_1}  \\
&\theta_{ij}(x_i, x_j)  - w_{ij} \log \frac{\tau_{ij}(x_i, x_j)}{ \tau_i(x_i) \tau_j(x_j)}  + \lambda_{i\to j}(x_j) + \lambda_{j\to i}(x_i)  =  const,  \label{equ:logweighted_marginals_2} 
\end{align}
where we used the local consistency condition that $\sum_{x_j} \tau_{ij}(x_i, x_j) = \tau_i(x_i)$.  By defining $m_{i\to j}(x_j) = \exp(\lambda_{i\to j}(x_j))$, we obtain \eqref{equ:weighted_marginals} directly from \eqref{equ:logweighted_marginals_1}-\eqref{equ:logweighted_marginals_2}.\\
Plugging \eqref{equ:weighted_marginals} into the constraint that   $\sum_{x_j} \tau_{ij}(x_i, x_j) = \tau_i(x_i)$ gives \eqref{equ:weightedmsg}.
\end{proof}
}

\section{Proof of Theorem~\ref{thm:betheglobalopt}} 
\begin{proof} 
(i). For $\vtau \in \M^o $, the objective function in \eqref{equ:Phitree} equals
\begin{align}
F_{tree}(\vtau, \vtheta) 
& = \langle \vtheta, \vtau \rangle  \  +  \ \sum_{i\in V} \Hitau - \!\!\!\sum_{(ij)\in E_A} \!\!\! \Iijtau - \!\!\!\sum_{(ij)\in \cross} \!\!\!\! \rho_{ij} \Iijtau   \notag  \\
& = \langle \vtheta, \vtau \rangle  \  +  \ \sum_{i\in V} \Hitau - \!\!\!\sum_{(ij)\in E_A} \!\!\! \Iijtau   \label{equ:dd1}  \\
%& = \langle \vtheta, \vtau \rangle  \  +  H_{A|B}(\vtau)  - \!\!\!\sum_{(ij)\in \cross} \!\!\!\! \rho_{ij} \Iijtau  \\ 
& = \langle \vtheta, \vtau \rangle  \  +  H_{A|B}(\vtau)    \label{equ:dd2}  \\
& = F_{mix}(\vtau, \vtheta),  \notag
\end{align}
where the equality in \eqref{equ:dd1} is because $\Iijtau = 0$ if $\forall (ij)\in \partial_{AB}$, 
and the equality in \eqref{equ:dd2} is because the sum part $G_A$ is a tree and we have the tree decomposition $H_{A|B} = \sum_{i\in V} \Hitau - \sum_{(ij) \in E_A} \Iijtau$. 
%On the other hand, for $\vtau \in \M^o$, we have $\Iijtau = 0$ if $\forall (ij)\in \partial_{AB}$.  So the objective function in Eq. \eqref{equ:Phitree} equals 
Therefore we have %, we have
\begin{align}
\Phi_{tree}(\vtheta) = \max_{\vtau \in \localpoly}  F_{tree}(\vtau, \vtheta)  
\geq  \max_{\vtau \in \M^o}  F_{tree}(\vtau, \vtheta)    =   \max_{\vtau \in \M^o}  F_{mix}(\vtau, \vtheta)   = \Phi_{AB}(\vtheta),   \label{equ:dd3}
\end{align}
where the inequality is because $\M^o \subset \M \subset \localpoly $. 

If there exists $\vx_B^*$ such that $Q(\vx_B^* ; \vtheta) = \Phi_{tree}(\vtheta)$, then we have
$$
Q(\vx_B^* ; \vtheta)   =  \Phi_{tree}(\vtheta) \geq   \Phi_{AB}(\vtheta) = \max_{\vx_B} Q(\vx_B ; \vtheta).
$$ 
This proves that $\vx_B^*$ is a globally optimal marginal MAP solution. 

(ii). Because $\tau^*_i(x_i)$ for $\forall i \in B$ are deterministic, and the sum part $G_A$ is a tree,  we have that $\vtau^* \in \M^o $. Therefore the inequality in \eqref{equ:dd3} is tight, and we can conclude the proof by using Corollary~\ref{cor:nonconvex_mixduality}. 
%Note that the objective function in \eqref{equ:Phitree} equals the true free energy $F_{mix}(\vtau,\vtheta)$ if the optimization domain is restricted  on $\margpoly^o$  \todo{Not yet defined; defined later}(the subset of $\margpoly$ in which $\tau(\vx_B)$ is deterministic). This means that \eqref{equ:Phitree} is a relaxation of $\max_{\vtau\in \margpoly^o} F_{mix}$,  which was shown to be exact by Corollary~\ref{cor:nonconvex_mixduality}.    \todo{Forward reference; not shown at this point.}
%which was shown to exactly equal $\Phi_{AB}(\vtheta)$  by Corollary~\ref{cor:nonconvex_mixduality}.   A standard relaxation argument completes the proof. \textcolor{green}{(move to appendix?)}
\end{proof}

\section{Proof of Theorem~\ref{thm:localopt}} 
\begin{proof}
\newcommand{\crossA}{{\partial_A}}
By Theorem~\ref{thm:reparameter}, the beliefs $\{b_i, b_{ij}\}$ should satisfy the reparameterization property in \eqref{equ:repara} and the consistency conditions in \eqref{equ:sum_consistency}-\eqref{equ:mix_consistency}. 
Without loss of generality, we assume $\{b_i, b_{ij}\}$ are normalized such that $\sum_{x_i} b_i(x_i) = 1$ for $i \in A$ and $\max_{x_i} b_i(x_i) = 1$ for $i \in B$.

\RNum{1})  For simplicity, we first prove the case of $C = B$, when $G = G_{C\cup A}$ itself is a semi $A$-$B$ tree, and the theorem implies that $\vx_B^*$ is a global optimum. By the reparameterization condition, we have % in \eqref{equ:repara}, we have
\begin{align}
p(\vx ) =  \hat{p}_B(\vx_B) \hat{p}_{A|B}(\vx), 
\label{equ:pabab}
\end{align}
where 
\begin{align}
&\hat{p}_B(\vx_B) = \prod_{i\in B} b_i (x_i)  \prod_{(ij) \in E_B} \bigg [ \frac{b_{ij}(x_i, x_j)}{b_i(x_i) b_j(x_j)} \bigg ]^{\rho_{ij}},  \label{equ:tmp1}\\ 
&\hat{p}_{A|B}(\vx) = \prod_{i\in A} b_i (x_i) \prod_{(ij) \in E_A }  \bigg [ \frac{b_{ij}(x_i, x_j)}{b_i(x_i) b_j(x_j)} \bigg ]^{\rho_{ij}}  \prod_{ (ij)\in \cross} \bigg [ \frac{b_{ij}(x_i, x_j)}{b_i(x_i) b_j(x_j)} \bigg ] ^{\rho_{ij}} .
\label{equ:tmp2}
\end{align}
Note we have 
$$
p(\vx_B) = \sum_{\vx_A} p(\vx) = \sum_{\vx_A} \hat{p}_{B}(\vx_B)  \hat{p}_{A|B}(\vx) = \hat{p}_B(\vx_B) \sum_{\vx_A} \hat{p}_{A|B} (\vx).  
$$
We just need to show that $\vx_B^*$ maximizes $\hat{p}_B(\vx_B)$ and $ \sum_{\vx_A} \hat{p}_{A|B} (\vx) $, respectively. 

First, since $\hat{p}_B(\vx_B)$ involves only the max nodes, a standard MAP analysis applies. Because the max part of the beliefs,  $\{b_i, b_{ij} \colon (ij)\in E_B \}$, satisfy the standard max-consistency conditions, and the corresponding TRW weights $\{\rho_{ij} \colon (ij) \in E_B\}$ are provably convex by assumption, we establish that $\vx_B^*$ is the MAP solution of $\hat{p}_B(\vx_B)$ by Theorem 1 of  \citet{Weiss07}. 

Secondly, to show that $\vx_B^*$ also maximizes $\hat{p}_{A|B} (\vx)$ requires the combination of the mixed-consistency and sum-consistency conditions. 
\newcommand{\pai}{{\mathrm{\pi}_i}}
Since $G$ is a semi $A$-$B$ tree, we denote by $\pai$ the unique parent node of $i$ ($\pai  = \emptyset$ if $i$ is a root). 
In addition, let $\crossA$ be the subset of $A$ whose parent nodes are in $B$, that is, $\crossA = \{i \in A \colon \pai \in B\}$. Equation \eqref{equ:tmp2} can be reformed into
\begin{align}
\hat{p}_{A|B}(\vx)=\prod_{i \in A \setminus  \crossA} \frac{b_{i, \pai}(x_i, x_{\pai})}{b_{\pai}(x_{\pai})}  \prod_{i \in \crossA}  \bigg [ \frac{b_{i, \pai}(x_i, x_{\pai})}{b_{\pai}(x_{\pai})} \bigg ]^{\rho_{i, \pai}}  \bigg [ b_i(x_i) \bigg ]^{1 - \rho_{i,\pai}}, 
\end{align}
where we used the fact that $\rho_{ij} = 1$ for $(ij) \in E_A$. 
Therefore, we have for any $\vx_B \in \X_B$, 
\begin{align}
\sum_{\vx_A} \hat{p}_{A|B}( \vx)  & = \sum_{\vx_A}  \bigg \{ \prod_{i \in A \setminus  \crossA} \frac{b_{i, \pai}(x_i, x_{\pai})}{b_{\pai}(x_{\pai})}  \prod_{i \in \crossA}  \bigg [ \frac{b_{i, \pai}(x_i, x_{\pai})}{b_{\pai}(x_{\pai})} \bigg ]^{\rho_{i, \pai}}  \bigg [ b_i(x_i) \bigg ]^{1 - \rho_{i,\pai}}  \bigg \}  \notag \\
%& =  \sum_{x_{\crossA}} \prod_{i \in \crossA}    \bigg [ \frac{b_{i, \pai}(x_i, x_{\pai})}{b_{\pai}(x_{\pai})} \bigg ]^{\rho_{i, \pai}}  \bigg [ b_i(x_i) \bigg ]^{1 - \rho_{i,\pai}}    \\ 
& = \prod_{i \in \crossA}    \sum_{x_i}   \bigg [ \frac{b_{i, \pai}(x_i, x_{\pai})}{b_{\pai}(x_{\pai})} \bigg ]^{\rho_{i, \pai}}  \bigg [ b_i(x_i) \bigg ]^{1 - \rho_{i,\pai}}    \label{equ:elimABtree_equ} \\ 
&  \leq  \prod_{i \in \crossA}    \bigg [ \sum_{x_i}  \frac{b_{i, \pai}(x_i, x_{\pai})}{b_{\pai}(x_{\pai})} \bigg ]^{\rho_{i, \pai}}   \bigg [ \sum_{x_i} b_i(x_i) \bigg ]^{1 - \rho_{i,\pai}}  \label{equ:elimABtree_holder} \\
&   = 1 ,  \label{equ:elimABtree_one}
\end{align} 
where the equality in \eqref{equ:elimABtree_equ} eliminates (by summation) all the interior nodes in $A$. The inequality in \eqref{equ:elimABtree_holder} follows from H\"{o}lder's inequality. Finally, the equality in \eqref{equ:elimABtree_one} holds because all the sum part of beliefs $\{b_i, b_{ij} \colon (ij) \in E_A\}$ satisfies the sum-consistency \eqref{equ:sum_consistency}. 
  
On the other hand, for any $(i, \pi_i) \in \cross$, because $x_{\pai}^* = \argmax_{x_\pai} b_{\pai}(x_{\pai})$, we have $b_{i, \pai}(x_i, x_{\pai}^*)  = b_i(x_i)$ by the mixed-consistency condition \eqref{equ:mix_consistency}. Therefore,
  \begin{align}
  \sum_{\vx_A} \hat{p}_{A|B}( [\vx_A, \vx_B^*]) 
& = \prod_{i \in \crossA}    \sum_{x_i}   \bigg [ \frac{b_{i, \pai}(x_i, x_{\pai}^*)}{b_{\pai}(x_{\pai}^*)} \bigg ]^{\rho_{i, \pai}}  \bigg [ b_i(x_i) \bigg ]^{1 - \rho_{i,\pai}}    \\ 
& = \prod_{i \in \crossA}    \bigg [ \frac{1}{ b_{\pai}(x_{\pai}^*)} \bigg ] ^{\rho_{i, \pai}}  \sum_{x_i}    {b_{i}(x_i)} \\
& = 1 . \label{equ:tmp_proof_2}
\end{align}
Combining \eqref{equ:elimABtree_one} and \eqref{equ:tmp_proof_2}, we have 
$\sum_{\vx_A} \hat{p}_{A|B}(\vx) \leq \sum_{\vx_A} \hat{p}_{A|B}([\vx_A, \vx_B^*]) = 1$ for any $\vx_B \in \X_B$, that is, $\vx_B^*$ maximizes $\sum_{\vx_A} \hat{p}_{A|B}(\vx)$. This finishes the proof for the case $C = B$.  

\vspace{1em}

\RNum{2}) In the case of $C \neq B$, let $D = B \setminus C$. We decompose $p(\vx)$ into
$$
p(\vx) = \hat{p}_B([\vx_C, \vx_D]) \hat{p}_{A|C}([\vx_A, \vx_C]) \hat{r}_{AD}([\vx_A, \vx_D])
$$
where $\hat{p}_B(\vx_B)$ and $\hat{p}_{A|B}(\vx)$ are defined similarly to \eqref{equ:tmp1} and \eqref{equ:tmp2},
\begin{align}
&\hat{p}_B(\vx_B) = \prod_{i\in B} b_i (x_i)  \prod_{(ij) \in E_B} \bigg [ \frac{b_{ij}(x_i, x_j)}{b_i(x_i) b_j(x_j)} \bigg ]^{\rho_{ij}},  \label{equ:tmp1b}\\ 
&\hat{p}_{A|C}([\vx_A, \vx_C]) = \prod_{i\in A} b_i (x_i) \prod_{(ij) \in E_A }  \bigg [ \frac{b_{ij}(x_i, x_j)}{b_i(x_i) b_j(x_j)} \bigg ]^{\rho_{ij}}  \prod_{(ij)\in \partial_{AC}} \bigg [ \frac{b_{ij}(x_i, x_j)}{b_i(x_i) b_j(x_j)} \bigg ] ^{\rho_{ij}} , 
\label{equ:tmp2b}
\end{align}
where $\pai$ is the parent node of $i$ in the semi $A$-$B$ tree $G_{A\cup C}$ and $\partial_{AC}$ is set of edges across $A$ and $C$, that is, $\partial_{AC} = \{ (ij) \in E \colon i \in A , j \in C \}$. 
The term $\hat{r}_{AD}(\vx)$ is defined as
\begin{align}
&\hat{r}_{AD}([\vx_A, \vx_D]) =
\prod_{(ij)\in \partial_{AD}} \bigg [ \frac{b_{ij}(x_i, x_j)}{b_i(x_i) b_j(x_j)} \bigg ] ^{\rho_{ij}}  , 
\label{equ:tmp3b}
\end{align}
where similarly $\partial_{AD}$ is the set of edges across $A$ and $D$. %, that is, $\partial_{AD} = \{ (ij) \in E \colon i \in A , j \in D \}$. 

Because $x_j^* = \argmax_{x_j} b_j(x_j)$ for $j \in D$, we have $b_{ij}(x_i,x_j^*) = b_{i}(x_i)$ for $(ij)\in \partial_{AD}$, $j\in D$ by the mixed-consistency condition in \eqref{equ:mix_consistency}. Therefore, one can show that $\hat{r}_{AD}([\vx_A, \vx_D^*]) = 1$, and hence
\begin{equation*}
p([\vx_A, \vx_C, \vx_D^*]) = \hat{p}_B([\vx_C, \vx_D^*]) \hat{p}_{A|C}([\vx_A, \vx_C]) . 
\end{equation*}
The remainder of the proof is similar to that for the case $C = B$: by the analysis in \citet{Weiss07}, it follows that $\vx_C^*\in  \argmax_{\vx_C} p([\vx_C, \vx_D^*])$, and we have previously shown that $\vx_C^*  \in \argmax_{\vx_C} \sum_{\vx_A}\hat{p}_{A|C}([\vx_A, \vx_C])$. This establishes that $x_C^*$ maximizes
$$ \sum_{\vx_A}p([\vx_A, \vx_C, \vx_D^*])  =  p([\vx_C, \vx_D^*])  \sum_{\vx_A}\hat{p}_{A|C}([\vx_A, \vx_C]) , $$% = \sum_{\vx_A} p(\vx),$$
%$$\vx_C^* \in \argmax_{\vx_C} \big \{ p([\vx_C, \vx_D^*])  \sum_{\vx_A}\hat{p}_{A|C}([\vx_A, \vx_C]) \big \} = \sum_{\vx_A} p(\vx),$$
 which concludes the proof.  
\end{proof}

\end{document}